\documentclass[preprint,12pt]{elsarticle}

\usepackage[utf8]{inputenc}
\usepackage[T1]{fontenc}

\biboptions{numbers,sort&compress}
\usepackage{amsmath}
\usepackage{amssymb}
\usepackage{amsfonts}
\usepackage{amsthm}
\usepackage{graphicx}
\usepackage{booktabs}
\usepackage{nicefrac}
\usepackage{xcolor}
\usepackage{mathtools}
\usepackage{algorithm}
\usepackage{algorithmic}
\usepackage{url}
\usepackage[hidelinks]{hyperref}

\usepackage{tabularx}
\usepackage{array}
\usepackage{ragged2e}

\newtheorem{theorem}{Theorem}
\newtheorem{proposition}[theorem]{Proposition}
\newtheorem{lemma}[theorem]{Lemma}
\newtheorem{corollary}[theorem]{Corollary}

\theoremstyle{remark}
\newtheorem{remark}[theorem]{Remark}
\theoremstyle{plain}
\newtheorem{assumption}[theorem]{Assumption}

\newcommand{\R}{\mathbb{R}}
\newcommand{\E}{\mathbb{E}}
\newcommand{\I}{\mathcal{I}}

\newcommand{\Hcal}{\mathcal{H}}
\newcommand{\Mcal}{\mathcal{M}}

\newcommand{\Scal}{\mathcal{S}}

\newcommand{\Pbb}{\mathbb{P}}
\newcommand{\bone}{\mathbf{1}}
\newcommand{\bx}{\mathbf{x}}
\newcommand{\by}{\mathbf{y}}

\newcommand{\bu}{\mathbf{u}}
\newcommand{\bv}{\mathbf{v}}
\newcommand{\Var}{\mathrm{Var}}
\newcommand{\Tr}{\mathrm{Tr}}

\newcommand{\opnorm}[1]{\left\|#1\right\|_{\mathrm{op}}}

\newif\ifsupplement

\journal{Artificial Intelligence}

\begin{document}

\begin{frontmatter}

\title{Learning Generated Controls under Fractured Geometry:
Projective Residualization and Variation-Allocation Frontiers}

\author[ustc]{Rui Wu\fnref{equal}}
\ead{wurui22@mail.ustc.edu.cn}
\author[ustc]{Zongyuan Chen\fnref{equal}}
\ead{chenzongyuan@mail.ustc.edu.cn}
\author[ustc]{Hong Xie\corref{corresponding}}
\ead{hongx87@ustc.edu.cn}
\author[ustc]{Defu Lian}
\ead{liandefu@ustc.edu.cn}
\author[ustc]{Enhong Chen}
\ead{cheneh@ustc.edu.cn}

\affiliation[ustc]{organization={School of Computer Science and Engineering, University of Science and Technology of China},
  city={Hefei},
  postcode={230027},
  country={China}}
\fntext[equal]{Rui Wu and Zongyuan Chen contributed equally.}
\cortext[corresponding]{Corresponding author.}

\begin{abstract}%
Many two-stage estimators assess the first-stage learner by prediction error,
even when the next stage uses its residual. In control-function instrumental
variables, that residual must preserve the latent control direction without
removing the treatment variation that identifies the structural response.
A scalar prediction score does not reveal how the learner allocates this
variation. Under piecewise-smooth graph geometry, interpolation can suppress the control,
whereas isotropic smoothing can leak systematic variation across boundaries.
We formulate this as a variation-allocation problem and introduce Adaptive
Anisotropic Instrumental Heat Flow (A-IHF). The method uses pilot treatment contrasts to adapt edge
conductance, takes the complement of a sparse graph resolvent as the generated
control, and selects candidates without consulting outcomes. For a linear
control-function regression, the generated control is identified only by its
span. Working in that projective geometry, we derive an exact finite-sample
fidelity--relevance frontier, spectral identities for remaining treatment
variation and coefficient distortion, and a lower bound for monotone
fixed-graph residual filters. A connected construction proves that adapting
conductance can remove the corresponding fixed-graph obstruction. In a
54-cell benchmark, the A-IHF family wins 32 cells; its guarded observational
variant lowers mean nonlinear response error by 8.3\%, with the largest gains
in fractured designs. Controlled rewiring explains when
the graph should be used, replaced by a fallback, or rejected. The resulting
lesson is task-specific: a first stage for generated controls should be judged
by control fidelity, downstream relevance, and graph compatibility together.
\end{abstract}

\begin{keyword}
graph signal processing, anisotropic diffusion, generated controls, instrumental variables, causal machine learning
\end{keyword}

\end{frontmatter}

\supplementfalse
\ifsupplement\else

\section{Introduction}
\label{sec:intro}

Many learning systems are built in stages. A fitted value, residual, or learned
representation produced upstream becomes an input to a later model. The two
stages are then coupled: a first-stage predictor may be accurate for its own
target yet discard variation that the downstream task needs. Control-function
instrumental variables (IV) make this issue particularly transparent. The
first stage estimates $g(Z)$ and passes the generated control
$V^\star=X-g(Z)$ to an outcome equation
\citep{rivers1988limited,newey1999nonparametric,blundell2003endogeneity,puli2020general}.
Treatment $X$ is therefore both the signal decomposed in the first stage and
the regressor whose variation, after conditioning on the generated control,
identifies the structural response
\citep{imbens1994identification,angrist1996identification,angrist2009mostly,wooldridge2010econometric}.

This coupling is easy to describe but easy to score incorrectly. On held-out
observations, prediction error and residual-recovery error are algebraically
the same. For any estimate $\hat g$,
\begin{equation}
    \hat V-V^\star
    =
    \{X-\hat g(Z)\}-\{X-g(Z)\}
    =
    g(Z)-\hat g(Z).
    \label{eq:prediction_residual_identity}
\end{equation}
Thus held-out prediction accuracy remains useful. What it does not report is
where the error occurs or whether the remaining treatment variation is usable
after the generated control has been partialled out. Same-sample
interpolation can absorb almost the entire residual and drive $\hat V$ toward
zero; cross-fitting mitigates that failure for inductive learners. Heavy
smoothing creates the opposite problem: $\hat V$ approaches $X$, so little
treatment variation remains. Between these endpoints lies a graph-specific
failure. If $g$ is smooth within regions but discontinuous between them, an
isotropic smoother can move systematic variation across a small set of
boundary edges while retaining a respectable average prediction score. The
resulting local contamination and the loss of downstream relevance are
different errors, and neither is summarized adequately by a single global
risk.

We ask whether a graph first stage can remove the locally smooth part of
treatment while preserving both the latent control direction and the
treatment variation needed by the outcome model. The problem has three
parts. The region boundaries are not observed; control fidelity and treatment
relevance need not improve together; and Euclidean residual error changes
under a rescaling that leaves a linear control-function coefficient exactly
unchanged. We refer to this as \emph{variation allocation under fractured
geometry}. Interpolation and cross-boundary smoothing fail for different
reasons, but each assigns variation to the wrong side of the decomposition.
Proposition \ref{prop:ntk} formalizes the interpolation endpoint. Sections
\ref{sec:theory} and \ref{sec:asymptotic_theory} study the remaining two
questions: what a graph residual preserves and when that preservation is
enough for the second stage.

\begin{figure}[t]
\centering
\includegraphics[width=0.94\linewidth]{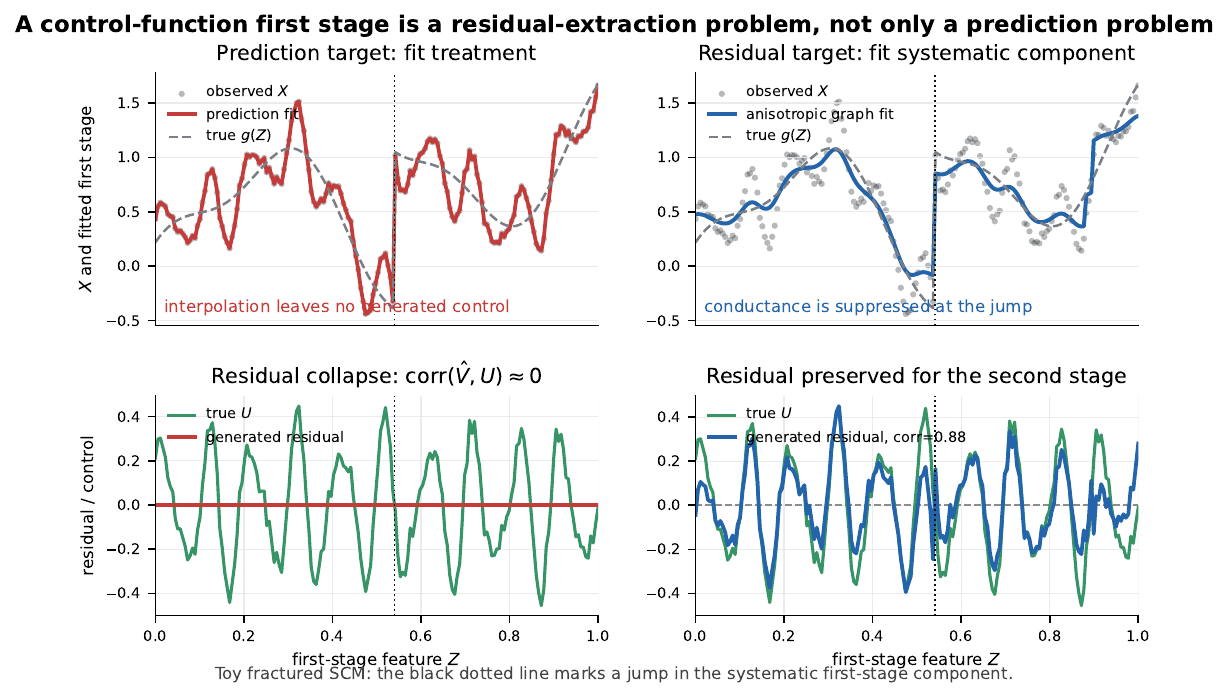}
\caption{\textbf{Variation allocation under fractured geometry.} Same-sample interpolation over-absorbs the generated residual, whereas isotropic smoothing reallocates the systematic component across a jump. A boundary-adaptive graph resolvent suppresses cross-boundary conductance while preserving a nondegenerate residual and downstream treatment variation. The figure is illustrative; Proposition \ref{prop:ntk} treats the interpolation endpoint and Section \ref{sec:theory} gives the graph-to-control-function certificate.}
\label{fig:toy_residual_extraction}
\end{figure}

Adaptive Anisotropic Instrumental Heat Flow (A-IHF) is designed around that
decomposition. It treats treatment as a signal on a graph of first-stage
features. A pilot resolvent estimates edgewise treatment contrasts; these
contrasts reduce conductance on edges for which pooling is poorly supported.
A second resolvent assigns the within-region component to the fitted first
stage, and its complement becomes the generated control. Outcomes are not
used in this construction. Candidate filters are rejected when they leave too
little residualized treatment variation, shatter the graph, or preserve local
treatment contrasts no better than unrelated pairs. The remaining candidates
are ranked by an observational score. Once the candidate path is fixed, the
algorithm consists of sparse symmetric linear solves.

The individual ingredients are not presented as inventions in isolation.
Anisotropic diffusion has long adapted conductance to observed contrasts
\citep{perona1990scale,weickert1998anisotropic}; graph trend filtering uses a
nonsmooth penalty to preserve jumps \citep{wang2015trend}; and first-stage
cross-fitting protects against interpolation. Those methods optimize a
reconstructed signal or an orthogonal score. A control function requires a
different object: the complement of the reconstructed signal becomes a
regressor, and the treatment must retain variation after that regressor is
removed. A-IHF's methodological addition is the coupling of boundary-adaptive
graph smoothing, complementary residual extraction, downstream relevance,
and an observable decision to use, replace, or reject the graph.

The analysis concerns additive control-function designs in which the
systematic first-stage component is piecewise smooth on an informative feature
graph. The setting is favorable when neighborhoods are reliable within
regions, boundary edges are sparse, the endogenous residual is not dominated
by low graph frequencies, and treatment retains variation after conditioning
on the generated control. A latent low-dimensional geometry is one way to
obtain such a graph, but it is not required by the algorithm. Within this
scope, the paper makes three claims, each tied to a result that can be checked.

\begin{itemize}
    \item \textbf{A generated-control operator with an observable use
    condition.} Existing graph denoisers decide how to reconstruct a signal;
    they do not determine whether the complementary residual is a valid input
    to a control-function regression. A-IHF adds that missing decision. Its
    selector enforces treatment relevance and tests whether local graph edges
    carry smaller treatment contrasts than unrelated pairs. The rewiring
    experiment makes this claim falsifiable: as graph compatibility is
    destroyed, the procedure must move from graph use to fallback and then to
    abstention.
    \item \textbf{Exact finite-sample geometry rather than a generic theory
    claim.} We do not claim that finite-sample control-function analysis is
    itself new. The new result is a projective characterization tailored to a
    generated residual. It gives an exact fidelity--relevance frontier,
    graph-spectral identities for relevance and coefficient distortion, and
    an isotonic-projection lower bound for monotone fixed-graph residual
    filters. A connected three-node construction supplies the complementary
    possibility result: treatment-dependent conductance can eliminate a
    zero-relevance obstruction that every shift-invariant filter on the
    original graph shares.
    \item \textbf{An empirical test of the proposed mechanism, including its
    reversals.} A matched 54-cell study measures projective fidelity,
    relevance, coefficient distortion, and frontier slack on the same runs as
    the downstream benchmark. The A-IHF family wins 32 cells, and its guarded
    observational variant lowers mean nonlinear response error by 8.3\%, with
    the clearest gains in fractured designs. The negative controls are equally
    informative: smooth and weak-instrument designs favor other learners, and
    graph spectral smoothing has lower aggregate linear coefficient error.
    Exact distortion correlates $0.998$ with realized linear coefficient
    error, while a separate scale intervention distinguishes the exact linear
    invariance from the scale sensitivity of finite neural training. The rate
    result retains the usual within-region graph-smoothing rate and states the
    unresolved adaptive-conductance term explicitly.
\end{itemize}

\section{Related Work}
\label{sec:related}

\paragraph{Control functions and nonparametric IV}
Control-function methods express endogeneity through a generated residual
from a first-stage equation. This construction appears in limited-information
models \citep{rivers1988limited}, nonparametric triangular systems
\citep{newey1999nonparametric,blundell2003endogeneity}, nonlinear
causal-effect models \citep{guo2016control}, and learned general control
functions \citep{puli2020general}. Nonparametric IV instead recovers a
structural function from conditional moments, typically through an ill-posed
inverse problem
\citep{newey2003instrumental,hall2005nonparametric,darolles2011nonparametric}.
Recent finite-sample results for Hausman- and Wald-type exogeneity tests
address a different question \citep{hahn2024finite}. We study recovery of the
generated residual when the first-stage signal has graph-aligned
discontinuities.

\paragraph{Machine learning IV and orthogonal estimation}
Modern IV estimators use kernels, neural networks, adversarial objectives, and
minimax conditional moments
\citep{hartford2017deep,singh2019kernel,muandet2020dual,bennett2019deep,dikkala2020minimax}.
Orthogonal procedures limit first-order sensitivity to nuisance estimation
through sample splitting and locally insensitive scores
\citep{chernozhukov2018double,syrgkanis2019machine}. A-IHF supplies a
structured generated control; it can be paired with an orthogonal second
stage under the conditions in Remark \ref{rem:orthogonal_inference}. The
nonlinear regressions used in our benchmark are point estimators, not an
inferential claim.

\paragraph{Graph regularization and diffusion}
The construction draws on manifold regularization, diffusion maps, and graph
signal processing
\citep{belkin2003laplacian,belkin2006manifold,coifman2006diffusion,vonluxburg2007tutorial,shuman2013emerging}.
Laplacian regression has minimax guarantees on suitable neighborhood graphs
\citep{green2021minimax}; graph trend filtering preserves piecewise structure
through an $\ell_1$ graph-difference penalty \citep{wang2015trend}; and graph
filters admit explicit stability bounds \citep{kenlay2021interpretable}.
Local graph operators also have established geometric limits
\citep{hein2005graphs,belkin2007convergence,vonluxburg2008consistency,ting2010analysis,garciatrillos2018variational}.
Our downstream target changes the design criterion: the smoother estimates
$g$, its complement becomes a regressor, and the pair must preserve treatment
relevance.

\begingroup
\setlength{\emergencystretch}{2em}
\paragraph{Regression geometry and graph-filter expressivity}
Partial-regression and partial-$R^2$ identities give a geometric description
of coefficient sensitivity \citep{cinelli2020making}. Graph uncertainty
principles describe vertex--spectral localization \citep{agaskar2013spectral},
while repeated Laplacian eigenvalues restrict polynomial spectral models
\citep{lu2024improving}. Weighted isotonic projection is classical
\citep{ayer1955empirical,barlow1972statistical}. Here these tools expose a
projective fidelity--relevance frontier and a fixed-graph restriction that
changes when conductance depends on treatment.
\par
\endgroup

\paragraph{Positioning}
Table \ref{tab:positioning} distinguishes four questions that are easily
conflated: how a method allocates variation, how it avoids empirical residual
collapse, whether it models unknown graph boundaries, and whether it connects
first-stage error to a downstream estimand. The comparison is not a ranking.
Cross-fitting addresses interpolation, graph trend filtering preserves
boundaries, and orthogonal IV offers stronger inference when its nuisance
conditions hold. None of these results implies the generated-control rule
studied here. The open combination is more specific: the complement of a
boundary-adaptive graph smoother must recover a control direction, preserve
treatment relevance, and expose an observable condition under which the graph
should not be used.

\begin{table}[!htbp]
\centering
\caption{\textbf{Positioning by the variation-allocation problem.} ``Boundary handling'' refers to unknown discontinuities of the first-stage systematic component on a feature graph. The table distinguishes methodological objectives rather than claiming uniform dominance.}
\label{tab:positioning}
\begingroup
\small
\setlength{\tabcolsep}{4.5pt}
\renewcommand{\arraystretch}{1.05}

\begin{tabularx}{\textwidth}{
@{}
>{\RaggedRight\arraybackslash}p{0.24\textwidth}
>{\RaggedRight\arraybackslash}X
>{\RaggedRight\arraybackslash}X
@{}
}
\toprule
\multicolumn{3}{@{}l}{\textit{Panel A: Variation-allocation mechanism and residual-collapse protection}}\\
\addlinespace[2pt]
\textbf{Method family}
& \textbf{Variation-allocation mechanism}
& \textbf{Residual-collapse protection}\\
\midrule

Classical/non-parametric CF
\citep{newey1999nonparametric,blundell2003endogeneity}
& Estimate $g(Z)$; retain $X-\hat g(Z)$
& Regularization within the assumed first-stage class\\

General control functions
\citep{puli2020general}
& Learn a control representation jointly
& Variational regularization and held-out training\\

Orthogonal ML-IV
\citep{chernozhukov2018double,syrgkanis2019machine}
& Allocate variation through score-specific nuisances
& Cross-fitting and orthogonal scores\\

Graph Laplacian regression
\citep{green2021minimax}
& Allocate low graph frequencies to the fitted signal
& Quadratic regularization\\

Graph trend filtering
\citep{wang2015trend}
& Allocate variation through sparse graph differences
& $\ell_1$ graph-difference regularization\\

\midrule
\textbf{A-IHF (this work)}
& \textbf{Within-region smoothing plus complementary residual}
& \textbf{Finite path, relevance floor, graph-compatibility guardrails}\\
\bottomrule
\end{tabularx}

\vspace{0.45em}

\begin{tabularx}{\textwidth}{
@{}
>{\RaggedRight\arraybackslash}p{0.24\textwidth}
>{\RaggedRight\arraybackslash}X
>{\RaggedRight\arraybackslash}X
@{}
}
\toprule
\multicolumn{3}{@{}l}{\textit{Panel B: Boundary handling and downstream guarantee}}\\
\addlinespace[2pt]
\textbf{Method family}
& \textbf{Boundary handling}
& \textbf{Downstream guarantee}\\
\midrule

Classical/non-parametric CF
\citep{newey1999nonparametric,blundell2003endogeneity}
& Kernel/series smoothness is usually global
& CF identification, consistency, and inference\\

General control functions
\citep{puli2020general}
& Implicit in the neural representation
& Identification and representation objectives\\

Orthogonal ML-IV
\citep{chernozhukov2018double,syrgkanis2019machine}
& Inherited from the nuisance learner
& Root-$n$ inference under product-rate conditions\\

Graph Laplacian regression
\citep{green2021minimax}
& No explicit unknown-boundary mechanism
& Prediction/estimation rates on neighborhood graphs\\

Graph trend filtering
\citep{wang2015trend}
& Explicit edge-preserving penalty
& Graph-signal estimation guarantees\\

\midrule
\textbf{A-IHF (this work)}
& \textbf{Pilot-guided treatment-dependent conductance}
& \textbf{Leakage--attenuation--noise--relevance certificate}\\
\bottomrule
\end{tabularx}

\endgroup
\end{table}

\section{Problem Formulation}
\label{sec:problem}

We consider data $\{(z_i,x_i,y_i)\}_{i=1}^n$ generated by
\begin{equation}
    \begin{aligned}
    X &= g(Z) + V^\star, \qquad V^\star=U+\eta_X,\\
    Y &= f_0(X) + \gamma U + \epsilon_Y ,
    \end{aligned}
    \label{eq:scm}
\end{equation}
Here $Z\in\R^{d_Z}$ is the first-stage feature vector, $X\in\R$ is treatment,
$V^\star=X-g(Z)$ is the first-stage residual, $U$ is the endogenous component
that enters the outcome, and $\eta_X$ is first-stage noise. In the simplest
case, $Z$ consists only of excluded instruments. When exogenous controls $W$
are present, $Z$ may include both instruments and controls, and the outcome
equation becomes $Y=f_0(X,W)+\gamma U+\epsilon_Y$. We absorb these first-stage
features into $Z$ and write $f_0(X)$ to avoid carrying $W$ through the
notation. Exclusion and relevance continue to concern the excluded-instrument
component.

If $\eta_X=0$, the first-stage residual is the outcome-relevant control:
$V^\star=U$. Our synthetic designs allow a small nonzero $\eta_X$ and generate
the nonlinear outcome using $U$; graph diagnostics can therefore be evaluated
against either $U$ or $V^\star$. A-IHF estimates only the first stage in
\eqref{eq:scm}. Its causal interpretation still rests on the usual IV
assumptions, and weak instruments remain a separate source of instability
\citep{staiger1997instrumental,stock2002survey}.

Throughout, $\|\cdot\|_2$ denotes the Euclidean norm on finite sample vectors. The scaled quantity $n^{-1/2}\|\cdot\|_2$ is the empirical $L_2$ norm. We write unscaled norms in finite-sample algebra and scaled norms in rate statements.

Let $\bx=(x_1,\ldots,x_n)^\top$, $\bv^\star=(v_1^\star,\ldots,v_n^\star)^\top$, $\bu=(u_1,\ldots,u_n)^\top$, and $\mathbf{g}=(g(z_1),\ldots,g(z_n))^\top$. A control-function first stage estimates $\mathbf{g}$ and forms
\begin{equation}
    \hat{\bv} = \bx - \hat{\mathbf{g}} .
\end{equation}
The outcome model conditions on $\hat{\bv}$ when estimating the structural
response. From the graph perspective, the target is $V^\star$; from the
control-function perspective, it is $U$. The two differ by $\eta_X$.

For a linear smoother $S$, the observed treatment is divided into the fitted
component $S\bx$ and the generated control $(I-S)\bx$. A useful division keeps
systematic signal out of the complement, avoids absorbing residual signal into
the fit, and preserves variation in $X$ after projection on the generated
control. No single in-sample fit statistic summarizes all three. Under
fractured geometry, the decision is also edge specific: pooling is desirable
within a smooth region but harmful across a jump.

\subsection{Selection Information Sets}

To distinguish implementable tuning from oracle evaluation, we record the
information available at each stage:
\begin{equation}
    \begin{aligned}
    \I_1
    &=\sigma\{(z_i,x_i):1\le i\le n\},\\
    \I_{\mathrm{obs}}
    &=\sigma\{(z_i,x_i,y_i):1\le i\le n\}.
    \end{aligned}
\end{equation}
A-IHF selects its first-stage parameters using $\I_1$; the outcome model may
use all of $\I_{\mathrm{obs}}$. Simulations additionally reveal
$\bv^\star$, $U$, $\eta_X$, and $f_0$, which we collect in
$\I_{\mathrm{oracle}}$.

We use four labels throughout:
\begin{itemize}
    \item \emph{fixed}: parameters are specified before evaluation;
    \item \emph{observational}: first-stage parameters are selected using $\I_1$, with $Y$ used only in the second stage or in evaluation;
    \item \emph{graph-admissibility-filtered observational}: observational selection after applying first-stage graph diagnostics, still using only $\I_1$;
    \item \emph{oracle}: parameters are selected using hidden simulator quantities.
\end{itemize}
The labels specify distinct estimators. Rows marked ``oracle'' are included
only as reference benchmarks.

\subsection{Prediction and Residual Extraction}

By \eqref{eq:prediction_residual_identity}, out-of-sample prediction
consistency implies out-of-sample residual consistency. The issue considered
here is more specific: if the generated control is formed on the same
observations used to train an interpolation-oriented learner, its empirical
variation can collapse. Proposition \ref{prop:ntk} records this endpoint in
the neural-tangent-kernel regime \citep{jacot2018neural}; spectral bias
provides complementary intuition about which modes disappear first
\citep{rahaman2019spectral}. The proposition does not argue against flexible
first stages, for which cross-fitting is the appropriate remedy. It identifies
the degenerate endpoint that a same-sample residualizer must avoid.

\begin{proposition}[Residual collapse under kernel gradient flow]
\label{prop:ntk}
Let $f_t$ denote the prediction of an over-parameterized neural network trained on squared loss for the first-stage regression $X$ on $Z$. In the neural tangent kernel limit, suppose the empirical kernel matrix $\Sigma$ is positive definite with eigenpairs $(\mu_j,\phi_j)$, $\mu_j>0$. If $\hat{\bv}_{NN}(t)=\bx-f_t(Z)$, then
\begin{equation}
    \langle \hat{\bv}_{NN}(t),\phi_j\rangle
    =
    e^{-\mu_j t}
    \langle \hat{\bv}_{NN}(0),\phi_j\rangle .
\end{equation}
Thus every positive-kernel empirical mode of the generated residual is exponentially damped, and $\hat{\bv}_{NN}(t)\to 0$ as $t\to\infty$.
\end{proposition}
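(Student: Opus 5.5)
The approach is to reduce the statement to a linear ODE for the residual vector on the training sample and diagonalize it in the eigenbasis of $\Sigma$. In the neural tangent kernel limit, gradient flow on the squared loss $L(\theta)=\tfrac12\|\bx-f_t(Z)\|_2^2$ (with any constant step-size normalization absorbed into $t$) induces kernel gradient flow on the in-sample predictions. Chain rule and $\dot\theta=-\nabla_\theta L$ give
\begin{equation}
\frac{d}{dt} f_t(Z) = J_t J_t^\top\{\bx - f_t(Z)\},
\end{equation}
where $J_t$ is the Jacobian of the in-sample outputs with respect to parameters. The NTK limit replaces $J_tJ_t^\top$ by the constant empirical kernel matrix $\Sigma$. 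Since $\bx$ does not depend on $t$, the residual $\hat\bv_{NN}(t)=\bx-f_t(Z)$ satisfies $\dot{\hat\bv}_{NN}(t)=-\Sigma\,\hat\bv_{NN}(t)$.

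Next I project onto the modes. Because $\Sigma$ is symmetric positive definite, it has an orthonormal eigenbasis $\{\phi_j\}$ with $\Sigma\phi_j=\mu_j\phi_j$. Taking the inner product of the ODE with $\phi_j$ and using symmetry gives $\tfrac{d}{dt}\langle\hat\bv_{NN}(t),\phi_j\rangle=-\mu_j\langle\hat\bv_{NN}(t),\phi_j\rangle$, a scalar linear ODE. Its unique solution is the claimed factor $e^{-\mu_j t}$ times the initial coefficient. Equivalently, $\hat\bv_{NN}(t)=e^{-\Sigma t}\hat\bv_{NN}(0)$.

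For the limit, Parseval in the orthonormal basis gives $\|\hat\bv_{NN}(t)\|_2^2=\sum_j e^{-2\mu_j t}\langle\hat\bv_{NN}(0),\phi_j\rangle^2\le e^{-2\mu_{\min}t}\|\hat\bv_{NN}(0)\|_2^2$, where $\mu_{\min}>0$ by positive definiteness. The right side tends to zero. This proves every mode is exponentially damped and $\hat\bv_{NN}(t)\to0$.

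The main obstacle is justifying the first step: that the training dynamics are exactly the linear kernel flow with a time-constant $\Sigma$. I will treat this as the content of the hypothesis ``in the neural tangent kernel limit'' and cite \citet{jacot2018neural} for the convergence of $J_tJ_t^\top$ to a deterministic constant kernel under infinite-width parameterization. I will not re-derive the constancy of the kernel. I will also note that the result concerns the same-sample residual only, evaluated at the training inputs.
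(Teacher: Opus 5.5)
Your proposal is correct and follows the paper's proof essentially step for step. The shared steps are: take the NTK-limit kernel gradient flow $\frac{d}{dt}f_t(Z)=\Sigma(\bx-f_t(Z))$ as given, deduce $\frac{d}{dt}\hat{\bv}_{NN}(t)=-\Sigma\hat{\bv}_{NN}(t)$, and solve the decoupled scalar ODEs in the eigenbasis of $\Sigma$. Your Parseval bound with $\mu_{\min}>0$ just makes explicit the paper's closing remark that every coefficient, and hence the finite-dimensional residual vector, tends to zero.
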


The proof appears in \ref{app:ntk_collapse}. In our experiments,
inductive supervised baselines use out-of-fold predictions, which prevents a
training interpolator from mechanically zeroing its own residuals. A-IHF is
instead a transductive smoother on the observed graph. Its GCV term discourages
the identity-smoother endpoint, and its relevance floor excludes controls that
are nearly collinear with treatment. We report this protocol difference
throughout the empirical section.

\section{A-IHF: Task-Directed Graph Residualization}
\label{sec:method}

A-IHF has three coupled components. A feature graph represents first-stage
neighborhoods; treatment-guided conductance estimates where pooling is
appropriate; and a pair of complementary resolvents produces both
$\hat{\mathbf g}$ and the generated control. Adaptive conductance distinguishes
the method from isotropic graph ridge. Outcome-free tuning, a relevance floor,
and compatibility checks for connectivity, local degree, and treatment
contrast complete the first-stage rule.

\subsection{Graph Construction and Pilot Diffusion}

We begin with a graph on the first-stage features and regard treatment as a
signal on its vertices. Neighbors within a smooth region should exchange
information, whereas an edge that crosses a large jump should carry little
conductance. Given $z_1,\ldots,z_n$, we construct a symmetric
$K$-nearest-neighbor graph with nonnegative affinity matrix $A$. Distances are
converted to radial-basis weights using the median nonzero edge distance
before symmetrization. This construction follows standard practice in
manifold learning, spectral methods, and graph signal processing
\citep{belkin2003laplacian,coifman2006diffusion,vonluxburg2007tutorial,shuman2013emerging}.

For any symmetric weight matrix $W$, let $D_W$ be the diagonal degree matrix and let
\begin{equation}
    L(W) = \frac{D_W-W}{\bar d_W}, \qquad
    \bar d_W = \frac{1}{n}\Tr(D_W).
    \label{eq:scaled_laplacian}
\end{equation}
This scaled combinatorial Laplacian is symmetric positive semidefinite and has the constant vector in its nullspace on each connected component. The scaling keeps the diffusion parameter comparable across graphs with different mean degrees. The resolvent form is the graph analogue of Tikhonov-type smoothing \citep{tikhonov1977solutions,wahba1990spline}.

A pilot smoother is computed as
\begin{equation}
    \tilde{\bx} = (I+\tau L(A))^{-1}\bx .
    \label{eq:pilot}
\end{equation}
The pilot serves only to identify edges whose treatment contrast is unusually
large.

\subsection{Anisotropic Conductance}

For each edge $(i,j)$ in the graph, define
\begin{equation}
    C_{ij}
    =
    \exp\left\{
      -\frac{(\tilde x_i-\tilde x_j)^2}{\gamma}
    \right\}.
    \label{eq:conductance}
\end{equation}
To match the implementation also in the degenerate case, let
$\gamma_{\mathrm{raw}}$ be the $p$-th percentile of squared pilot differences
larger than $10^{-12}$ on graph edges. If no such difference exists, set
$\gamma_{\mathrm{raw}}=1$; otherwise use the empirical percentile. The
conductance scale is
$\gamma=\max\{\gamma_{\mathrm{raw}},10^{-12}\}$. The final weights are
\begin{equation}
    W_{ij}=A_{ij}C_{ij},
    \label{eq:weights}
\end{equation}
with an optional cutoff for numerical sparsity. As in anisotropic diffusion
\citep{perona1990scale,weickert1998anisotropic}, small pilot differences
preserve conductance and large differences reduce it.

\subsection{Resolvent Residual}

For a parameter tuple $h=(K,\tau,p,\lambda)$, let $W_h$ be the anisotropic weight matrix and define
\begin{equation}
    S_h = (I+\lambda L(W_h))^{-1}.
    \label{eq:smoother}
\end{equation}
The fitted first-stage systematic component and the generated control are
\begin{equation}
    \hat{\mathbf{g}}_h = S_h\bx, \qquad
    \hat{\bv}_h = (I-S_h)\bx .
    \label{eq:generated_control}
\end{equation}
Both quantities are obtained from sparse linear solves; the first stage uses
no gradient descent.

\subsection{Observational First-Stage Selection}
\label{sec:obs_selection}

A-IHF can be used with fixed defaults or with an observational rule over a finite candidate family $\Hcal_n$. The rule uses only $(Z,X)$. For each $h\in\Hcal_n$, let
\begin{equation}
    Q_h=[\bone,\hat{\bv}_h],
    \qquad
    M_h=I-Q_h(Q_h^\top Q_h)^\dagger Q_h^\top,
\end{equation}
and define the residualized treatment variation
\begin{equation}
    \kappa_n(h)=\frac{1}{n}\bx^\top M_h\bx .
    \label{eq:kappa}
\end{equation}
The observational score is
\begin{equation}
    \widehat Q_{\mathrm{obs}}(h)
    =
    \frac{\|(I-S_h)\bx\|_2^2/n}
    {\{1-\Tr(S_h)/n\}^2}
    +
    \alpha
    \frac{(S_h\bx)^\top L(W_h)(S_h\bx)}
    {\|\bx\|_2^2/n+\varepsilon}.
    \label{eq:qobs}
\end{equation}
The first term is graph generalized cross-validation
\citep{craven1979smoothing,golub1979generalized}, motivated by the
fixed-smoother identity in Proposition \ref{prop:gcv_identity}. The second
penalizes roughness in the fitted systematic component. Stochastic probes
provide an efficient estimate of the trace
\citep{hutchinson1989stochastic}. Within the chosen candidate family, the
score balances residual size, effective degrees of freedom, and fitted-signal
smoothness. Classical GCV assumes a fixed linear smoother and homoskedastic
noise; neither assumption is automatic here because the graph is adaptive and
the residual may be heteroskedastic or graph correlated. We examine this
limitation in the correlated-residual experiments.

For graph-admissibility filtering, let $\ell_n(h)$ be the largest post-cut
connected-component fraction of the final graph induced by $W_h$, and let
$d_{\min,n}(h)$ be its minimum final weighted degree after the numerical
conductance cutoff. Connectivity does not detect a densely but incorrectly
rewired graph, so we also compare local treatment contrast with its global
pairwise scale. For the base affinity $A_h$, define
\begin{equation}
    a_n(h)
    =
    \frac{\sum_{i<j}(A_h)_{ij}(x_i-x_j)^2}
    {2s_{X,n}^2\sum_{i<j}(A_h)_{ij}},
    \qquad
    s_{X,n}^2=\frac{1}{n}\|\bx-\bar x\bone\|_2^2 .
    \label{eq:edge_compatibility}
\end{equation}
The denominator equals the empirical squared contrast of an independently
drawn pair. Hence $a_n(h)\le1$ requires neighboring vertices to be at least as
similar in treatment as a generic pair, without consulting $Y$. Given thresholds
$\omega_n\in[0,1]$, $d_{0,n}\ge0$, and $a_{0,n}>0$, define
\begin{align}
    \Hcal_n^\kappa
    &=
    \left\{
    h\in\Hcal_n:
    \kappa_n(h)\ge \frac{c_\kappa}{n}
    \|\bx-\bar x\bone\|_2^2
    \right\}, \\
    \Hcal_n^{\mathrm{adm}}
    &=
    \left\{
    h\in\Hcal_n^\kappa:
    \ell_n(h)\ge \omega_n,
    \right. \nonumber\\[-2pt]
    &\hspace{2.7em}\left.
    d_{\min,n}(h)\ge d_{0,n},\quad
    a_n(h)\le a_{0,n}
    \right\}.
    \label{eq:admissible_family}
\end{align}
The unfiltered observational rule selects over $\Hcal_n^\kappa$,
\begin{equation}
    \hat h_{\mathrm{obs}}
    \in
    \arg\min_{h\in\Hcal_n^\kappa}
    \widehat Q_{\mathrm{obs}}(h).
    \label{eq:obs_rule}
\end{equation}
The graph-admissibility-filtered observational rule selects
\begin{equation}
    \hat h_{\mathrm{gobs}}
    \in
    \arg\min_{h\in\Hcal_n^{\mathrm{adm}}}
    \widehat Q_{\mathrm{obs}}(h).
    \label{eq:gobs_rule}
\end{equation}
We fix $\alpha\ge0$, $\varepsilon>0$, $c_\kappa>0$, $\omega_n$,
$d_{0,n}$, and $a_{0,n}$ as fixed before evaluation. These constants have
scale meanings rather than oracle meanings. The term $\varepsilon$ is a
numerical stabilizer. The relevance floor $c_\kappa$ requires a minimum
fraction of centered treatment variation to remain after projection on the
generated control. The connectivity thresholds require the final graph to
retain a macroscopic connected region and nondegenerate local degrees, while
\eqref{eq:edge_compatibility} rejects a connected but globally incompatible
representation. The roughness weight $\alpha$ fixes the scale of the
graph-energy penalty within the pre-specified family $\Hcal_n$; it is not
selected using $Y$, $\bv^\star$, or the structural response.

The candidate family and the filters jointly define the procedure. The
largest-component screen plays a role similar to a giant-component diagnostic
in random-graph and percolation theory
\citep{bollobas2001random,grimmett1999percolation}: a selected graph should
retain a macroscopic region rather than shatter into isolated fragments.
Section \ref{sec:diagnostics} shows that very sparse candidates and sharp cuts
can produce exactly this failure. We therefore regard $\Hcal_n$ and
$\Hcal_n^{\mathrm{adm}}$ as prespecified admissible families, not as
unrestricted hyperparameter searches.

Equation \eqref{eq:qobs} defines a finite-path selection rule. Extending it to
a minimax statement would require a continuum of graph scales and uniform
risk control for adaptive smoothers. Section \ref{sec:obs_theory} instead
provides a split-sample calibration inequality for a fixed finite family.

\begin{algorithm}[ht]
\caption{Adaptive Anisotropic Instrumental Heat Flow}
\label{alg:aihf}
\begin{algorithmic}[1]
\STATE \textbf{Input:} first-stage features $z_1,\ldots,z_n$, treatment vector $\bx$, and parameters $(K,\tau,p,\lambda)$.
\STATE Construct a symmetric $K$-nearest-neighbor affinity matrix $A$.
\STATE Form $L(A)$ and compute $\tilde{\bx}=(I+\tau L(A))^{-1}\bx$.
\STATE Let $\gamma_{\mathrm{raw}}$ be the $p$-th percentile of edge differences $(\tilde x_i-\tilde x_j)^2>10^{-12}$; if this set is empty, set $\gamma_{\mathrm{raw}}=1$; set $\gamma=\max\{\gamma_{\mathrm{raw}},10^{-12}\}$.
\STATE For graph edges, define $C_{ij}=\exp\{-(\tilde x_i-\tilde x_j)^2/\gamma\}$ and $W_{ij}=A_{ij}C_{ij}$; set $W_{ij}=0$ otherwise.
\STATE Form $S=(I+\lambda L(W))^{-1}$.
\STATE \textbf{Output:} $\hat{\mathbf g}=S\bx$ and $\hat{\bv}=(I-S)\bx$.
\end{algorithmic}
\end{algorithm}

The observational versions apply Algorithm \ref{alg:aihf} to each $h\in\Hcal_n$ and select $h$ by either \eqref{eq:obs_rule} or the graph-admissibility-filtered rule \eqref{eq:gobs_rule}.

\begin{figure}[ht]
\centering
\includegraphics[width=0.92\linewidth]{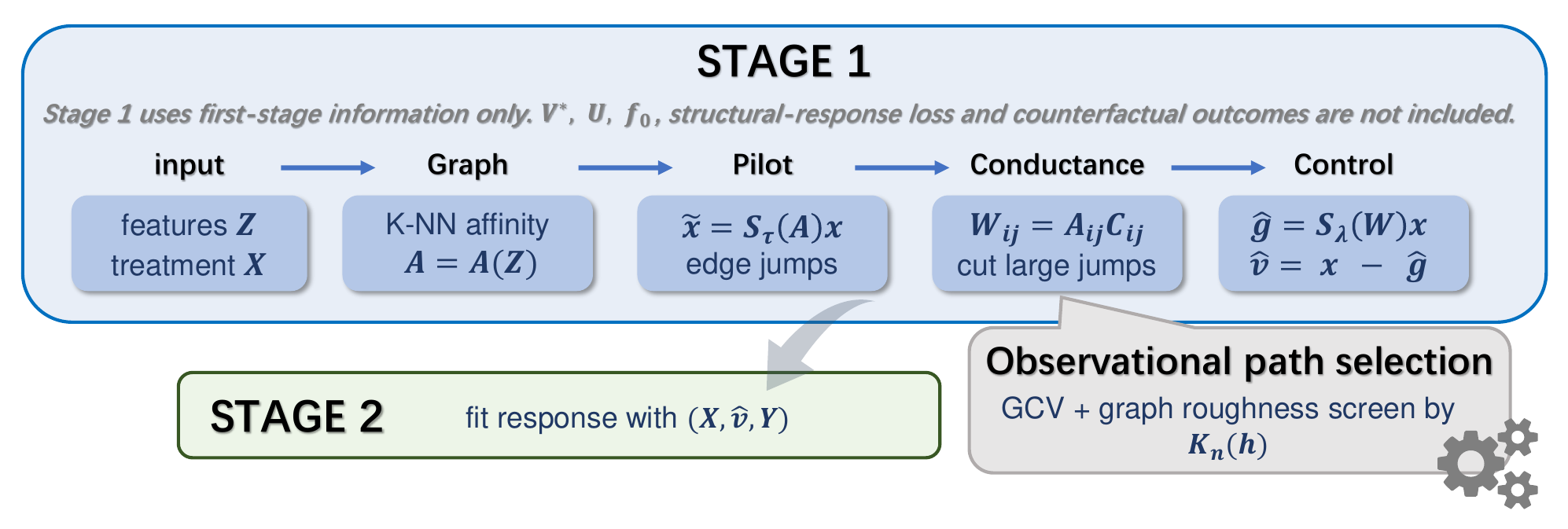}
\caption{\textbf{A-IHF mechanism.} The first stage constructs a graph on first-stage features, computes a pilot diffusion, attenuates conductance across large pilot treatment jumps, and forms a generated control by a final graph resolvent. The observational tuning rules use only $(Z,X)$, relevance, and graph-admissibility diagnostics; hidden simulator quantities are not part of Stage 1 selection.}
\label{fig:aihf_mechanism}
\end{figure}

\section{A Graph-to-Control-Function Certificate}
\label{sec:theory}

The graph operation matters downstream through the control it leaves behind.
Conditional on a realized smoother, generated-control error separates into
systematic signal left in the residual and residual signal absorbed by the
fit. The coefficient bound also accounts for the difference between
$V^\star$ and $U$, and scales the total error by the treatment variation left
after conditioning on the generated control. These finite-sample statements
require no graph limit. Section \ref{sec:asymptotic_theory} later gives
geometric conditions under which their terms vanish.

\subsection{Standing Assumptions and Targets}

\begin{assumption}[Additive first stage]
\label{ass:additive}
On the observed sample,
\begin{equation}
    \bx=\mathbf{g}+\bv^\star,
    \qquad
    \bv^\star=\bu+\boldsymbol{\eta}_X,
\end{equation}
where $\mathbf{g}=(g(z_1),\ldots,g(z_n))^\top$ is the first-stage systematic treatment component, $\bv^\star$ is the first-stage residual, $\bu$ is the outcome-relevant endogenous control, and $\boldsymbol{\eta}_X$ is first-stage noise. The graph residual error is evaluated against $\bv^\star$. The second-stage perturbation depends on the distance from $\hat{\bv}$ to $\bu$.
\end{assumption}

\begin{assumption}[Linear control-function second stage]
\label{ass:second_stage}
For the theoretical perturbation bound,
\begin{equation}
    y_i=\beta_0 x_i+\gamma_0 u_i+\epsilon_i,
    \qquad
    \E[\epsilon_i\mid x_i,u_i]=0 .
\end{equation}
This holds for $i=1,\ldots,n$.
Equivalently, $\by=\beta_0\bx+\gamma_0\bu+\boldsymbol{\epsilon}$ in vector notation.
This condition is used only for the finite-sample coefficient bound. The nonlinear response experiments are empirical evaluations of the same generated controls.
\end{assumption}

All finite-sample bounds condition on the realized smoother $S_h$. Assumption
\ref{ass:geometry}, introduced in Section \ref{sec:concrete_admissibility},
later supplies sufficient conditions for residual recovery. Although the
individual linear-smoother identities are elementary, their combination
locates misallocated variation and traces its effect into the
control-function coefficient. Proofs omitted from this section are collected
in \ref{app:auxiliary_proofs}.

\subsection{Fixed-Smoother Selection Identities}

\begin{proposition}[Fixed-smoother risk identity]
\label{prop:gcv_identity}
Assume $\bx=\mathbf{g}+\boldsymbol{\eta}$, where
$\E[\boldsymbol{\eta}\mid Z]=0$ and
$\E[\boldsymbol{\eta}\boldsymbol{\eta}^\top\mid Z]=\sigma^2 I$.
Let $S$ be a symmetric smoother treated as fixed conditional on $Z$. Then
\begin{equation}
    \E\left[\|(I-S)\bx\|_2^2\mid Z\right]
    =
    \|(I-S)\mathbf{g}\|_2^2
    +
    \sigma^2 \Tr\{(I-S)^2\}.
\end{equation}
\end{proposition}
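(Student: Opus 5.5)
The plan is to expand the quadratic form after substituting $\bx=\mathbf{g}+\boldsymbol{\eta}$ into $(I-S)\bx$. Write $R=I-S$. Because $S$ is symmetric and treated as fixed given $Z$, the matrix $R$ is symmetric and $Z$-measurable. The expansion is
\begin{equation*}
    \|R\bx\|_2^2
    =
    \|R\mathbf{g}\|_2^2
    +2\,\mathbf{g}^\top R^\top R\,\boldsymbol{\eta}
    +\boldsymbol{\eta}^\top R^\top R\,\boldsymbol{\eta}.
\end{equation*}
Next I take conditional expectations given $Z$ term by term. The first term is $Z$-measurable, since $\mathbf{g}$ is a function of $Z$ and $R$ is fixed given $Z$, so it passes through unchanged. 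For the cross term, conditional linearity gives $\mathbf{g}^\top R^\top R\,\E[\boldsymbol{\eta}\mid Z]$, and this vanishes by the mean-zero assumption.

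For the noise term I would use the trace trick. Write $\boldsymbol{\eta}^\top R^\top R\boldsymbol{\eta}=\Tr(R^\top R\,\boldsymbol{\eta}\boldsymbol{\eta}^\top)$. Linearity of the trace and of conditional expectation then gives $\Tr(R^\top R\,\E[\boldsymbol{\eta}\boldsymbol{\eta}^\top\mid Z])=\sigma^2\Tr(R^\top R)$. Symmetry of $R$ turns this into $\sigma^2\Tr\{(I-S)^2\}$, which gives the stated identity.

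There is no genuine obstacle here. The only point needing care is the meaning of "treated as fixed conditional on $Z$." I will state explicitly that $S$ is assumed $\sigma(Z)$-measurable, so that it can be pulled outside the conditional expectation. This matters because A-IHF's adaptive $S_h$ depends on $\bx$ through the pilot $\tilde{\bx}$, and the identity does not apply to it without this assumption. I would add a remark that this is exactly the GCV caveat noted after \eqref{eq:qobs}.
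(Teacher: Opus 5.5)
Your proposal is correct and follows the same route as the paper: substitute $\bx=\mathbf{g}+\boldsymbol{\eta}$, eliminate the cross term using $\E[\boldsymbol{\eta}\mid Z]=0$, and evaluate the quadratic form through the trace with $\E[\boldsymbol{\eta}\boldsymbol{\eta}^\top\mid Z]=\sigma^2 I$. The paper compresses the cross-term and trace steps into a single line, so your explicit requirement that $S$ be $\sigma(Z)$-measurable makes its ``treated as fixed'' premise precise without changing the argument.
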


Proposition \ref{prop:gcv_identity} supplies the idealized rationale for
ranking smoothers by residual size and effective degrees of freedom. Equation
\eqref{eq:qobs} adapts that rationale to a finite family of data-dependent
graphs, although IV residuals need not satisfy the proposition's scalar
conditional-variance assumption.

\subsection{Oracle Selection Is Not Observational Selection}

\begin{proposition}[Information-set separation]
\label{prop:info}
Let $P$ and $P'$ be two mechanisms that induce the same law for the observed sample $(Z,X,Y)$. Suppose that, under a coupling with the same observed sample, their oracle selectors over $\Hcal_n$ satisfy
\begin{equation}
    \Pbb\{h_P^\star\neq h_{P'}^\star\}>0,
\end{equation}
where $h_P^\star$ and $h_{P'}^\star$ may depend on hidden residual targets. Then no $\I_{\mathrm{obs}}$-measurable map can equal both oracle selectors with probability one.
\end{proposition}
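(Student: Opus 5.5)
The argument is by contradiction, and it rests on one observation: under a coupling that holds the observed sample fixed, any $\I_{\mathrm{obs}}$-measurable selector produces the same output under both mechanisms.

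Suppose $\hat h$ is a measurable function of $(Z,X,Y)$ with $\hat h=h_P^\star$ almost surely under $P$ and $\hat h=h_{P'}^\star$ almost surely under $P'$. Take the coupling in the statement. It lives on a single probability space carrying one observed sample $O=(Z,X,Y)$, together with hidden quantities $(\bv^\star,\bu,\boldsymbol{\eta}_X)$ for $P$ and $(\bv^{\star\prime},\bu',\boldsymbol{\eta}_X')$ for $P'$. Each mechanism's marginal joint law of observed and hidden variables is preserved. The two laws of $O$ agree by hypothesis, so this construction is legitimate.

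The proof then runs in three steps.

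\textbf{Step 1 (transfer of the almost-sure events).} Under $P$, the event $\{\hat h(O)=h_P^\star\}$ has probability one. This event depends only on the joint law of $(O,\text{hidden}_P)$, and the coupling preserves that law. Hence the event still has probability one on the coupling space. The same holds for $\{\hat h(O)=h_{P'}^\star\}$.

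\textbf{Step 2 (intersection).} Two probability-one events intersect in a probability-one event. On that intersection, $h_P^\star=\hat h(O)=h_{P'}^\star$. Therefore $\Pbb\{h_P^\star\neq h_{P'}^\star\}=0$.

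\textbf{Step 3 (contradiction).} Step 2 contradicts the hypothesis $\Pbb\{h_P^\star\neq h_{P'}^\star\}>0$. So no such $\hat h$ exists.

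The only delicate point is Step 1. It requires the oracle selectors to be measurable functions of observed and hidden variables, say $h_P^\star=\phi_P(O,\text{hidden}_P)$. It also requires the coupling to reproduce each mechanism's joint law, not merely the law of the observed sample. I would make both requirements explicit as the meaning of ``a coupling with the same observed sample'' and note that $\Hcal_n$ is finite, so measurability of the argmin selectors is automatic given a fixed tie-breaking rule.

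I would also add a remark that the statement is not vacuous. Observationally equivalent mechanisms arise, for instance, by reallocating the variance of $\bv^\star$ between $\bu$ and $\boldsymbol{\eta}_X$ while compensating through $\gamma$ and $\epsilon_Y$ so that the law of $(Z,X,Y)$ is unchanged. An oracle selector targeting $\bu$ can then pick different filters under the two mechanisms. This example is illustration only and is not needed for the proof.
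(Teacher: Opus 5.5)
Your proof is correct and is the same as the paper's argument: on the coupling, an $\I_{\mathrm{obs}}$-measurable selector takes a single value because the observed sample is shared, so it cannot equal both oracle selectors on the positive-probability event $\{h_P^\star\neq h_{P'}^\star\}$. Your Step 1 requires the coupling to preserve each mechanism's joint law of observed and hidden variables, so that the almost-sure equalities transfer. That makes explicit what the paper's two-line proof leaves implicit.
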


The distinction is consequential in the experiments: oracle rows may use the
hidden residual target, whereas observational first-stage rules must be
functions of $\I_1$.

\subsection{Residual Error Decomposition}

\begin{proposition}[Exact graph residual decomposition]
\label{prop:decomp}
Under Assumption \ref{ass:additive}, let $\hat{\bv}_h=(I-S_h)\bx$. Then
\begin{equation}
    \hat{\bv}_h-\bv^\star
    =
    \underbrace{(I-S_h)\mathbf{g}}_{\text{structural leakage}}
    -
    \underbrace{S_h\bv^\star}_{\text{residual attenuation}} .
    \label{eq:decomp}
\end{equation}
\end{proposition}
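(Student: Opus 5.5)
The identity is pure linear algebra, so I will verify it by direct substitution and use no probabilistic input. The only ingredient is Assumption \ref{ass:additive}, which gives $\bx=\mathbf{g}+\bv^\star$ on the observed sample. I will treat the realized smoother $S_h$ as a fixed $n\times n$ matrix; its dependence on $\bx$ through the pilot conductance is irrelevant, because the claim holds for whatever matrix $S_h$ is realized.

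First, I will substitute the additive decomposition into the generated control in \eqref{eq:generated_control}. By linearity of $I-S_h$,
\begin{equation*}
    \hat{\bv}_h=(I-S_h)\bx=(I-S_h)\mathbf{g}+(I-S_h)\bv^\star .
\end{equation*}
Next, I will subtract $\bv^\star$ and expand $(I-S_h)\bv^\star-\bv^\star=-S_h\bv^\star$. This yields
\begin{equation*}
    \hat{\bv}_h-\bv^\star=(I-S_h)\mathbf{g}-S_h\bv^\star,
\end{equation*}
which is exactly \eqref{eq:decomp}.

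The labels then follow directly. The term $(I-S_h)\mathbf{g}$ is the part of the systematic component that the smoother fails to reproduce, so it is left in the residual. The term $S_h\bv^\star$ is the part of the true residual that the fit absorbs.

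There is no genuine obstacle here. The only point needing care is to state explicitly that $S_h$ is well defined, that is, that $I+\lambda L(W_h)$ is invertible. This holds because $L(W_h)$ is symmetric positive semidefinite (as noted after \eqref{eq:scaled_laplacian}) and $\lambda\ge0$. Hence the eigenvalues of $I+\lambda L(W_h)$ are at least $1$, and the matrix is invertible. The degenerate case in which all weights vanish and $\bar d_W=0$ can be handled by the convention $L(W)=0$, which gives $S_h=I$; the identity still holds in that case.

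I would also remark that the decomposition holds even when $S_h$ depends on $\bx$. This is the reason later bounds must condition on the realized smoother rather than treat $S_h$ as independent of $\bv^\star$.
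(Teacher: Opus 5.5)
Your proposal is correct and matches the paper's proof, which likewise substitutes $\bx=\mathbf{g}+\bv^\star$ into $\hat{\bv}_h=(I-S_h)\bx$ and subtracts $\bv^\star$. The remarks on invertibility of $I+\lambda L(W_h)$ are harmless additions: the identity itself holds for any matrix $S_h$, whether or not it depends on $\bx$.
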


The two terms in \eqref{eq:decomp} pull in different directions. Smoothing
should make $(I-S_h)\mathbf{g}$ small without making
$S_h\bv^\star$ large. Discontinuities in $\mathbf{g}$ make the first term
problematic for an isotropic graph; attenuating edges across large pilot jumps
is intended to reduce that leakage. By contrast, low-frequency structure in
$\bv^\star$ enlarges the second term even on a correctly specified graph. We
refer to this latter case as spectral leakage.

\begin{proposition}[Spectral leakage for correlated residuals]
\label{prop:leakage}
Let $S=(I+\lambda L)^{-1}$ for a fixed graph Laplacian $L$ with eigenpairs $(\mu_j,\phi_j)$. If $\E[\bv^\star(\bv^\star)^\top]=\Sigma_V$, then
\begin{equation}
    \E\|S\bv^\star\|_2^2
    =
    \sum_{j=1}^n
    \frac{\phi_j^\top \Sigma_V \phi_j}
    {(1+\lambda\mu_j)^2}.
    \label{eq:leakage}
\end{equation}
\end{proposition}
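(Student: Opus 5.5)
The plan is to diagonalize $S$ in the eigenbasis of $L$ and then use the trace identity for quadratic forms. Since $L$ is a fixed graph Laplacian, it is symmetric positive semidefinite, so it has an orthonormal eigenbasis $\{\phi_j\}_{j=1}^n$ with $\mu_j\ge 0$. Write $\Phi=[\phi_1,\ldots,\phi_n]$, so $L=\Phi\,\mathrm{diag}(\mu_j)\,\Phi^\top$. Then $I+\lambda L=\Phi\,\mathrm{diag}(1+\lambda\mu_j)\,\Phi^\top$. For $\lambda\ge0$, every factor $1+\lambda\mu_j\ge1$ is positive, so the resolvent is well defined and
\begin{equation*}
    S=\Phi\,\mathrm{diag}\bigl\{(1+\lambda\mu_j)^{-1}\bigr\}\,\Phi^\top .
\end{equation*}
It follows that $S$ is symmetric and $S^\top S=S^2=\Phi\,\mathrm{diag}\{(1+\lambda\mu_j)^{-2}\}\,\Phi^\top$.

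Next, since $S$ is fixed (deterministic, or conditioned on), I write
\begin{equation*}
    \|S\bv^\star\|_2^2=(\bv^\star)^\top S^2\bv^\star=\Tr\bigl\{S^2\,\bv^\star(\bv^\star)^\top\bigr\}.
\end{equation*}
Taking expectations and using linearity of trace and expectation gives $\E\|S\bv^\star\|_2^2=\Tr(S^2\Sigma_V)$. The only integrability needed is that $\Sigma_V$ is finite, which the statement implicitly assumes.

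Finally, I substitute the spectral form of $S^2$:
\begin{equation*}
    \Tr(S^2\Sigma_V)=\sum_{j}(1+\lambda\mu_j)^{-2}\,\Tr\bigl(\phi_j\phi_j^\top\Sigma_V\bigr)=\sum_j\frac{\phi_j^\top\Sigma_V\phi_j}{(1+\lambda\mu_j)^2},
\end{equation*}
which is \eqref{eq:leakage}. An equivalent route is to expand $\bv^\star=\sum_j\langle\bv^\star,\phi_j\rangle\phi_j$ and use orthonormality, which gives $\|S\bv^\star\|_2^2=\sum_j(1+\lambda\mu_j)^{-2}\langle\bv^\star,\phi_j\rangle^2$; the result then follows from $\E\langle\bv^\star,\phi_j\rangle^2=\phi_j^\top\Sigma_V\phi_j$.

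No step is a real obstacle. The only point needing care is making explicit that $L$ (and hence $\Phi$) does not depend on $\bv^\star$, so that $S$ can be pulled outside the expectation. If the graph were built from $\bx$, as with the adaptive $W_h$, the identity would hold only conditionally on the smoother, with $\Sigma_V$ replaced by the conditional second moment. I would add a one-line remark noting this.
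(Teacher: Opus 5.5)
Your proof is correct and follows the paper's route: spectral decomposition of $S^2$ in the orthonormal eigenbasis of $L$, the trace identity $\E[(\bv^\star)^\top S^2\bv^\star]=\Tr(S^2\Sigma_V)$, and term-by-term substitution. Your remark is also right and worth keeping: the identity holds only conditionally on the smoother when the graph depends on $\bx$, with $\Sigma_V$ replaced by the conditional second moment, and the paper's proof leaves this implicit.
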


Low-frequency residual energy receives little attenuation in
\eqref{eq:leakage}, because the corresponding denominator is close to one.
The smoother cannot distinguish this component from other low-frequency graph
structure.

\subsection{Generated-Control Coefficient Error}

Let $\hat\beta(\hat{\bv})$ be the least-squares coefficient on $\bx$ when regressing $\by$ on $(\bx,\hat{\bv})$ and an intercept.

\begin{proposition}[Projective generated-control decomposition]
\label{prop:beta}
Under Assumption \ref{ass:second_stage}, let $\hat{\bv}$ be any generated control.
Let $M_{\hat{\bv}}$ be the residual-maker after projecting onto $[\bone,\hat{\bv}]$, and suppose
\begin{equation}
    \kappa_n(\hat{\bv})=\frac{1}{n}\bx^\top M_{\hat{\bv}}\bx>0.
\end{equation}
Then
\begin{equation}
    \hat\beta(\hat{\bv})-\beta_0
    =
    \gamma_0
    \frac{\bx^\top M_{\hat{\bv}}\bu}
    {\bx^\top M_{\hat{\bv}}\bx}
    +
    \frac{\bx^\top M_{\hat{\bv}}\boldsymbol{\epsilon}}
    {\bx^\top M_{\hat{\bv}}\bx}.
    \label{eq:beta_decomposition}
\end{equation}
Consequently,
\begin{equation}
    \begin{aligned}
    \left|
        \hat\beta(\hat{\bv})-\beta_0
        -
        \frac{\bx^\top M_{\hat{\bv}}\boldsymbol{\epsilon}}
        {\bx^\top M_{\hat{\bv}}\bx}
    \right|
    &\le
    \frac{|\gamma_0|\|M_{\hat{\bv}}\bu\|_2}
    {\sqrt{n\kappa_n(\hat{\bv})}}\\
    &\le
    \frac{|\gamma_0|\|\hat{\bv}-\bu\|_2}
    {\sqrt{n\kappa_n(\hat{\bv})}} .
    \end{aligned}
    \label{eq:beta_bound}
\end{equation}
\end{proposition}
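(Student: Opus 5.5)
The plan is to obtain the decomposition \eqref{eq:beta_decomposition} from the Frisch--Waugh--Lovell theorem and then bound the first term with two Cauchy--Schwarz steps. Let $Q=[\bone,\hat{\bv}]$ and $M_{\hat{\bv}}=I-Q(Q^\top Q)^\dagger Q^\top$, the orthogonal projector onto the complement of the column span of $Q$. It is symmetric and idempotent, annihilates $\bone$ and $\hat{\bv}$, and using the pseudoinverse handles a degenerate $\hat{\bv}$ (for example, $\hat{\bv}$ constant). By Frisch--Waugh--Lovell, the least-squares coefficient on $\bx$ in the regression of $\by$ on $(\bx,\hat{\bv},\bone)$ is
\[
\hat\beta(\hat{\bv})=\frac{\bx^\top M_{\hat{\bv}}\by}{\bx^\top M_{\hat{\bv}}\bx}.
\]
The hypothesis $\kappa_n(\hat{\bv})>0$ makes the denominator positive, and it also guarantees that $\bx$ is not in the span of $Q$, so the coefficient is uniquely defined even if $Q$ is rank deficient.

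Next I substitute $\by=\beta_0\bx+\gamma_0\bu+\boldsymbol{\epsilon}$ from Assumption \ref{ass:second_stage}. The term $\beta_0\,\bx^\top M_{\hat{\bv}}\bx$ divided by the denominator gives exactly $\beta_0$. The two remaining terms are the $\gamma_0$ contribution and the $\boldsymbol{\epsilon}$ contribution, which proves \eqref{eq:beta_decomposition}. This step is purely algebraic: the conditional-mean assumption on $\epsilon_i$ is not used here, only the linear form of $\by$.

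For the first inequality in \eqref{eq:beta_bound}, I write $\bx^\top M_{\hat{\bv}}\bu=(M_{\hat{\bv}}\bx)^\top(M_{\hat{\bv}}\bu)$, which uses symmetry and idempotence. Cauchy--Schwarz then gives
\[
|\bx^\top M_{\hat{\bv}}\bu|\le\|M_{\hat{\bv}}\bx\|_2\,\|M_{\hat{\bv}}\bu\|_2 .
\]
Since $\|M_{\hat{\bv}}\bx\|_2^2=\bx^\top M_{\hat{\bv}}\bx=n\kappa_n(\hat{\bv})$, dividing by the denominator leaves $\|M_{\hat{\bv}}\bu\|_2/\sqrt{n\kappa_n(\hat{\bv})}$, multiplied by $|\gamma_0|$.

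For the second inequality, the key observation is that $M_{\hat{\bv}}\hat{\bv}=0$. Hence
\[
M_{\hat{\bv}}\bu=M_{\hat{\bv}}(\bu-\hat{\bv}),
\]
and since an orthogonal projector has operator norm at most one, $\|M_{\hat{\bv}}\bu\|_2\le\|\hat{\bv}-\bu\|_2$.

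I expect no real obstacle. The only points needing care are justifying Frisch--Waugh--Lovell with a possibly rank-deficient $Q$, which the pseudoinverse and $\kappa_n>0$ settle, and noting the slack in the second inequality. Because $M_{\hat{\bv}}$ also removes the component of $\bu$ along $\bone$ and along $\hat{\bv}$, the sharper projective quantity $\|M_{\hat{\bv}}\bu\|_2$ is invariant to rescaling of $\hat{\bv}$. That invariance is what the later projective analysis exploits.
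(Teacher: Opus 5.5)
Your proposal is correct and follows essentially the same route as the paper: Frisch--Waugh--Lovell gives $\hat\beta(\hat{\bv})=\bx^\top M_{\hat{\bv}}\by/(\bx^\top M_{\hat{\bv}}\bx)$, you substitute the linear model, apply Cauchy--Schwarz to $(M_{\hat{\bv}}\bx)^\top(M_{\hat{\bv}}\bu)$, and finish with $M_{\hat{\bv}}\hat{\bv}=0$ and the fact that an orthogonal projector is a contraction. Your remarks on a rank-deficient $[\bone,\hat{\bv}]$, and on why $\kappa_n>0$ still identifies the coefficient on $\bx$, are correct details that the paper leaves implicit.
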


The direct residual norm obscures an important invariance. Replacing
$\hat{\bv}$ by $c\hat{\bv}$ for any $c\ne0$ changes neither
$M_{\hat{\bv}}$ nor the second-stage coefficient. Thus, when
$\hat{\bv}=c\bu$, the projective error $\|M_{\hat{\bv}}\bu\|_2$ and the
associated coefficient distortion are both zero, even though
$\|\hat{\bv}-\bu\|_2$ may be arbitrarily large. Direct error is still useful
for nonlinear second stages and as a sufficient condition for rates, but it
does not measure the intrinsic discrepancy in a linear control-function
regression.

\begin{theorem}[Projective fidelity--relevance frontier]
\label{thm:projective_frontier}
Let $H_n=I-n^{-1}\bone\bone^\top$,
$\tilde{\bx}=H_n\bx$, and $\tilde{\bu}=H_n\bu$.  Suppose
$\tilde{\bx}$ and $\tilde{\bu}$ are nonzero and the centered sample space has
dimension at least two.  For a nonconstant generated control $\hat{\bv}$,
define
\begin{equation}
    q_n(\hat{\bv})
    =
    \frac{n\kappa_n(\hat{\bv})}{\|\tilde{\bx}\|_2^2},
    \qquad
    p_n(\hat{\bv})
    =
    \frac{\|M_{\hat{\bv}}\bu\|_2^2}{\|\tilde{\bu}\|_2^2},
\end{equation}
and let
\begin{equation}
    \rho_n
    =
    \frac{|\tilde{\bx}^\top\tilde{\bu}|}
    {\|\tilde{\bx}\|_2\|\tilde{\bu}\|_2}.
\end{equation}
Then, for every required relevance $q_0\in[0,1]$,
\begin{equation}
    \inf_{\hat{\bv}:\ q_n(\hat{\bv})\ge q_0}
    p_n(\hat{\bv})
    =
    \left[
        \rho_n\sqrt{q_0}
        -
        \sqrt{1-\rho_n^2}\sqrt{1-q_0}
    \right]_+^2.
    \label{eq:projective_frontier}
\end{equation}
\end{theorem}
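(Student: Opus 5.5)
Because $\bone$ is a column of $[\bone,\hat{\bv}]$, the residual-maker satisfies $M_{\hat{\bv}}=H_n-P_{\bm w}$. Here $\bm w=H_n\hat{\bv}\neq0$ is the centered control and $P_{\bm w}$ is the rank-one orthogonal projector onto $\bm w$. The plan is to reduce the problem to a question about a single unit direction $\bm e=\bm w/\|\bm w\|_2$ in the centered subspace $\mathcal C=\operatorname{range}(H_n)$. By the scale invariance noted after Proposition~\ref{prop:beta}, only this direction matters. Write $\hat x=\tilde{\bx}/\|\tilde{\bx}\|_2$ and $\hat u=\tilde{\bu}/\|\tilde{\bu}\|_2$. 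Then
\[
q_n=1-(\bm e^\top\hat x)^2,\qquad p_n=1-(\bm e^\top\hat u)^2 .
\]
The problem becomes: over unit vectors $\bm e\in\mathcal C$ with $|\bm e^\top\hat x|\le\sqrt{1-q_0}$, maximize $|\bm e^\top\hat u|$. Every unit $\bm e\in\mathcal C$ is realized by a nonconstant generated control, for instance $\hat{\bv}=\bm e$, so the infimum runs over exactly this set.

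Only the component of $\bm e$ in $\mathrm{span}\{\hat x,\hat u\}$ affects either inner product. Any mass placed orthogonally to this span can be rotated into the span without loss. It cannot decrease $|\bm e^\top\hat u|$ for a given constraint level. This step needs the plane, or a line plus an orthogonal direction, to sit inside $\mathcal C$. That is where the hypothesis $\dim\mathcal C\ge2$ is used: when $\hat x\parallel\hat u$ ($\rho_n=1$), I still need a second unit direction orthogonal to $\hat x$ within $\mathcal C$.

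So parametrize within the plane. Write $\hat u=\rho\,\hat x+\sqrt{1-\rho^2}\,\hat x^\perp$, after flipping signs so that $\bm e^\top\hat u\ge0$ is the relevant case; this is valid since sign conventions are free. Set $\bm e=\cos\theta\,\hat x+\sin\theta\,\hat x^\perp$. The constraint becomes $|\cos\theta|\le\sqrt{1-q_0}$, and the objective is to maximize $\cos(\theta-\theta_u)$, where $\cos\theta_u=\rho_n$.

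This is a one-dimensional angular optimization, and I expect the case analysis here to be the main obstacle.

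The first case is when the unconstrained maximizer $\theta=\theta_u$ is feasible, that is $\rho_n\le\sqrt{1-q_0}$, equivalently $\rho_n\sqrt{q_0}\le\sqrt{1-\rho_n^2}\sqrt{1-q_0}$. Then the infimum of $p_n$ is $0$, which matches $[\cdot]_+^2$.

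The second case is when $\theta_u$ is infeasible. By unimodality of cosine in the angle distance, the optimum lies on the boundary $\cos\theta=\sqrt{1-q_0}$, at the root closest to $\theta_u$. At that point
\[
\bm e^\top\hat u=\rho_n\sqrt{1-q_0}+\sqrt{1-\rho_n^2}\sqrt{q_0},
\]
and so
\[
p_n=1-\bigl(\rho_n\sqrt{1-q_0}+\sqrt{1-\rho_n^2}\sqrt{q_0}\bigr)^2=\bigl(\rho_n\sqrt{q_0}-\sqrt{1-\rho_n^2}\sqrt{1-q_0}\bigr)^2 .
\]
This uses the identity $\cos^2(a+b)+\sin^2(a+b)=1$, with $\cos a=\rho_n$ and $\cos b=\sqrt{1-q_0}$.

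Finally I will check attainment, so that the infimum is in fact a minimum, and the endpoints $q_0\in\{0,1\}$. At $q_0=1$ the constraint forces $\bm e\perp\hat x$, giving $p_n=\rho_n^2$. At $q_0=0$ the choice $\bm e=\hat u$ gives $p_n=0$. Both agree with \eqref{eq:projective_frontier}.
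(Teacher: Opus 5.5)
Your proposal is correct and takes essentially the same route as the paper's proof: reduce to unit directions in the centered space, restrict to the plane of $\tilde{\bx}$ and $\tilde{\bu}$ (using $\dim\ge2$ when $\rho_n=1$), and split on whether the unconstrained optimum $\hat{\bv}\parallel\tilde{\bu}$ is feasible, taking the boundary angle otherwise. One small slip: at that boundary point $\mathbf{e}^\top\hat u=\cos a\cos b+\sin a\sin b=\cos(a-b)$, so the identity you cite should involve $a-b$ rather than $a+b$, though your displayed expression for $p_n$ is correct.
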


Theorem \ref{thm:projective_frontier} characterizes the feasible region over
all one-dimensional generated controls in the observed sample. The true
control direction attains $q_n(\bu)=1-\rho_n^2$ with zero projective error.
Any relevance requirement below that value is therefore compatible with
perfect fidelity; a larger requirement necessarily introduces positive
projective error. Restricting the control to a graph-filter family may raise
the attainable frontier further. \ref{app:projective_proofs} proves
the result with a two-dimensional angle construction.

\begin{theorem}[Exact spectral variation allocation]
\label{thm:spectral_allocation}
Let $L$ be a symmetric graph Laplacian with an orthonormal eigenbasis
$\{(\mu_j,\phi_j)\}_{j=1}^n$, and let
$R=r(L)$ be a symmetric residual filter with $r(0)=0$.  Set
$\hat{\bv}=R\tilde{\bx}$,
$x_j=\phi_j^\top\tilde{\bx}$,
$u_j=\phi_j^\top\tilde{\bu}$,
$w_j=x_j^2$, and $t_j=u_j/x_j$ on
$\mathcal A=\{j:x_j\ne0\}$.  Write $r_j=r(\mu_j)$ and
$C_r=\sum_{j\in\mathcal A}w_jr_j^2>0$.  Then
\begin{equation}
    n\kappa_n(\hat{\bv})
    =
    \frac{
        \displaystyle\sum_{\substack{i<j\\i,j\in\mathcal A}}
        w_iw_j(r_i-r_j)^2
    }{C_r}.
    \label{eq:spectral_relevance_identity}
\end{equation}
Equivalently, for $\pi_j=w_j/\sum_{k\in\mathcal A}w_k$,
\begin{equation}
    q_n(\hat{\bv})
    =
    \frac{\Var_\pi(r)}{\E_\pi[r^2]}.
    \label{eq:spectral_relevance_variance}
\end{equation}
Under Assumption \ref{ass:second_stage}, if
$\kappa_n(\hat{\bv})>0$, then the generated-control part of the coefficient
error is exactly
\begin{equation}
    \hat\beta(\hat{\bv})-\beta_0-
    \mathcal E_{\epsilon,\hat{\bv}}
    =
    \gamma_0
    \frac{
        \displaystyle\sum_{\substack{i<j\\i,j\in\mathcal A}}
        w_iw_j(r_j-r_i)(t_ir_j-t_jr_i)
    }{
        \displaystyle\sum_{\substack{i<j\\i,j\in\mathcal A}}
        w_iw_j(r_i-r_j)^2
    },
    \label{eq:spectral_bias_identity}
\end{equation}
where
$\mathcal E_{\epsilon,\hat{\bv}}
=\bx^\top M_{\hat{\bv}}\boldsymbol{\epsilon}/
(\bx^\top M_{\hat{\bv}}\bx)$.
\end{theorem}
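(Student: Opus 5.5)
The plan is to reduce everything to one-dimensional projection algebra in the eigenbasis of $L$, and then recognize Lagrange-type identities.

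\textbf{Step 1: reduce $M_{\hat{\bv}}$ to a rank-one projection.} First I note that $\hat{\bv}$ is centered. The Laplacian annihilates $\bone$, so $\bone$ lies in the null space of $L$. Because $r(0)=0$, the filter $R=r(L)$ annihilates that entire null space, hence $R\bone=0$. Since $R$ is symmetric, $\bone^\top\hat{\bv}=(R\bone)^\top\tilde{\bx}=0$. This holds even if $L$ is disconnected, and it does not matter whether $\bone$ itself is one of the $\phi_j$. Consequently $\bone$ and $\hat{\bv}$ are orthogonal, and
\[
M_{\hat{\bv}}=H_n-\hat{\bv}\hat{\bv}^\top/\|\hat{\bv}\|_2^2 .
\]
Because $M_{\hat{\bv}}$ kills $\bone$, we have $\bx^\top M_{\hat{\bv}}\bx=\|\tilde{\bx}\|_2^2-(\tilde{\bx}^\top\hat{\bv})^2/\|\hat{\bv}\|_2^2$, and similarly $\bx^\top M_{\hat{\bv}}\bu=\tilde{\bx}^\top\tilde{\bu}-(\tilde{\bx}^\top\hat{\bv})(\hat{\bv}^\top\tilde{\bu})/\|\hat{\bv}\|_2^2$.

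\textbf{Step 2: expand in the eigenbasis.} Orthonormality gives the following sums, all of which run over $\mathcal A$ only, because every term carries a factor $x_j$:
\[
\|\tilde{\bx}\|_2^2=\sum_{\mathcal A} w_j,\quad
\tilde{\bx}^\top\hat{\bv}=\sum_{\mathcal A} w_jr_j,\quad
\|\hat{\bv}\|_2^2=C_r,\quad
\tilde{\bx}^\top\tilde{\bu}=\sum_{\mathcal A} w_jt_j,\quad
\hat{\bv}^\top\tilde{\bu}=\sum_{\mathcal A} w_jr_jt_j .
\]
For the last two sums I use $x_ju_j=w_jt_j$ on $\mathcal A$. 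The assumption $C_r>0$ guarantees that $\hat{\bv}\neq0$, so the projection is well defined.

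\textbf{Step 3: relevance identity.} Multiplying by $C_r$, we get
\[
n\kappa_n C_r=\Big(\sum w_j\Big)\Big(\sum w_jr_j^2\Big)-\Big(\sum w_jr_j\Big)^2 .
\]
By the weighted Lagrange identity, this equals $\sum_{i<j}w_iw_j(r_i-r_j)^2$, which is \eqref{eq:spectral_relevance_identity}. Dividing by $\|\tilde{\bx}\|_2^2=\sum w_j$ and normalizing the weights to $\pi$ turns the numerator into $\Var_\pi(r)$ and $C_r/\sum w_k$ into $\E_\pi[r^2]$. This gives \eqref{eq:spectral_relevance_variance}.

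\textbf{Step 4: bias identity.} Proposition \ref{prop:beta} applies because $\kappa_n>0$. It shows that the left-hand side of \eqref{eq:spectral_bias_identity} equals $\gamma_0\,\bx^\top M_{\hat{\bv}}\bu/\bx^\top M_{\hat{\bv}}\bx$. The numerator times $C_r$ is
\[
\Big(\sum w_it_i\Big)\Big(\sum w_jr_j^2\Big)-\Big(\sum w_ir_i\Big)\Big(\sum w_jr_jt_j\Big)=\sum_{i,j}w_iw_j\,(t_ir_j^2-r_ir_jt_j).
\]
The diagonal terms $i=j$ vanish. Pairing $(i,j)$ with $(j,i)$ gives
\[
t_ir_j^2+t_jr_i^2-r_ir_j(t_i+t_j)=(r_j-r_i)(t_ir_j-t_jr_i),
\]
which I verify by direct expansion. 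The common factor $C_r$ then cancels against the denominator from Step 3, yielding \eqref{eq:spectral_bias_identity}.

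\textbf{Expected obstacle.} The only delicate point is the bookkeeping in Step 1. One must check that $\hat{\bv}$ is exactly orthogonal to $\bone$ even when the null space of $L$ has dimension greater than one, and that indices outside $\mathcal A$ contribute nothing. Once that is settled, the remaining steps are two algebraic symmetrizations.
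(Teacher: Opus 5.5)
Your proposal is correct and follows essentially the same route as the paper: write $n\kappa_n=(AC-B^2)/C$ and $\tilde{\bx}^\top M_{\hat{\bv}}\tilde{\bu}=(TC-BV)/C$ from the eigenbasis sums, then symmetrize pairwise. Your Step 1 also makes explicit something the paper's proof uses without comment. It checks that $R\bone=0$, which follows from $L\bone=0$ and $r(0)=0$, so that $M_{\hat{\bv}}=H_n-\hat{\bv}\hat{\bv}^\top/\|\hat{\bv}\|_2^2$.
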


Equation \eqref{eq:spectral_relevance_variance} gives a simple interpretation:
downstream relevance comes from variation in filter gains across
treatment-active graph frequencies. A common gain on the active spectrum
makes $\hat{\bv}$ proportional to $\tilde{\bx}$ and leaves zero residualized
treatment variation. The distortion identity in
\eqref{eq:spectral_bias_identity} is correspondingly more specific than a
global residual-MSE bound. It compares the relative spectral shape of the
true control, $t_j=u_j/x_j$, with the residual response $r_j$.

\begin{proposition}[Monotone graph-filter barrier]
\label{prop:monotone_barrier}
Order the active graph frequencies and require equal gains within a repeated
eigenspace.  Let $\mathcal K$ be the cone of nonnegative nondecreasing
residual gains, with weighted norm
$\|a\|_w^2=\sum_{j\in\mathcal A}w_ja_j^2$, and let
$\Pi_{\mathcal K}$ be weighted projection onto this cone.  If
$\bu_\perp=\sum_{j\notin\mathcal A}u_j\phi_j$, then
\begin{equation}
    \begin{aligned}
    \inf_{r\in\mathcal K\setminus\{0\}}
    \|M_{R\tilde{\bx}}\bu\|_2^2
    &=
    \|\bu_\perp\|_2^2+\|t\|_w^2\\
    &\quad-
    \max\{
        \|\Pi_{\mathcal K}t\|_w^2,
        \|\Pi_{\mathcal K}(-t)\|_w^2
    \}.
    \end{aligned}
    \label{eq:monotone_barrier}
\end{equation}
The right side is therefore a lower bound for every complementary graph
resolvent $R=I-(I+\lambda L)^{-1}$.
\end{proposition}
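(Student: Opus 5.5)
The plan is to reduce the projective error to a one-dimensional cosine problem in the weighted spectral coordinates and then apply the variational characterization of projection onto a closed convex cone.

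\textbf{Step 1: spectral form of the projective error.} Write $\hat{\bv}=R\tilde{\bx}=\sum_{j\in\mathcal A} r_jx_j\phi_j$. Since $L\bone=0$ and $r(0)=0$, the null-space component of $\tilde{\bx}$ is killed, and $\hat{\bv}$ is orthogonal to $\bone$. Hence $M_{\hat{\bv}}$ acts on $\bu$ as centering followed by removal of the $\hat{\bv}$-direction, which gives
\begin{equation*}
\|M_{\hat{\bv}}\bu\|_2^2=\|\tilde{\bu}\|_2^2-\frac{(\tilde{\bu}^\top\hat{\bv})^2}{\|\hat{\bv}\|_2^2}.
\end{equation*}
In eigen-coordinates, $\tilde{\bu}^\top\hat{\bv}=\sum_{j\in\mathcal A}u_jr_jx_j=\sum_{j\in\mathcal A} w_jt_jr_j=\langle t,r\rangle_w$ and $\|\hat{\bv}\|_2^2=\|r\|_w^2$. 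Also $\sum_{j\in\mathcal A}u_j^2=\|t\|_w^2$, so $\|\tilde{\bu}\|_2^2=\|\bu_\perp\|_2^2+\|t\|_w^2$. Here I read $\bu_\perp$ as the inactive part of the centered control; I will make explicit that the constant eigenvector is inactive and is removed by $M$. This bookkeeping is where I expect the only real care is needed.

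The degenerate case $r|_{\mathcal A}=0$ gives $\hat{\bv}=0$, so $M$ only centers and the value is $\|\tilde{\bu}\|_2^2$. This is at least the right side of \eqref{eq:monotone_barrier}, so it does not affect the infimum. The problem therefore becomes
\begin{equation*}
\sup_{r\in\mathcal K,\ \|r\|_w>0}\frac{\langle t,r\rangle_w^2}{\|r\|_w^2}.
\end{equation*}

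\textbf{Step 2: cone projection lemma.} The set $\mathcal K$ is a closed convex cone in $\R^{\mathcal A}$ under $\langle\cdot,\cdot\rangle_w$: nonnegativity, monotonicity along the ordered frequencies, and equality within repeated eigenspaces are all linear constraints. So $\Pi_{\mathcal K}$ is well defined; it is a weighted isotonic regression with ties and a floor. By Moreau's characterization, $\langle s-\Pi_{\mathcal K}s,r\rangle_w\le0$ for all $r\in\mathcal K$ and $\langle s-\Pi_{\mathcal K}s,\Pi_{\mathcal K}s\rangle_w=0$. Taking $s=t$ gives
\begin{equation*}
\langle t,r\rangle_w\le\langle\Pi_{\mathcal K}t,r\rangle_w\le\|\Pi_{\mathcal K}t\|_w\|r\|_w,
\end{equation*}
with equality at $r=\Pi_{\mathcal K}t$ whenever the projection is nonzero, since $\langle t,\Pi_{\mathcal K}t\rangle_w=\|\Pi_{\mathcal K}t\|_w^2$.

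Now split on the sign of $\langle t,r\rangle_w$. If it is nonnegative, the squared cosine is at most $\|\Pi_{\mathcal K}t\|_w^2$. Otherwise, apply the same bound with $-t$, which gives at most $\|\Pi_{\mathcal K}(-t)\|_w^2$. Thus the supremum equals the maximum of the two, attained by the corresponding projection. If both projections vanish, then $\langle t,r\rangle_w=0$ for every $r\in\mathcal K$, and the supremum is $0$, attained by any nonzero $r$. Substituting into Step 1 yields \eqref{eq:monotone_barrier}.

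\textbf{Step 3: application to the resolvent.} The complementary resolvent has gains $r(\mu)=\lambda\mu/(1+\lambda\mu)$. These are nonnegative and nondecreasing in $\mu$, vanish at $0$, and are automatically equal within a repeated eigenspace because they are a function of $\mu$. So the gain vector lies in $\mathcal K$, and whenever it is nonzero on $\mathcal A$ its projective error is at least the infimum. When it is zero on $\mathcal A$, the degenerate-case remark in Step 1 applies.

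I expect the main obstacle to be Step 1's treatment of the constant mode and of $\bu_\perp$, that is, confirming that $M_{\hat{\bv}}$ removes exactly $\operatorname{span}\{\bone\}$ plus the $\hat{\bv}$-direction. The cone step is standard.
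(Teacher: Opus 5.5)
Your proposal is correct and follows the paper's proof essentially step for step: you use the spectral projection identity $\|M_{R\tilde{\bx}}\bu\|_2^2=\|\bu_\perp\|_2^2+\|t\|_w^2-\langle r,t\rangle_w^2/\|r\|_w^2$, then the sign-safe support identity obtained by applying the cone projection theorem to $t$ and $-t$, and finally the fact that the resolvent gain $\lambda\mu/(1+\lambda\mu)$ lies in $\mathcal K$. You spell out three things the paper leaves implicit: the Moreau inequality behind the support identity, the case where both projections vanish, and a gain vector that vanishes on $\mathcal A$. Also, the orthogonality $R\tilde{\bx}\perp\bone$ used in your Step 1 holds even without $r(0)=0$, because $R$ is symmetric with equal gains on the null eigenspace containing $\bone$.
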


The projection in Proposition \ref{prop:monotone_barrier} is the classical
weighted isotonic regression problem
\citep{ayer1955empirical,barlow1972statistical}. Here it quantifies a
downstream expressivity limit. The bound vanishes exactly when, up to sign,
the active part of the true control lies in the monotone response cone and
$\bu_\perp=0$. Any treatment-inactive component of the control remains
irreducible.
\ref{app:projective_proofs} proves the result and gives an explicit
three-node conductance construction that changes this barrier from
unidentified to zero projective distortion.

\begin{proposition}[Conductance escape from a repeated-eigenvalue obstruction]
\label{prop:conductance_escape}
Let
\begin{equation}
    \phi_g=\frac{1}{\sqrt6}(1,-2,1)^\top,\qquad
    \phi_u=\frac{1}{\sqrt2}(1,0,-1)^\top,
\end{equation}
and set $\tilde{\bx}=a\phi_g+b\phi_u$ and
$\tilde{\bu}=b\phi_u$, where $ab\ne0$.  On the unit-weight complete
three-node graph, every shift-invariant filter with $r(0)=0$ that produces a
nonzero residual gives $\hat{\bv}\propto\tilde{\bx}$ and hence
$\kappa_n(\hat{\bv})=0$.  If conductance adaptation deletes the two edges
incident to node 2 and retains edge $(1,3)$, then, for every $\lambda>0$,
\begin{equation}
    \begin{aligned}
    \{I-(I+\lambda L_1)^{-1}\}\tilde{\bx}
    &=
    \frac{3\lambda}{1+3\lambda}\tilde{\bu},\\
    n\kappa_n&=a^2,\qquad
    \|M_{\hat{\bv}}\bu\|_2=0,
    \end{aligned}
    \label{eq:conductance_escape}
\end{equation}
where
\begin{equation}
    L_1=
    \frac{3}{2}
    \begin{pmatrix}
        1&0&-1\\
        0&0&0\\
        -1&0&1
    \end{pmatrix}.
\end{equation}
Moreover, replacing each deleted edge by conductance $\delta>0$ gives a
connected graph whose projective error converges to zero and whose relevance
converges to $a^2/n$ as $\delta\downarrow0$.
\end{proposition}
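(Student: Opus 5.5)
The argument splits into three parts: the obstruction on the complete graph, exact computation on the cut graph, and a perturbation argument for the connected graph with conductance $\delta$.

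\emph{Obstruction on the complete graph.} For the unit-weight complete graph on three nodes, $D-A=3I-\bone\bone^\top$ and $\bar d=2$. Hence $L=\tfrac{3}{2}H_3$, which has eigenvalue $0$ on $\bone$ and the repeated eigenvalue $3/2$ on the whole centered plane spanned by $\phi_g,\phi_u$. Any shift-invariant filter $r(L)$ with $r(0)=0$ therefore acts as $r(3/2)H_3$. Thus $\hat\bv=r(3/2)\tilde\bx$, and this is nonzero only if $r(3/2)\neq0$. The span of $[\bone,\hat\bv]$ then contains $\bx=\tilde\bx+\bar x\bone$, so $M_{\hat\bv}\bx=0$ and $\kappa_n=0$. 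Equivalently, in Theorem~\ref{thm:spectral_allocation} the common gain gives $\Var_\pi(r)=0$.

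\emph{Cut graph.} With only edge $(1,3)$ of weight one, the degrees are $(1,0,1)$, so $\bar d=2/3$ and $L_1$ is as displayed. Its kernel is spanned by $\bone$ and $e_2$, or equivalently by $\bone$ and $\phi_g$, since $\phi_g\in\mathrm{span}\{\bone,e_2\}$. Its remaining eigenvector is $\phi_u$, with eigenvalue $3$, because $(1,0,-1)$ is mapped to $\tfrac32(2,0,-2)$.

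It follows that $I-(I+\lambda L_1)^{-1}$ annihilates $\phi_g$ and multiplies $\phi_u$ by $1-1/(1+3\lambda)=3\lambda/(1+3\lambda)$. This gives $\hat\bv=\tfrac{3\lambda}{1+3\lambda}b\phi_u=\tfrac{3\lambda}{1+3\lambda}\tilde\bu$. Since $\bu=\tilde\bu+\bar u\bone$ lies in $\mathrm{span}\{\bone,\phi_u\}$, we get $M_{\hat\bv}\bu=0$. Also $M_{\hat\bv}$ projects onto $\mathrm{span}\{\phi_g\}$, because $\bone,\phi_g,\phi_u$ form an orthogonal basis of $\R^3$. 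Hence $\bx^\top M_{\hat\bv}\bx=a^2$, which is $n\kappa_n=a^2$.

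\emph{Connected limit $\delta\downarrow0$.} Give edges $(1,2)$ and $(2,3)$ conductance $\delta$. The graph keeps the reflection symmetry swapping nodes $1$ and $3$. The antisymmetric vector $\phi_u$ therefore remains an eigenvector, and the symmetric centered vector $\phi_g$ is also an eigenvector, since the symmetric centered subspace is one-dimensional.

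I will compute the eigenvalues explicitly. The degrees are $(1+\delta,2\delta,1+\delta)$, so $\bar d_\delta=(2+4\delta)/3$. The unnormalized Laplacian has eigenvalue $2+\delta$ on $\phi_u$ and $3\delta$ on $\phi_g$. After scaling, $\mu_u(\delta)=3(2+\delta)/(2+4\delta)$ and $\mu_g(\delta)=9\delta/(2+4\delta)\to0$.

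The residual is then $\hat\bv_\delta=r_g a\phi_g+r_u b\phi_u$ with $r_\bullet=\lambda\mu_\bullet/(1+\lambda\mu_\bullet)$. Here $r_g\to0$ and $r_u\to3\lambda/(1+3\lambda)>0$. The direction of $\hat\bv_\delta$ therefore converges to $\phi_u$, and the projection $M_{\hat\bv_\delta}$ depends continuously on that direction while the direction stays away from $\bone$. This gives $\|M_{\hat\bv_\delta}\bu\|_2\to0$ and $\bx^\top M_{\hat\bv_\delta}\bx\to a^2$.

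As a cross-check, Theorem~\ref{thm:spectral_allocation} gives closed forms with only two active frequencies, $w_g=a^2$ and $w_u=b^2$. It yields $n\kappa_n=a^2b^2(r_g-r_u)^2/(a^2r_g^2+b^2r_u^2)\to a^2$. In the same two-mode setting, $\|M_{\hat\bv_\delta}\bu\|_2^2=a^2b^2r_g^2/(a^2r_g^2+b^2r_u^2)\to0$.

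\emph{Main obstacle and normalization.} The main thing to get right is the normalization. The claimed limit "$a^2/n$" refers to $\kappa_n=n^{-1}\bx^\top M\bx$, not to $n\kappa_n$, and the $\bar d$ scaling must be tracked consistently. The rest is routine linear algebra.
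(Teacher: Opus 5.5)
Your proposal is correct and follows essentially the same route as the paper: the repeated scaled eigenvalue $3/2$ forces a common gain on the complete graph, the cut Laplacian $L_1$ has $\phi_g$ in its kernel and $\phi_u$ with eigenvalue $3$, and the $\delta$-graph has scaled eigenvalues $\mu_g(\delta)=9\delta/(2+4\delta)$ and $\mu_u(\delta)=3(2+\delta)/(2+4\delta)$. The only difference is presentational: you lead with a continuity-of-projection argument and use the pairwise spectral identity as a cross-check, whereas the paper writes the closed forms $\|M_{\hat{\bv}}\bu\|_2^2=a^2b^2r_g^2/(a^2r_g^2+b^2r_u^2)$ and $n\kappa_n=a^2+b^2-(a^2r_g+b^2r_u)^2/(a^2r_g^2+b^2r_u^2)$ directly, and these agree with yours.
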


The construction identifies an expressivity gap. On the original graph, a
scalar spectral response cannot separate the two active directions; after the
conductance change, it can. The zero-conductance case gives the exact
calculation, and the connected $\delta$-graph shows that the conclusion is
not an artifact of disconnection. The proposition concerns what the adaptive
operator can represent, not whether a data-driven conductance rule always
recovers the oracle cut.

The relevance screen addresses the opposite endpoint from interpolation. If
$S_h\approx0$, then $\hat{\bv}_h\approx\bx$, so projection on the generated
control removes nearly all treatment variation and $\kappa_n(h)$ is small.
When $S_h\approx I$, by contrast, interpolation makes
$\hat{\bv}_h\approx0$. GCV and the relevance screen are therefore needed for
different degeneracies.

\begin{corollary}[Graph-to-control-function perturbation]
\label{cor:graph_causal}
Under Assumptions \ref{ass:additive} and \ref{ass:second_stage}, let $\hat{\bv}_h=(I-S_h)\bx$ and suppose $\kappa_n(h)>0$. Define
\begin{equation}
    \mathcal E_{\epsilon,h}
    =
    \frac{\bx^\top M_{\hat{\bv}_h}\boldsymbol{\epsilon}}
    {\bx^\top M_{\hat{\bv}_h}\bx}.
    \label{eq:sampling_error_h}
\end{equation}
Then
\begin{equation}
    \begin{aligned}
    &|\hat\beta(\hat{\bv}_h)-\beta_0-\mathcal E_{\epsilon,h}|\\
    &\quad\le
    \frac{|\gamma_0|}{\sqrt{n\kappa_n(h)}}
    \|M_{\hat{\bv}_h}\bu\|_2\\
    &\quad\le
    \frac{|\gamma_0|}{\sqrt{n\kappa_n(h)}}
    \bigl\{
        \|(I-S_h)\mathbf{g}\|_2
        +\|S_h\bv^\star\|_2
        +\|\bv^\star-\bu\|_2
    \bigr\}.
    \end{aligned}
    \label{eq:graph_causal_bound}
\end{equation}
Equivalently,
\begin{equation}
    \begin{aligned}
    &|\hat\beta(\hat{\bv}_h)-\beta_0-\mathcal E_{\epsilon,h}|\\
    &\quad\le
    \frac{\sqrt{3}|\gamma_0|}{\sqrt{n\kappa_n(h)}}
    \bigl[
        \|(I-S_h)\mathbf{g}\|_2^2
        +\|S_h\bv^\star\|_2^2
        +\|\bv^\star-\bu\|_2^2
    \bigr]^{1/2}.
    \end{aligned}
\end{equation}
\end{corollary}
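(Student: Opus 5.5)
The corollary follows by specializing Proposition \ref{prop:beta} to $\hat{\bv}=\hat{\bv}_h=(I-S_h)\bx$ and then expanding $\hat{\bv}_h-\bu$ with Proposition \ref{prop:decomp}. Since $\kappa_n(h)=\kappa_n(\hat{\bv}_h)>0$ by definition \eqref{eq:kappa}, the hypotheses of Proposition \ref{prop:beta} hold. Its first inequality in \eqref{eq:beta_bound} is exactly the first line of \eqref{eq:graph_causal_bound}, with $\mathcal E_{\epsilon,h}$ as in \eqref{eq:sampling_error_h}.

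For the second line, I will use the second inequality of \eqref{eq:beta_bound}, which gives $\|M_{\hat{\bv}_h}\bu\|_2\le\|\hat{\bv}_h-\bu\|_2$. The reason is that $M_{\hat{\bv}_h}$ annihilates $\hat{\bv}_h$, so $M_{\hat{\bv}_h}\bu=M_{\hat{\bv}_h}(\bu-\hat{\bv}_h)$, and $M_{\hat{\bv}_h}$ is an orthogonal projection and hence a contraction. I then write $\hat{\bv}_h-\bu=(\hat{\bv}_h-\bv^\star)+(\bv^\star-\bu)$ and substitute Proposition \ref{prop:decomp}:
\[
\hat{\bv}_h-\bu=(I-S_h)\mathbf{g}-S_h\bv^\star+(\bv^\star-\bu).
\]
The triangle inequality gives the three-term sum in \eqref{eq:graph_causal_bound}.

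The equivalent squared form comes from Cauchy--Schwarz in $\R^3$, $a+b+c\le\sqrt3\,(a^2+b^2+c^2)^{1/2}$ for nonnegative $a,b,c$, applied to the three norms.

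None of these steps is a real obstacle. The only care needed is to check that the residual-maker in Proposition \ref{prop:beta} (projection onto $[\bone,\hat{\bv}]$) is the same $M_h$ used to define $\kappa_n(h)$, which holds by definition.
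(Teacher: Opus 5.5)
Your proposal is correct and follows the paper's proof essentially step for step. The paper uses Proposition~\ref{prop:beta} for the first inequality, then the contraction $\|M_{\hat{\bv}_h}\bu\|_2\le\|\bu-\hat{\bv}_h\|_2$ together with Proposition~\ref{prop:decomp} and the triangle inequality for the second, and finally $(a+b+c)^2\le 3(a^2+b^2+c^2)$, which is your Cauchy--Schwarz step in $\R^3$, for the last display.
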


Corollary \ref{cor:graph_causal} carries the graph error into the outcome
regression. Its first bound has the intrinsic linear-CF form: projective
control error divided by remaining treatment variation. The looser
direct-norm bound separates three sources of error: leakage of $g(Z)$ into the
residual, attenuation of $V^\star$ by the smoother, and the mismatch between
$V^\star$ and $U$. The last term disappears in the noiseless control-function
model. This decomposition is diagnostically useful even when it is loose, as
it will be whenever the control direction is correct but its scale is not.

\subsection{Graph-Admissibility Certificate}
\label{sec:diagnostic_certificate}

For a realized smoother $S_h$, define
\begin{align}
    \mathsf{Leak}_n(h)
    &=
    n^{-1/2}\|(I-S_h)\mathbf{g}\|_2, \label{eq:cert_leak}\\
    \mathsf{Atten}_n(h)
    &=
    n^{-1/2}\|S_h\bv^\star\|_2, \label{eq:cert_atten}\\
    \mathsf{Noise}_n
    &=
    n^{-1/2}\|\bv^\star-\bu\|_2, \label{eq:cert_noise}\\
    \mathsf{Proj}_n(h)
    &=
    n^{-1/2}\|M_{\hat{\bv}_h}\bu\|_2, \label{eq:cert_proj}\\
    \mathsf{Rel}_n(h)
    &=
    \kappa_n(h). \label{eq:cert_rel}
\end{align}
The first quantity measures structural leakage. The second measures residual
attenuation. The third measures first-stage noise in the control target. The
fourth is scale-invariant projective control error. The fifth measures
residualized treatment variation after conditioning on the generated control.

\begin{proposition}[Graph-admissibility certificate]
\label{prop:diagnostic_certificate}
Under Assumptions \ref{ass:additive} and \ref{ass:second_stage}, it follows that if $\mathsf{Rel}_n(h)>0$, then
\begin{equation}
    n^{-1/2}\|\hat{\bv}_h-\bv^\star\|_2
    \le
    \mathsf{Leak}_n(h)+\mathsf{Atten}_n(h),
    \label{eq:cert_residual}
\end{equation}
and, with $\mathcal E_{\epsilon,h}$ defined in \eqref{eq:sampling_error_h},
\begin{equation}
    \begin{aligned}
    &|\hat\beta(\hat{\bv}_h)-\beta_0-\mathcal E_{\epsilon,h}|\\
    &\quad\le
    \frac{|\gamma_0|\mathsf{Proj}_n(h)}
    {\sqrt{\mathsf{Rel}_n(h)}}\\
    &\quad\le
    \frac{|\gamma_0|}{\sqrt{\mathsf{Rel}_n(h)}}
    \{\mathsf{Leak}_n(h)+\mathsf{Atten}_n(h)
      +\mathsf{Noise}_n\}.
    \end{aligned}
    \label{eq:cert_beta}
\end{equation}
\end{proposition}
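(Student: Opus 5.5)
The plan is to obtain Proposition \ref{prop:diagnostic_certificate} as a rescaled restatement of Proposition \ref{prop:decomp} and Corollary \ref{cor:graph_causal}. No new estimate is needed, only bookkeeping of the factors of $n^{-1/2}$.

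\emph{Residual bound.} By Proposition \ref{prop:decomp},
\[
\hat{\bv}_h-\bv^\star=(I-S_h)\mathbf g-S_h\bv^\star .
\]
The triangle inequality then gives $\|\hat{\bv}_h-\bv^\star\|_2\le\|(I-S_h)\mathbf g\|_2+\|S_h\bv^\star\|_2$. Multiplying by $n^{-1/2}$ yields \eqref{eq:cert_residual} by the definitions \eqref{eq:cert_leak} and \eqref{eq:cert_atten}. This step does not use $\mathsf{Rel}_n(h)>0$.

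\emph{Coefficient bound.} Since $\mathsf{Rel}_n(h)=\kappa_n(h)>0$, Corollary \ref{cor:graph_causal} applies with the same $\mathcal E_{\epsilon,h}$ as in \eqref{eq:sampling_error_h}. Its first inequality gives
\[
|\hat\beta(\hat{\bv}_h)-\beta_0-\mathcal E_{\epsilon,h}|\le\frac{|\gamma_0|\,\|M_{\hat{\bv}_h}\bu\|_2}{\sqrt{n\kappa_n(h)}} .
\]
Writing $\|M_{\hat{\bv}_h}\bu\|_2=\sqrt n\,\mathsf{Proj}_n(h)$ and $\sqrt{n\kappa_n(h)}=\sqrt n\sqrt{\mathsf{Rel}_n(h)}$, the $\sqrt n$ factors cancel. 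This produces the first inequality in \eqref{eq:cert_beta}.

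For the second inequality, I will use the chain inside Corollary \ref{cor:graph_causal}. It relies on the fact that $M_{\hat{\bv}_h}$ is an orthogonal projection that annihilates $\hat{\bv}_h$. Hence
\[
\|M_{\hat{\bv}_h}\bu\|_2=\|M_{\hat{\bv}_h}(\bu-\hat{\bv}_h)\|_2\le\|\hat{\bv}_h-\bu\|_2\le\|\hat{\bv}_h-\bv^\star\|_2+\|\bv^\star-\bu\|_2 .
\]
Combining this with the residual bound just proved and dividing by $\sqrt n$ gives
\[
\mathsf{Proj}_n(h)\le\mathsf{Leak}_n(h)+\mathsf{Atten}_n(h)+\mathsf{Noise}_n,
\]
which is the last inequality in \eqref{eq:cert_beta}.

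There is no real obstacle. The only care points are that the $\sqrt n$ normalizations match in numerator and denominator, and that the hypothesis $\kappa_n(h)>0$ is exactly what Corollary \ref{cor:graph_causal} requires so that $\hat\beta$ and $\mathcal E_{\epsilon,h}$ are well defined.
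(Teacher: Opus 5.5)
Your proposal is correct and matches the paper's proof: the residual bound is Proposition~\ref{prop:decomp} plus the triangle inequality, rescaled by $n^{-1/2}$, and the coefficient bound is the Cauchy--Schwarz step of Proposition~\ref{prop:beta} combined with $\mathsf{Proj}_n(h)\le n^{-1/2}\|\hat{\bv}_h-\bu\|_2\le\mathsf{Leak}_n(h)+\mathsf{Atten}_n(h)+\mathsf{Noise}_n$. The only difference is that you reach Proposition~\ref{prop:beta} through Corollary~\ref{cor:graph_causal}, which is itself derived from that proposition, so nothing changes.
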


In simulation, $\mathbf{g}$, $\bu$, and $\bv^\star$ are known, so
\eqref{eq:cert_leak}--\eqref{eq:cert_rel} can be reported directly.
$\mathsf{Proj}_n(h)$ is the primary linear-CF diagnostic; the other three
hidden terms provide a sufficient decomposition that also remains meaningful
for nonlinear second stages. In an observational application,
$\mathsf{Proj}_n(h)$, $\mathsf{Leak}_n(h)$, $\mathsf{Atten}_n(h)$, and
$\mathsf{Noise}_n$ are hidden. The implementable checks are graph roughness,
GCV, connectivity, local degree, edge compatibility, and
$\mathsf{Rel}_n(h)$. The experiments vary the corresponding failure modes in
turn: cross-boundary smoothing increases leakage, low-frequency residual
structure increases attenuation, first-stage noise separates $V^\star$ from
$U$, and weak instruments reduce relevance.

\fi

\ifsupplement

\section{Graph-Admissibility Rates and Inference}
\label{sec:asymptotic_details}

The asymptotic analysis begins with a reduction to leakage, attenuation,
noise mismatch, and relevance. We then derive sufficient conditions for that
reduction from a locally scaled graph on a latent piecewise-smooth geometry,
including the relation between the algorithmic parameter $\lambda_n$ and the
continuum smoothing scale.

\subsection{Consistency Under Graph-Admissible Sequences}
\label{sec:consistency}

The finite-sample certificate yields the following abstract reduction. We
state it for a generic sequence of graph smoothers so that the same argument
also covers alternative filters and approximate solves.

\begin{assumption}[Graph-admissible smoother sequence]
\label{ass:graph_admissible}
Consider a triangular array satisfying Assumptions \ref{ass:additive} and \ref{ass:second_stage}. Let $S_n=S_{h_n}$ be a sequence of A-IHF smoothers, where $h_n$ may be fixed, observationally selected, or otherwise measurable with respect to the first-stage sample. Let
\begin{equation}
    \hat{\bv}_n=(I-S_n)\bx_n .
\end{equation}
There exist deterministic sequences $a_{g,n}\to0$, $a_{v,n}\to0$, and $a_{x,n}\ge0$, and a constant $\kappa_0>0$, such that
\begin{align}
    n^{-1/2}\|(I-S_n)\mathbf{g}_n\|_2 &= O_p(a_{g,n}), \label{eq:ag}\\
    n^{-1/2}\|S_n\bv_n^\star\|_2 &= O_p(a_{v,n}), \label{eq:av}\\
    n^{-1/2}\|\bv_n^\star-\bu_n\|_2 &= O_p(a_{x,n}), \label{eq:ax}\\
    \Pr\{\kappa_n(\hat{\bv}_n)\ge \kappa_0\} &\to 1, \label{eq:kappa_asymp}\\
    \left|
    \frac{\bx_n^\top M_{\hat{\bv}_n}\boldsymbol{\epsilon}_n}
    {\bx_n^\top M_{\hat{\bv}_n}\bx_n}
    \right| &= O_p(n^{-1/2}). \label{eq:sampling_asymp}
\end{align}
\end{assumption}

Conditions \eqref{eq:ag} and \eqref{eq:av} control the two sides of the
first-stage decomposition: the smoother must preserve the systematic
component while removing the part of $V^\star$ assigned to the fit. Condition
\eqref{eq:ax} measures the difference between the graph residual target and
the outcome-relevant control. The remaining two conditions ensure,
respectively, second-stage relevance and the usual sampling behavior of the
outcome noise after partialling out the generated control.

\begin{proposition}[Consistency reduction to graph admissibility]
\label{prop:consistency_reduction}
Under Assumption \ref{ass:graph_admissible},
\begin{equation}
    n^{-1/2}\|\hat{\bv}_n-\bv_n^\star\|_2
    =
    O_p(a_{g,n}+a_{v,n}).
    \label{eq:residual_consistency}
\end{equation}
If $a_{g,n}+a_{v,n}\to0$, then the generated control is root-mean-square consistent for $V^\star$. If also $a_{x,n}\to0$, it is root-mean-square consistent for $U$. Under the linear second-stage model in Assumption \ref{ass:second_stage},
\begin{equation}
    |\hat\beta(\hat{\bv}_n)-\beta_0|
    =
    O_p(a_{g,n}+a_{v,n}+a_{x,n})+O_p(n^{-1/2}).
    \label{eq:beta_consistency_rate}
\end{equation}
In particular, if $a_{x,n}\to0$, then $\hat\beta(\hat{\bv}_n)\xrightarrow{p}\beta_0$.
\end{proposition}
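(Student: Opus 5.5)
The plan is to combine the exact residual decomposition of Proposition~\ref{prop:decomp} with the coefficient bound of Proposition~\ref{prop:beta}, as packaged in Corollary~\ref{cor:graph_causal}, and then convert the finite-sample inequalities into stochastic orders using Assumption~\ref{ass:graph_admissible}.

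\textbf{Residual consistency.} By Proposition~\ref{prop:decomp}, $\hat{\bv}_n-\bv_n^\star=(I-S_n)\mathbf{g}_n-S_n\bv_n^\star$. The triangle inequality, scaled by $n^{-1/2}$, gives
\[
n^{-1/2}\|\hat{\bv}_n-\bv_n^\star\|_2\le n^{-1/2}\|(I-S_n)\mathbf{g}_n\|_2+n^{-1/2}\|S_n\bv_n^\star\|_2 .
\]
By \eqref{eq:ag}--\eqref{eq:av} the right side is $O_p(a_{g,n})+O_p(a_{v,n})=O_p(a_{g,n}+a_{v,n})$; since both sequences are nonnegative, the sum of the two $O_p$ terms is $O_p$ of the sum. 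This proves \eqref{eq:residual_consistency}, and consistency for $V^\star$ follows when $a_{g,n}+a_{v,n}\to0$. Adding $n^{-1/2}\|\bv_n^\star-\bu_n\|_2=O_p(a_{x,n})$ by the triangle inequality gives consistency for $U$ when, in addition, $a_{x,n}\to0$.

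\textbf{Coefficient error.} The step needing care is the random denominator $\kappa_n$. Let $E_n=\{\kappa_n(\hat{\bv}_n)\ge\kappa_0\}$; then $\Pr(E_n)\to1$ by \eqref{eq:kappa_asymp}. On $E_n$ we have $\kappa_n>0$, so Corollary~\ref{cor:graph_causal} applies. Rewritten in scaled norms, it gives
\[
|\hat\beta-\beta_0|\le|\mathcal E_{\epsilon,n}|+\frac{|\gamma_0|}{\sqrt{\kappa_0}}\bigl\{n^{-1/2}\|(I-S_n)\mathbf{g}_n\|_2+n^{-1/2}\|S_n\bv_n^\star\|_2+n^{-1/2}\|\bv_n^\star-\bu_n\|_2\bigr\}.
\]
The factor $\sqrt{n\kappa_n}$ in the corollary, combined with $n^{-1/2}$ scaling of each norm, gives $1/\sqrt{\kappa_n}\le1/\sqrt{\kappa_0}$. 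Since $\gamma_0$ and $\kappa_0$ are fixed constants, the braced term is $O_p(a_{g,n}+a_{v,n}+a_{x,n})$, and $\mathcal E_{\epsilon,n}=O_p(n^{-1/2})$ by \eqref{eq:sampling_asymp}.

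\textbf{Passing to unconditional orders.} An inequality holding on events whose probability tends to one transfers to stochastic orders. For any $\varepsilon>0$, choose $M$ so that the right-hand bound exceeds $M(a_{g,n}+a_{v,n}+a_{x,n})+Mn^{-1/2}$ with probability below $\varepsilon/2$ for large $n$. Then
\[
\Pr\{|\hat\beta-\beta_0|>M(\cdots)\}\le\varepsilon/2+\Pr(E_n^c)<\varepsilon
\]
eventually. This proves \eqref{eq:beta_consistency_rate}.

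\textbf{Final consistency claim.} If $a_{x,n}\to0$ together with the standing $a_{g,n},a_{v,n}\to0$, then every rate term vanishes, and $O_p$ of a null sequence is $o_p(1)$. Hence $\hat\beta(\hat{\bv}_n)\xrightarrow{p}\beta_0$.

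I expect no genuine obstacle here. The only delicate point is the event-restriction step, since $\hat\beta$ is undefined or unbounded when $\kappa_n=0$. I handle it by working on $E_n$ and using $\Pr(E_n^c)\to0$, rather than attempting any uniform bound.
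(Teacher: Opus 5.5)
Your proposal is correct and is essentially the paper's proof. The residual rate comes from Proposition~\ref{prop:decomp} and the triangle inequality. The coefficient rate comes from the Frisch--Waugh--Lovell decomposition with Cauchy--Schwarz on the event $\kappa_n(\hat{\bv}_n)\ge\kappa_0$, which you reach through Corollary~\ref{cor:graph_causal} rather than rederiving it; you also spell out the transfer from that high-probability event to unconditional $O_p$ orders, which the paper leaves implicit.
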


\begin{proof}
By Proposition \ref{prop:decomp},
\begin{equation}
    \hat{\bv}_n-\bv_n^\star
    =
    (I-S_n)\mathbf{g}_n-S_n\bv_n^\star .
\end{equation}
Taking norms and applying \eqref{eq:ag}--\eqref{eq:av} gives \eqref{eq:residual_consistency}. For the second stage, use the exact decomposition in the proof of Proposition \ref{prop:beta}:
\begin{equation}
    \hat\beta(\hat{\bv}_n)-\beta_0
    =
    \gamma_0
    \frac{\bx_n^\top M_{\hat{\bv}_n}(\bu_n-\hat{\bv}_n)}
    {\bx_n^\top M_{\hat{\bv}_n}\bx_n}
    +
    \frac{\bx_n^\top M_{\hat{\bv}_n}\boldsymbol{\epsilon}_n}
    {\bx_n^\top M_{\hat{\bv}_n}\bx_n}.
\end{equation}
On the event $\kappa_n(\hat{\bv}_n)\ge\kappa_0$,
\begin{align}
    \left|
    \frac{\bx_n^\top M_{\hat{\bv}_n}(\bu_n-\hat{\bv}_n)}
    {\bx_n^\top M_{\hat{\bv}_n}\bx_n}
    \right|
    &\le
    \frac{\|M_{\hat{\bv}_n}\bx_n\|_2\,
    \|\bu_n-\hat{\bv}_n\|_2}
    {n\kappa_n(\hat{\bv}_n)} \\
    &=
    \frac{n^{-1/2}\|\bu_n-\hat{\bv}_n\|_2}
    {\sqrt{\kappa_n(\hat{\bv}_n)}} \\
    &=
    O_p(a_{g,n}+a_{v,n}+a_{x,n}).
\end{align}
The sampling term is controlled by \eqref{eq:sampling_asymp}. This proves \eqref{eq:beta_consistency_rate}.
\end{proof}

The proposition makes the sources of inconsistency explicit. Cross-boundary
smoothing can violate \eqref{eq:ag}; low-frequency graph structure in
$V^\star$ can violate \eqref{eq:av}; persistent first-stage noise prevents
\eqref{eq:ax} from vanishing; and a control nearly collinear with $X$ violates
\eqref{eq:kappa_asymp}. These four mechanisms determine the organization of
the empirical study.

\subsection{Concrete Graph-Admissibility Conditions}
\label{sec:concrete_admissibility}

We now turn to sufficient geometric conditions for residual recovery.
Relevance, the mismatch between $V^\star$ and $U$, and outcome-noise sampling
remain separate requirements. Let $T_i\in\Mcal$ be a latent coordinate on a
compact $m$-dimensional Riemannian manifold whose volume density is bounded
above and below. The graph may use $Z_i$ directly or a representation
$\psi(Z_i)$, provided that the representation preserves local neighborhoods.
Learning such a representation is outside the present analysis.

\begin{assumption}[Latent geometry and piecewise smoothness]
\label{ass:geometry}
The variables $T_i$ are i.i.d. on $\Mcal$. There is a representation $\psi$ and constants $0<c_\psi<C_\psi<\infty$ such that, with probability tending to one,
\begin{equation}
    c_\psi d_\Mcal(T_i,T_j)
    \le
    \|\psi(Z_i)-\psi(Z_j)\|_2
    \le
    C_\psi d_\Mcal(T_i,T_j)
\end{equation}
for all pairs with $d_\Mcal(T_i,T_j)\le c_0$. For
$\varepsilon_n=(K_n/n)^{1/m}$, suppose there is a constant $c_E<\infty$
such that every local graph edge has intrinsic length at most
$c_E\varepsilon_n$ with probability tending to one. The manifold is
partitioned into finitely many closed regions
\begin{equation}
    \Mcal=\bigcup_{\ell=1}^R \Mcal_\ell,
\end{equation}
with piecewise $C^1$ boundaries. On each region, $g$ belongs to a standard
H\"older class of order $s\in(0,2]$. Define
$\alpha_s=\min\{s,1\}$. We assume the corresponding edge-increment bound
\begin{equation}
    |g(T_i)-g(T_j)|
    \le L_gd_\Mcal(T_i,T_j)^{\alpha_s}
    \quad\text{for }T_i,T_j\in\Mcal_\ell .
    \label{eq:within_holder_increment}
\end{equation}
For $s>1$, this is the Lipschitz consequence of bounded first derivatives,
not an $O(d_\Mcal^s)$ increment claim. Each regional restriction has a
one-sided trace $g_\ell^\partial$ on its boundary. Whenever
$\Mcal_\ell$ and $\Mcal_k$ share a boundary across which a local edge may
pass, suppose
\begin{equation}
    \inf_{t\in\partial\Mcal_\ell\cap\partial\Mcal_k}
    |g_\ell^\partial(t)-g_k^\partial(t)|
    \ge \Delta
\end{equation}
for a constant $\Delta>0$. It follows that the actual local cross-region
edge set $E_{\mathrm{jump}}$ satisfies, with probability tending to one,
\begin{equation}
    \inf_{(i,j)\in E_{\mathrm{jump}}}|g(T_i)-g(T_j)|
    \ge
    \Delta-2L_g(c_E\varepsilon_n)^{\alpha_s}.
    \label{eq:jump_gap}
\end{equation}
\end{assumption}

\begin{remark}[Geometry and included controls]
\label{rem:geometry_controls}
Assumption \ref{ass:geometry} is a condition on the metric representation used by the graph. It is not a claim that raw excluded instruments, discrete controls, and fixed effects jointly form a smooth Riemannian manifold. Discrete included controls can be handled by stratification or localization before graph construction. In applications that concatenate standardized controls with excluded instruments, the graph is an empirical first-stage feature graph. The geometric rate theory should then be read conditionally on a representation that preserves the relevant local neighborhoods. The four real-IV applications in Section \ref{sec:real_iv_applications} are interpreted in this empirical sense, not as evidence for the manifold assumption.
\end{remark}

Let $K=K_n$ and define the intrinsic graph scale
\begin{equation}
    \varepsilon_n=\left(\frac{K_n}{n}\right)^{1/m}.
\end{equation}
We use the nearest-neighbor regime
\begin{equation}
    \frac{K_n}{\log n}\to\infty,\qquad
    \frac{K_n\log n}{n}\to 0.
    \label{eq:knn_regime}
\end{equation}
Under standard concentration conditions for random geometric graphs, graph edges have intrinsic length $O_p(\varepsilon_n)$ away from boundary effects \citep{belkin2007convergence,hein2005graphs,singer2006graph,ting2010analysis,garciatrillos2018variational}. The first condition is the usual connectivity-side condition for local graphs. The second keeps neighborhoods local. After anisotropic weighting, the final graph may disconnect across jump boundaries by construction; the connectivity requirement applies to the separated within-region graph.

The scaled Laplacian in \eqref{eq:scaled_laplacian} is not divided by $\varepsilon_n^2$. On smooth functions supported away from region boundaries, the low-frequency scaling has the form
\begin{equation}
    L_n^0 f
    =
    c_L\varepsilon_n^2(-\Delta_{\Mcal} f)
    +e_{n,f},
    \qquad
    \|e_{n,f}\|_{L_2(P_n)}\le \zeta_n\|f\|_{\Scal},
    \label{eq:laplacian_scaling}
\end{equation}
for a graph-dependent constant $c_L>0$ and a smoothness norm $\|\cdot\|_{\Scal}$. Hence the final resolvent $(I+\lambda_n L_n)^{-1}$ corresponds to a continuum heat scale
\begin{equation}
    t_n\asymp \lambda_n\varepsilon_n^2 ,
    \qquad
    t_{p,n}\asymp \tau_n\varepsilon_n^2 ,
    \label{eq:lambda_to_t}
\end{equation}
for the final and pilot diffusions. Constants such as $c_L$ are absorbed into $t_n$. The rate statements below use $t_n$ as shorthand for the algorithmic scale $\lambda_n\varepsilon_n^2$.

\begin{assumption}[Pilot accuracy and threshold separation]
\label{ass:pilot_threshold}
Let $\tilde{\bx}$ be the pilot diffusion in \eqref{eq:pilot}. There is a sequence $r_{p,n}\to0$ such that
\begin{equation}
    \|\tilde{\bx}-\mathbf{g}\|_\infty\le r_{p,n}
    \label{eq:pilot_sup}
\end{equation}
with probability tending to one. Write
$\rho_{g,n}=L_g(c_E\varepsilon_n)^{\alpha_s}$. The conductance scale
$\gamma_n$ satisfies
\begin{equation}
    \frac{(\rho_{g,n}+2r_{p,n})^2}{\gamma_n}\to0,
    \qquad
    \frac{\gamma_n}{(\Delta-2\rho_{g,n}-2r_{p,n})^2}\to0 .
    \label{eq:threshold_separation}
\end{equation}
\end{assumption}

Assumption \ref{ass:pilot_threshold} is a sufficient condition for analyzing
jump separation by the pilot. It is neither an observational diagnostic nor a
claim about arbitrary noisy first stages.

\begin{lemma}[Jump separation by anisotropic conductance]
\label{lem:jump_separation}
Suppose Assumptions \ref{ass:geometry} and \ref{ass:pilot_threshold} hold. For
local graph edges contained in a single smooth region,
\begin{equation}
    \min_{(i,j)\in E_{\mathrm{in}}} C_{ij}\to 1
\end{equation}
in probability. For the actual local graph edges that cross a boundary with
the one-sided trace gap in Assumption \ref{ass:geometry},
\begin{equation}
    \max_{(i,j)\in E_{\mathrm{jump}}} C_{ij}\to 0
\end{equation}
in probability.
\end{lemma}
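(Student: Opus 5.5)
We work on the intersection of the high-probability events in Assumptions \ref{ass:geometry} and \ref{ass:pilot_threshold}: every local edge has intrinsic length at most $c_E\varepsilon_n$, the cross-region gap bound \eqref{eq:jump_gap} holds, and $\|\tilde{\bx}-\mathbf{g}\|_\infty\le r_{p,n}$. Since finitely many events each have probability tending to one, their intersection does too, so it suffices to prove both conclusions deterministically on this event. The main tool is the triangle inequality
\[
|\tilde x_i-\tilde x_j|\le |g(T_i)-g(T_j)|+2r_{p,n},\qquad
|\tilde x_i-\tilde x_j|\ge |g(T_i)-g(T_j)|-2r_{p,n},
\]
which transfers increments of $g$ to pilot contrasts. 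The conductance \eqref{eq:conductance} is monotone in the squared pilot contrast.

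\emph{Within-region edges.} For $(i,j)\in E_{\mathrm{in}}$, the increment bound \eqref{eq:within_holder_increment} and the edge-length bound give $|g(T_i)-g(T_j)|\le L_g(c_E\varepsilon_n)^{\alpha_s}=\rho_{g,n}$. Hence $|\tilde x_i-\tilde x_j|\le\rho_{g,n}+2r_{p,n}$ uniformly over such edges, and
\[
\min_{E_{\mathrm{in}}}C_{ij}\ge\exp\{-(\rho_{g,n}+2r_{p,n})^2/\gamma_n\}\to1
\]
by the first part of \eqref{eq:threshold_separation}.

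\emph{Jump edges.} For $(i,j)\in E_{\mathrm{jump}}$, \eqref{eq:jump_gap} gives $|g(T_i)-g(T_j)|\ge\Delta-2\rho_{g,n}$. Therefore $|\tilde x_i-\tilde x_j|\ge\Delta-2\rho_{g,n}-2r_{p,n}$, which is positive for large $n$ because the second part of \eqref{eq:threshold_separation} forces this quantity to dominate $\sqrt{\gamma_n}$. Then
\[
\max_{E_{\mathrm{jump}}}C_{ij}\le\exp\{-(\Delta-2\rho_{g,n}-2r_{p,n})^2/\gamma_n\}\to0.
\]

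\emph{Main obstacle.} The delicate point is that $\gamma_n$ in the algorithm is a data-dependent percentile, whereas Assumption \ref{ass:pilot_threshold} treats it as a sequence satisfying \eqref{eq:threshold_separation}. I would read the assumption as holding with probability tending to one for the realized $\gamma_n$, so that both ratios converge in probability. The bounds above are deterministic given these ratios, so convergence in probability of $C_{ij}$ then follows by composing with the continuous map $x\mapsto e^{-x}$.

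A second point needs checking: the within-region bound requires both endpoints to lie in a single closed region, which is exactly how $E_{\mathrm{in}}$ is defined. Edges that are neither in $E_{\mathrm{in}}$ nor in $E_{\mathrm{jump}}$ (for example, those near corners where several regions meet) are not covered by the lemma and need no treatment.
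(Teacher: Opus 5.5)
Your proposal is correct and follows the paper's proof essentially line for line. On the joint high-probability event you transfer the within-region H\"older increment and the jump gap to pilot contrasts through $\|\tilde{\bx}-\mathbf{g}\|_\infty\le r_{p,n}$, and then apply the two limits of the threshold-separation condition to $C_{ij}=\exp\{-(\tilde x_i-\tilde x_j)^2/\gamma_n\}$.

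One small correction concerns the positivity you need before squaring the jump-edge lower bound. The limit $\gamma_n/(\Delta-2\rho_{g,n}-2r_{p,n})^2\to0$ by itself controls only $|\Delta-2\rho_{g,n}-2r_{p,n}|$, not its sign. Positivity should instead come from multiplying the two limits, which gives $(\rho_{g,n}+2r_{p,n})/|\Delta-2\rho_{g,n}-2r_{p,n}|\to0$ and so rules out $2\rho_{g,n}+2r_{p,n}\ge\Delta$ for large $n$. The paper leaves this step implicit.
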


\begin{lemma}[Percentile threshold]
\label{lem:percentile_threshold}
Let $D_{ij}=(\tilde x_i-\tilde x_j)^2$ on graph edges and let
$\hat\gamma_n$ be the implemented scale: the empirical $\theta$-quantile,
$\theta=p/100$, of edge values exceeding $10^{-12}$, followed by the
$10^{-12}$ floor, with value one before flooring if the positive-difference
set is empty. Suppose there is a deterministic sequence $b_n>0$ and a
constant $c_\gamma<\infty$ such that, with probability tending to one,
\begin{align}
    \max_{(i,j)\in E_{\mathrm{in}}}D_{ij} &\le c_\gamma \hat\gamma_n, \label{eq:percentile_within}\\
    \hat\gamma_n/b_n &\to0, \label{eq:percentile_gap}\\
    \min_{(i,j)\in E_{\mathrm{jump}}}D_{ij} &\ge b_n . \label{eq:percentile_jump}
\end{align}
Then the percentile rule yields
\begin{equation}
    \min_{(i,j)\in E_{\mathrm{in}}} C_{ij}\ge e^{-c_\gamma}+o_p(1),
    \qquad
    \max_{(i,j)\in E_{\mathrm{jump}}} C_{ij}\to0 .
\end{equation}
\end{lemma}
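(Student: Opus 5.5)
The plan is to work on the high-probability event $\mathcal G_n$ on which \eqref{eq:percentile_within}--\eqref{eq:percentile_jump} all hold. Since $\Pr(\mathcal G_n)\to1$, any conclusion proved deterministically on $\mathcal G_n$ (up to vanishing terms) transfers to convergence in probability. The percentile construction enters only through the realized value $\hat\gamma_n$. The conductance is the fixed monotone map $C_{ij}=\exp(-D_{ij}/\hat\gamma_n)$, so both claims reduce to controlling the ratio $D_{ij}/\hat\gamma_n$ uniformly over the relevant edge sets. First I would check that the implemented $\hat\gamma_n$, including the empty-set default and the $10^{-12}$ floor, is a well-defined positive number. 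This guarantees that $C_{ij}\in(0,1]$, and the hypotheses are stated directly for this implemented value, so no further case analysis is needed.

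For the within-region edges, on $\mathcal G_n$ every $(i,j)\in E_{\mathrm{in}}$ satisfies $D_{ij}/\hat\gamma_n\le c_\gamma$ by \eqref{eq:percentile_within}. Since $x\mapsto e^{-x}$ is decreasing, $\min_{E_{\mathrm{in}}}C_{ij}\ge e^{-c_\gamma}$ holds on $\mathcal G_n$. Off $\mathcal G_n$ the minimum still lies in $[0,1]$, so the deficit $(e^{-c_\gamma}-\min C_{ij})_+$ is nonzero only on $\mathcal G_n^c$, whose probability tends to zero. This deficit is therefore $o_p(1)$, which gives the first claim.

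For the jump edges, on $\mathcal G_n$ every $(i,j)\in E_{\mathrm{jump}}$ satisfies $D_{ij}/\hat\gamma_n\ge b_n/\hat\gamma_n$ by \eqref{eq:percentile_jump}. Hence
\begin{equation*}
    \max_{E_{\mathrm{jump}}}C_{ij}\le \exp(-b_n/\hat\gamma_n).
\end{equation*}
By \eqref{eq:percentile_gap}, $\hat\gamma_n/b_n\to0$ in probability, so for any $M>0$ the event $\{b_n/\hat\gamma_n\ge M\}$ has probability tending to one. Then for any $\eta>0$, choosing $M=\log(1/\eta)$ gives $\Pr\{\max_{E_{\mathrm{jump}}}C_{ij}>\eta\}\le\Pr(\mathcal G_n^c)+\Pr\{b_n/\hat\gamma_n<M\}\to0$. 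If $E_{\mathrm{jump}}$ is empty, the maximum is taken to be zero and the claim is trivial.

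The only delicate point is the interpretation of the convergence modes. Condition \eqref{eq:percentile_gap} is about a random ratio, so I would read it as convergence in probability and intersect the events explicitly as above. I would also note that the quantile level $\theta$ plays no role beyond defining $\hat\gamma_n$: the real content, namely verifying \eqref{eq:percentile_within}--\eqref{eq:percentile_jump} from Lemma \ref{lem:jump_separation}-type pilot accuracy, is assumed rather than proved here. No step is a serious obstacle.
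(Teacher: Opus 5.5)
Your proposal is correct and follows the paper's proof: on the joint high-probability event, monotonicity of $x\mapsto e^{-x}$ gives $C_{ij}\ge e^{-c_\gamma}$ on within-region edges and $C_{ij}\le\exp(-b_n/\hat\gamma_n)\to0$ on jump edges. The only difference is that you spell out the convergence-in-probability bookkeeping (the $o_p(1)$ deficit off the event, and reading $\hat\gamma_n/b_n\to0$ in probability), which the paper leaves implicit.
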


Lemma \ref{lem:jump_separation} gives uniform within-region conductance for a
deterministic threshold. The percentile implementation needs the weaker
conclusion in Lemma \ref{lem:percentile_threshold}: within-region weights stay
comparable while cross-jump weights vanish. Both lemmas motivate comparison
with a separated graph. Neither, however, is sufficient for the operator-rate
condition $\eta_n=o(1)$, which must be imposed separately.

\begin{assumption}[Bridge condition for the separated graph]
\label{ass:smoothing_rate}
Let $B_n$ be the set of observations within intrinsic distance $c\varepsilon_n$ of a region boundary, and define
\begin{equation}
    \xi_n=\left(\frac{|B_n|}{n}\right)^{1/2}.
\end{equation}
Let $W_n^0$ be an oracle separated weight matrix that retains within-region
edges, including within-region edges incident to $B_n$, and deletes
cross-region edges. Outside $B_n$, its retained weights are comparable to the
original affinities. Behavior inside $B_n$ is accounted for by the explicit
boundary term below. Let
\begin{equation}
    L_n^0=L(W_n^0),\qquad
    S_n^0=(I+\lambda_n L_n^0)^{-1}.
\end{equation}
Let $t_n\asymp\lambda_n\varepsilon_n^2$ be the continuum diffusion scale induced by the algorithmic parameter $\lambda_n$, and let $\zeta_n$ be the fixed-graph approximation error for the separated graph. The bias term below is the heat-semigroup truncation bias associated with the graph-to-manifold scaling in \eqref{eq:laplacian_scaling}; $t_n$ is not an additional algorithmic input. Assume $\zeta_n\to0$ and
\begin{align}
    n^{-1/2}\|(I-S_n^0)\mathbf{g}_n\|_2
    &\le
    C\{t_n^{s/2}+\zeta_n+\xi_n\},
    \label{eq:oracle_graph_bias}\\
    \Tr\{(S_n^0)^2\}
    &\le
    C t_n^{-m/2}.
    \label{eq:effective_dimension}
\end{align}
Here $S_n$ is the A-IHF smoother after the anisotropic conductance step. The term $t_n^{s/2}$ is the heat-smoothing bias on an $m$-dimensional geometry. The term $\zeta_n$ records fixed-graph approximation. The term $\xi_n$ records the boundary band. For piecewise $C^1$ boundaries and bounded density, $\xi_n=O_p(\varepsilon_n^{1/2})$.
\end{assumption}

Graph-Laplacian and spectral approximations for fixed, nonadaptive local
graphs are well understood under geometric regularity
\citep{belkin2007convergence,vonluxburg2008consistency,garciatrillos2018variational}.
Assumption \ref{ass:smoothing_rate} separately imposes the heat-semigroup bias
and effective-dimension bounds used below; those bounds are not inferred from
the cited spectral results alone. Because A-IHF chooses conductance from the
data, our theorem stops short of a continuum limit for the percentile rule.
Instead, it summarizes the adaptive-graph discrepancy through $\eta_n$.

\begin{lemma}[Resolvent perturbation by leaked graph mass]
\label{lem:resolvent_leakage}
Let $S_n=(I+\lambda_n L_n)^{-1}$ and $S_n^0=(I+\lambda_n L_n^0)^{-1}$, where $L_n$ and $L_n^0$ are symmetric positive semidefinite. Define
\begin{equation}
    \eta_n=\lambda_n\opnorm{L_n-L_n^0}.
    \label{eq:eta_operator}
\end{equation}
If $n^{-1/2}\|\bx_n\|_2=O_p(1)$, then
\begin{equation}
    n^{-1/2}\|(S_n-S_n^0)\bx_n\|_2
    =
    O_p(\eta_n).
    \label{eq:resolvent_leakage_bound}
\end{equation}
If
\begin{equation}
    \begin{aligned}
    0<c_d&\le \bar d_n^0,\bar d_n\le C_d<\infty,\\
    \opnorm{L_n^0}\vee\opnorm{L_n}&\le C_L,\\
    \frac{|\bar d_n-\bar d_n^0|}{\bar d_n^0}&\le C\ell_n,
    \end{aligned}
    \label{eq:mean_degree_scaling}
\end{equation}
and
\begin{equation}
    \ell_n=
    \frac{1}{\bar d_n^0}
    \max_i
    \sum_j |W_{n,ij}-W^0_{n,ij}|,
    \label{eq:leaked_degree}
\end{equation}
then $\opnorm{L_n-L_n^0}\le C\ell_n$. Thus $\eta_n\le C\lambda_n\ell_n$.
\end{lemma}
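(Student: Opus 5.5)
The proof has two parts: a standard resolvent perturbation identity for the first claim, and a degree-based operator-norm bound for the second.

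\emph{First claim.} Write $A_n=I+\lambda_nL_n$ and $A_n^0=I+\lambda_nL_n^0$. Both are symmetric with eigenvalues at least one, because $L_n$ and $L_n^0$ are positive semidefinite. Hence $\opnorm{S_n}\le1$ and $\opnorm{S_n^0}\le1$. The second resolvent identity gives
\begin{equation*}
S_n-S_n^0=A_n^{-1}(A_n^0-A_n)(A_n^0)^{-1}=-\lambda_n S_n(L_n-L_n^0)S_n^0,
\end{equation*}
so $\opnorm{S_n-S_n^0}\le\lambda_n\opnorm{L_n-L_n^0}=\eta_n$. Therefore
\begin{equation*}
n^{-1/2}\|(S_n-S_n^0)\bx_n\|_2\le\eta_n\,n^{-1/2}\|\bx_n\|_2=O_p(\eta_n).
\end{equation*}
This step is routine.

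\emph{Second claim.} Split the difference of scaled Laplacians into a raw-Laplacian term and a normalization term:
\begin{equation*}
L_n-L_n^0=\frac{(D_n-W_n)-(D_n^0-W_n^0)}{\bar d_n}+\Bigl(\frac{\bar d_n^0}{\bar d_n}-1\Bigr)L_n^0 .
\end{equation*}
This holds because $(D_n^0-W_n^0)/\bar d_n=(\bar d_n^0/\bar d_n)L_n^0$.
\begin{itemize}
\item For the normalization term, bound $|\bar d_n^0/\bar d_n-1|=|\bar d_n-\bar d_n^0|/\bar d_n\le (C_d/c_d)\cdot C\ell_n$ using the mean-degree ratio assumption. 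Combined with $\opnorm{L_n^0}\le C_L$, the term is $O(\ell_n)$.
\item For the raw-Laplacian term, let $\Delta=W_n-W_n^0$, which is symmetric. The raw Laplacian difference is $D_\Delta-\Delta$, where $D_\Delta$ is the diagonal with entries $\sum_j\Delta_{ij}$.
\begin{itemize}
\item The diagonal part has operator norm at most $\max_i\sum_j|\Delta_{ij}|$.
\item The symmetric part $\Delta$ has operator norm at most $\max_i\sum_j|\Delta_{ij}|$ by the Schur test, since for symmetric matrices $\opnorm{\Delta}\le\sqrt{\|\Delta\|_1\|\Delta\|_\infty}$.
\end{itemize}
Hence $\opnorm{(D_n-W_n)-(D_n^0-W_n^0)}\le 2\bar d_n^0\ell_n$. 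Dividing by $\bar d_n\ge c_d$ and using $\bar d_n^0\le C_d$ gives a bound of $2(C_d/c_d)\ell_n$.
\end{itemize}
Summing the two bounds yields $\opnorm{L_n-L_n^0}\le C\ell_n$ with $C$ depending only on $C_d$, $c_d$, $C_L$ and the degree-ratio constant. Multiplying by $\lambda_n$ gives $\eta_n\le C\lambda_n\ell_n$.

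The only step needing care is this normalization bookkeeping. The scaled Laplacians use different denominators $\bar d_n$ and $\bar d_n^0$, so the difference is not simply the Laplacian of $\Delta$. The decomposition above handles this, using the uniform degree bounds and the assumed relative mean-degree control. No probabilistic argument is needed beyond the $O_p$ bound on $n^{-1/2}\|\bx_n\|_2$.
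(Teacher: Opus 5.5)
Your proposal is correct and follows the paper's proof. For the first claim you use the resolvent identity with both resolvents contractive, giving $\opnorm{S_n-S_n^0}\le\eta_n$. For the second you split $L_n-L_n^0$ into a raw-Laplacian part, bounded by the maximal absolute row sum of $W_n-W_n^0$ (Gershgorin/Schur), and a normalization-mismatch part, controlled by $|\bar d_n-\bar d_n^0|/\bar d_n^0$ and the operator-norm bound.

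The only difference is cosmetic. You take $\bar d_n$ as the common denominator and put the mismatch on $L_n^0$, which is why you need the two-sided mean-degree bounds. Taking $\bar d_n^0$ as the common denominator and putting the mismatch on $L_n$, as the paper's wording suggests, bounds the mismatch term directly by $C_L\,|\bar d_n-\bar d_n^0|/\bar d_n^0$ without using $c_d$ or $C_d$.
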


Because $L(W)$ is invariant to multiplying all weights by a common positive
constant, the bounded-mean-degree display in
\eqref{eq:mean_degree_scaling} can be imposed after a common deterministic
rescaling of $W_n$ and $W_n^0$. It is not a bounded-$K_n$ assumption.

\begin{assumption}[Residual spectral regularity]
\label{ass:residual_spectral}
Conditional on the latent geometry and on the separated graph, $\bv_n^\star$ is mean zero and sub-Gaussian with covariance $\sigma_V^2 I_n$. More generally, the analysis uses the following operative condition:
\begin{equation}
    n^{-1/2}\|S_n^0\bv_n^\star\|_2
    =
    O_p\left(\sqrt{\frac{t_n^{-m/2}}{n}}\right).
    \label{eq:residual_spectral_rate}
\end{equation}
\end{assumption}

The isotropic sub-Gaussian case implies \eqref{eq:residual_spectral_rate}: conditionally on the separated graph,
\begin{equation}
    \E\{\|S_n^0\bv_n^\star\|_2^2\mid Z\}
    =
    \sigma_V^2\Tr\{(S_n^0)^2\}
    \le C\sigma_V^2 t_n^{-m/2}.
\end{equation}
Markov's inequality gives the stated order. Assumption
\ref{ass:residual_spectral} separates the graph-smooth systematic component
from the residual. Graph-correlated residuals may violate this condition;
Proposition \ref{prop:leakage} gives the corresponding finite-sample
expression.

With this distinction in place, the rate theorem is conditional on the
adaptive graph satisfying the perturbation control encoded by $\eta_n$.

\begin{theorem}[Residual recovery rate under piecewise graph smoothness]
\label{thm:aihf_rate}
Suppose Assumptions \ref{ass:additive} and \ref{ass:geometry}--\ref{ass:residual_spectral} hold, and suppose $n^{-1/2}\|\bx_n\|_2=O_p(1)$. Let $\eta_n$ be defined by \eqref{eq:eta_operator}, let $t_n\asymp\lambda_n\varepsilon_n^2$, and let
\begin{equation}
    r_n=
    t_n^{s/2}
    +\sqrt{\frac{t_n^{-m/2}}{n}}
    +\zeta_n
    +\xi_n
    +\eta_n .
    \label{eq:rn_rate}
\end{equation}
Then the A-IHF generated control satisfies
\begin{equation}
    n^{-1/2}\|\hat{\bv}_n-\bv_n^\star\|_2
    =
    O_p(r_n).
    \label{eq:explicit_residual_rate}
\end{equation}
If $t_n\to0$, $n t_n^{m/2}\to\infty$, $\zeta_n\to0$, $\xi_n\to0$, and $\eta_n\to0$, then $\hat{\bv}_n$ is root-mean-square consistent for $\bv_n^\star$. If $\zeta_n$, $\xi_n$, and $\eta_n$ are of smaller order and $t_n\asymp n^{-2/(2s+m)}$, equivalently
\begin{equation}
    \lambda_n
    \asymp
    n^{-2/(2s+m)}\varepsilon_n^{-2},
    \label{eq:lambda_optimal_scaling}
\end{equation}
then
\begin{equation}
    n^{-1/2}\|\hat{\bv}_n-\bv_n^\star\|_2
    =
    O_p\left(n^{-s/(2s+m)}\right).
    \label{eq:nonparametric_rate}
\end{equation}
\end{theorem}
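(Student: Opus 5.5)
The argument starts from the exact decomposition in Proposition \ref{prop:decomp}, $\hat{\bv}_n-\bv_n^\star=(I-S_n)\mathbf{g}_n-S_n\bv_n^\star$. We then replace the adaptive smoother $S_n$ by the oracle separated smoother $S_n^0$ of Assumption \ref{ass:smoothing_rate}. Since $(I-S_n)\mathbf{g}_n-S_n\bv_n^\star=(I-S_n^0)\mathbf{g}_n-S_n^0\bv_n^\star-(S_n-S_n^0)(\mathbf{g}_n+\bv_n^\star)$ and $\mathbf{g}_n+\bv_n^\star=\bx_n$, the triangle inequality gives
\begin{equation*}
n^{-1/2}\|\hat{\bv}_n-\bv_n^\star\|_2
\le
n^{-1/2}\|(I-S_n^0)\mathbf{g}_n\|_2
+n^{-1/2}\|S_n^0\bv_n^\star\|_2
+n^{-1/2}\|(S_n-S_n^0)\bx_n\|_2 .
\end{equation*}
This regrouping is convenient because it absorbs both perturbation pieces into a single term acting on the observed vector $\bx_n$. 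That term is exactly the object controlled by Lemma \ref{lem:resolvent_leakage}.

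Each term is then bounded by a hypothesis.
\begin{itemize}
\item The first term is at most $C\{t_n^{s/2}+\zeta_n+\xi_n\}$ by \eqref{eq:oracle_graph_bias}. This bound holds on the event of probability tending to one on which the geometric conclusions of Assumption \ref{ass:geometry} (edge lengths, jump gap) are in force.
\item The second term is $O_p(\sqrt{t_n^{-m/2}/n})$ by the operative condition \eqref{eq:residual_spectral_rate} of Assumption \ref{ass:residual_spectral}. In the isotropic sub-Gaussian case it follows from the trace bound \eqref{eq:effective_dimension} and Markov's inequality, as noted after that assumption.
\item The third term is $O_p(\eta_n)$ by Lemma \ref{lem:resolvent_leakage}, using $n^{-1/2}\|\bx_n\|_2=O_p(1)$. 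Underneath, this is the resolvent identity $S_n-S_n^0=-\lambda_n S_n(L_n-L_n^0)S_n^0$ together with $\opnorm{S_n},\opnorm{S_n^0}\le1$, which holds because both Laplacians are positive semidefinite.
\end{itemize}
Summing the three bounds, with $O_p$ terms combined on the intersection of high-probability events, gives $O_p(r_n)$, which is \eqref{eq:explicit_residual_rate}.

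The consistency statement follows immediately. If $t_n\to0$ and $nt_n^{m/2}\to\infty$, both the bias and variance pieces vanish, and $\zeta_n,\xi_n,\eta_n\to0$ handle the rest.

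For the rate, we balance $t_n^{s/2}$ against $(nt_n^{m/2})^{-1/2}$. Setting $t_n^{s}=n^{-1}t_n^{-m/2}$ gives $t_n\asymp n^{-2/(2s+m)}$. Both terms are then $n^{-s/(2s+m)}$, and the remaining three terms are of smaller order by hypothesis. Translating back through $t_n\asymp\lambda_n\varepsilon_n^2$ yields \eqref{eq:lambda_optimal_scaling} and \eqref{eq:nonparametric_rate}.

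The main obstacle is bookkeeping rather than analysis, because the theorem is a conditional reduction. The delicate point is that $S_n$ is data-dependent, since the conductance is built from $\bx_n$. We must therefore avoid any step that treats $S_n$ as fixed when taking expectations over $\bv_n^\star$. The regrouping above achieves this: the only stochastic bound on $\bv_n^\star$ is applied through the oracle operator $S_n^0$, which is conditionally fixed. The adaptivity enters only through the deterministic operator-norm inequality of Lemma \ref{lem:resolvent_leakage}, applied to the whole vector $\bx_n$. We would also check that $\eta_n$ is interpreted as a stochastic order quantity, so that the final $O_p$ statement is legitimate.
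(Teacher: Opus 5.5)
Your proposal is correct and is essentially the paper's own proof: you use the same add-and-subtract of $S_n^0\bx_n$ to get the three-term triangle inequality, control the terms by \eqref{eq:oracle_graph_bias}, \eqref{eq:residual_spectral_rate}, and Lemma \ref{lem:resolvent_leakage} respectively, and balance $t_n^{s/2}$ against $n^{-1/2}t_n^{-m/4}$ to obtain $t_n\asymp n^{-2/(2s+m)}$. Your added remark is a useful clarification that the paper leaves implicit: the data-dependent $S_n$ enters only through the deterministic operator-norm bound, while the only stochastic bound on $\bv_n^\star$ passes through the separated operator $S_n^0$.
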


The five terms in \eqref{eq:rn_rate} correspond to heat-smoothing bias,
effective residual dimension, fixed-graph approximation, the boundary band,
and cross-jump perturbation. When $s=1$ and the dimension lies in the regimes
studied by \citet{green2021minimax}, the leading exponent agrees with the
neighborhood-graph Laplacian rate. For general $s$, it follows from the bias
and effective-dimension bounds in Assumption \ref{ass:smoothing_rate}, not
from \citet{green2021minimax} alone. We make no rate-improvement claim.
Rather, the additional terms $\xi_n+\eta_n$ specify when adaptive edge
suppression preserves the within-region rate, and Corollary
\ref{cor:explicit_beta_rate} then propagates that rate to the
control-function coefficient. Accordingly, Theorem \ref{thm:aihf_rate} is a
reduction from an adaptive graph to a separated-graph approximation, not a
continuum-limit theorem for the percentile rule. Establishing
$\eta_n=o(1)$ for that fully adaptive rule under primitive noise conditions
remains open.

\begin{corollary}[Explicit second-stage rate]
\label{cor:explicit_beta_rate}
Under the conditions of Theorem \ref{thm:aihf_rate} and Assumption \ref{ass:second_stage}, suppose also that
\begin{equation}
    \Pr\{\kappa_n(\hat{\bv}_n)\ge \kappa_0\}\to1
\end{equation}
for some $\kappa_0>0$. Define
\begin{equation}
    \mathcal E_{\epsilon,n}
    =
    \frac{\bx_n^\top M_{\hat{\bv}_n}\boldsymbol{\epsilon}_n}
    {\bx_n^\top M_{\hat{\bv}_n}\bx_n}.
    \label{eq:sampling_error_n}
\end{equation}
Then
\begin{equation}
    \begin{aligned}
    |\hat\beta(\hat{\bv}_n)-\beta_0-\mathcal E_{\epsilon,n}|
    &=O_p(r_n+\delta_{x,n}),\\
    \delta_{x,n}
    &=n^{-1/2}\|\bv_n^\star-\bu_n\|_2 .
    \end{aligned}
    \label{eq:explicit_beta_centered_rate}
\end{equation}
If $\mathcal E_{\epsilon,n}=O_p(n^{-1/2})$, then
\begin{equation}
    |\hat\beta(\hat{\bv}_n)-\beta_0|
    =
    O_p(r_n+\delta_{x,n})+O_p(n^{-1/2}),
    \label{eq:explicit_beta_rate}
\end{equation}
\end{corollary}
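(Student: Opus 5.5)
The plan is to combine the finite-sample certificate with the rate in Theorem \ref{thm:aihf_rate}, working on the high-probability event $\mathcal G_n=\{\kappa_n(\hat{\bv}_n)\ge\kappa_0\}$. On $\mathcal G_n$ we have $\kappa_n(\hat{\bv}_n)>0$, so Proposition \ref{prop:beta} applies with $\hat{\bv}=\hat{\bv}_n$. Its second inequality in \eqref{eq:beta_bound} gives
\begin{equation*}
    |\hat\beta(\hat{\bv}_n)-\beta_0-\mathcal E_{\epsilon,n}|
    \le
    \frac{|\gamma_0|\,n^{-1/2}\|\hat{\bv}_n-\bu_n\|_2}{\sqrt{\kappa_n(\hat{\bv}_n)}}
    \le
    \frac{|\gamma_0|}{\sqrt{\kappa_0}}\,n^{-1/2}\|\hat{\bv}_n-\bu_n\|_2 .
\end{equation*}
Here $\mathcal E_{\epsilon,n}$ coincides with the sampling term of Proposition \ref{prop:beta}. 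The intercept in the projection does not matter, because Assumption \ref{ass:second_stage} has no intercept and $M_{\hat{\bv}_n}\bone=0$ annihilates any constant anyway.

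Next I would split the direct error by the triangle inequality:
\begin{equation*}
    n^{-1/2}\|\hat{\bv}_n-\bu_n\|_2
    \le
    n^{-1/2}\|\hat{\bv}_n-\bv_n^\star\|_2+\delta_{x,n}.
\end{equation*}
The first term is $O_p(r_n)$ by \eqref{eq:explicit_residual_rate}. The second is exactly $\delta_{x,n}$, which is deterministic given the sample and needs no rate assumption. Because $\gamma_0$ and $\kappa_0$ are fixed constants, the right side is $O_p(r_n+\delta_{x,n})$ on $\mathcal G_n$.

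To pass from the event to an unconditional $O_p$ statement, I would use that $\Pr(\mathcal G_n^c)\to0$. For any $M$,
\begin{equation*}
    \Pr\{|\cdot|>M(r_n+\delta_{x,n})\}
    \le
    \Pr(\mathcal G_n^c)+\Pr\{\mathcal G_n,\ |\cdot|>M(r_n+\delta_{x,n})\},
\end{equation*}
and both pieces are small uniformly for large $n$ once $M$ is large. One detail is that $\hat\beta$ may be undefined off $\mathcal G_n$. Since that event has vanishing probability, any convention for defining $\hat\beta$ there is harmless.

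Finally, \eqref{eq:explicit_beta_rate} follows from the triangle inequality and the added hypothesis $\mathcal E_{\epsilon,n}=O_p(n^{-1/2})$.

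The only step needing care is the event argument, and it is routine. No genuine obstacle arises, because the corollary simply chains Theorem \ref{thm:aihf_rate} through Proposition \ref{prop:beta}. Alternatively, one could invoke Proposition \ref{prop:consistency_reduction}, taking $a_{g,n}+a_{v,n}$ replaced by $r_n$.
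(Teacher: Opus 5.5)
Your proposal is correct and follows the paper's proof: you split $n^{-1/2}\|\hat{\bv}_n-\bu_n\|_2$ by the triangle inequality into the $O_p(r_n)$ term from Theorem \ref{thm:aihf_rate} plus $\delta_{x,n}$, apply the second inequality of Proposition \ref{prop:beta} on the event $\{\kappa_n(\hat{\bv}_n)\ge\kappa_0\}$, and then use the hypothesis on $\mathcal E_{\epsilon,n}$ for the uncentered display. You also spell out the passage from the high-probability event to an unconditional $O_p$ statement, which the paper leaves implicit, and you handle it correctly.
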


\begin{corollary}[Root-n linear inference under negligible generated-control error]
\label{cor:nonorthogonal_rootn}
In the linear second-stage model of Assumption \ref{ass:second_stage}, suppose
\begin{equation}
    \|\hat{\bv}_n-\bu_n\|_2=o_p(1),
    \qquad
    \kappa_n(\hat{\bv}_n)\xrightarrow{p}\kappa_0>0 .
    \label{eq:negligible_generated_control}
\end{equation}
Suppose further that
\begin{equation}
    \frac{1}{\sqrt n}\bx_n^\top M_{\hat{\bv}_n}\boldsymbol{\epsilon}_n
    \rightsquigarrow N(0,\Omega),
    \qquad
    \frac{1}{n}\bx_n^\top M_{\hat{\bv}_n}\bx_n
    \xrightarrow{p} Q>0 .
    \label{eq:linear_sampling_clt}
\end{equation}
Then
\begin{equation}
    \sqrt n\{\hat\beta(\hat{\bv}_n)-\beta_0\}
    \rightsquigarrow
    N(0,\Omega/Q^2).
    \label{eq:nonorthogonal_rootn}
\end{equation}
\end{corollary}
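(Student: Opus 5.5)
The proof runs through the exact decomposition of Proposition \ref{prop:beta}, so the argument is algebraic followed by Slutsky. Since $M_{\hat{\bv}_n}$ annihilates $\hat{\bv}_n$, the identity $\by_n=\beta_0\bx_n+\gamma_0\bu_n+\boldsymbol{\epsilon}_n$ gives
\begin{equation*}
    \sqrt n\{\hat\beta(\hat{\bv}_n)-\beta_0\}
    =
    \gamma_0\frac{n^{-1/2}\bx_n^\top M_{\hat{\bv}_n}(\bu_n-\hat{\bv}_n)}{n^{-1}\bx_n^\top M_{\hat{\bv}_n}\bx_n}
    +
    \frac{n^{-1/2}\bx_n^\top M_{\hat{\bv}_n}\boldsymbol{\epsilon}_n}{n^{-1}\bx_n^\top M_{\hat{\bv}_n}\bx_n}.
\end{equation*}
Subtracting $M_{\hat{\bv}_n}\hat{\bv}_n=0$ in the first numerator is what allows the unnormalized error $\|\hat{\bv}_n-\bu_n\|_2$ to appear. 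The second term converges to $N(0,\Omega)/Q=N(0,\Omega/Q^2)$ by \eqref{eq:linear_sampling_clt} and Slutsky's lemma, with the denominator converging in probability to $Q>0$.

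The remaining step is to show that the generated-control term is $o_p(1)$. I would use Cauchy--Schwarz together with the fact that $M_{\hat{\bv}_n}$ is an orthogonal projection:
\begin{equation*}
    \bigl|n^{-1/2}\bx_n^\top M_{\hat{\bv}_n}(\bu_n-\hat{\bv}_n)\bigr|
    \le
    n^{-1/2}\|M_{\hat{\bv}_n}\bx_n\|_2\,\|\bu_n-\hat{\bv}_n\|_2
    =
    \sqrt{\kappa_n(\hat{\bv}_n)}\,\|\hat{\bv}_n-\bu_n\|_2.
\end{equation*}
Here $\kappa_n(\hat{\bv}_n)\to\kappa_0$ in probability, so $\sqrt{\kappa_n}=O_p(1)$, and $\|\hat{\bv}_n-\bu_n\|_2=o_p(1)$ by \eqref{eq:negligible_generated_control}. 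The numerator is therefore $o_p(1)$. The denominator equals $\kappa_n(\hat{\bv}_n)$, which converges to $\kappa_0>0$, and this also equals $Q$. The ratio is $o_p(1)$, and a final application of Slutsky's lemma gives \eqref{eq:nonorthogonal_rootn}.

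The main point to get right is the scaling: the hypothesis uses the unscaled norm $\|\hat{\bv}_n-\bu_n\|_2$, not the empirical $L_2$ norm. That choice is exactly what makes the bias term vanish after multiplication by $\sqrt n$, because the factor $n^{-1/2}$ is absorbed by $\|M\bx_n\|_2=\sqrt{n\kappa_n}$. I would also note that the event $\kappa_n>0$, which makes $\hat\beta$ well defined, has probability tending to one, so the algebra may be carried out on that event.
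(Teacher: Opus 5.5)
Your proposal is correct and follows the paper's own proof. Both use the exact decomposition from Proposition~\ref{prop:beta}, bound the $\sqrt n$-scaled generated-control term by $|\gamma_0|\|\hat{\bv}_n-\bu_n\|_2/\sqrt{\kappa_n(\hat{\bv}_n)}=o_p(1)$ via Cauchy--Schwarz, and apply Slutsky to the sampling term. Your side remarks are accurate refinements that the paper leaves implicit: $\kappa_0=Q$ because both are limits of $n^{-1}\bx_n^\top M_{\hat{\bv}_n}\bx_n$, and the algebra can be carried out on the event $\kappa_n>0$.
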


\begin{proof}
Use the decomposition in the proof of Proposition \ref{prop:beta}. The generated-control term is bounded by
\begin{equation}
    \sqrt n
    \frac{|\gamma_0|}{\sqrt{n\kappa_n(\hat{\bv}_n)}}
    \|\hat{\bv}_n-\bu_n\|_2
    =
    O_p\!\left(\|\hat{\bv}_n-\bu_n\|_2\right)
    =
    o_p(1).
\end{equation}
The sampling term converges by \eqref{eq:linear_sampling_clt} and Slutsky's theorem.
\end{proof}

The requirement in \eqref{eq:negligible_generated_control} is substantially
stronger than the rate needed for point estimation. Theorem
\ref{thm:aihf_rate} controls the normalized distance to $\bv_n^\star$,
whereas the corollary assumes that the unnormalized distance to $\bu_n$
vanishes. Both graph residual error and first-stage noise mismatch must
therefore be negligible. Corollary \ref{cor:nonorthogonal_rootn} should be
read as a benchmark for this stringent regime; the orthogonal construction
below accommodates slower nuisance rates.

\subsection{Selection Calibration}
\label{sec:obs_theory}

Because pilot-smoothed treatment determines conductance,
\eqref{eq:qobs} is data adaptive. A clean finite-sample calibration is
available for an idealized split version: one subsample constructs a finite
family of smoother matrices on an evaluation sample, for example through a
prespecified out-of-sample graph extension. Conditional on that construction
split, $\{S_h:h\in\Hcal_n\}$ is fixed. The implemented A-IHF procedure is
transductive instead; it selects parameters on the observed graph and refits
the graph on the full sample. Proposition \ref{prop:selector_oracle} therefore
calibrates the split, fixed-smoother surrogate rather than the implemented
selector.

For fixed candidate smoothers, define the sample residual-recovery loss
\begin{equation}
    R_n(h)=\frac{1}{n}\|S_h\bx-\mathbf{g}\|_2^2
    =
    \frac{1}{n}\|\hat{\bv}_h-\bv^\star\|_2^2,
\end{equation}
where the equality follows because $S_h\bx-\mathbf{g}=-(\hat{\bv}_h-\bv^\star)$.
Define also the roughness penalty
\begin{equation}
    J_n(h)=
    \frac{(S_h\bx)^\top L(W_h)(S_h\bx)}
    {\|\bx\|_2^2/n+\varepsilon}.
\end{equation}
Since $L(W_h)$ is positive semidefinite and $\varepsilon>0$, $J_n(h)\ge0$.
Let $\Hcal_n^\kappa$ be the candidates satisfying the relevance screen in \eqref{eq:obs_rule}.

\begin{proposition}[Calibration inequality for the split observational selector]
\label{prop:selector_oracle}
Assume $\alpha\ge0$, $\varepsilon>0$, $\Hcal_n^\kappa$ is nonempty, $\Hcal_n$ is finite, each candidate smoother satisfies $0\preceq S_h\preceq I$, and
\begin{equation}
    d_h=\frac{1}{n}\Tr(S_h),
    \qquad
    1-d_h\ge c_{\mathrm{df}}>0
    \label{eq:df_lower_bound}
\end{equation}
for all $h\in\Hcal_n$. Suppose that, conditional on the graph-construction split,
\begin{equation}
    \sup_{h\in\Hcal_n^\kappa}
    \left|
    \widehat Q_{\mathrm{obs}}(h)
    -
    \{R_n(h)+\alpha J_n(h)+\sigma_V^2\}
    \right|
    \le
    \Delta_n
    \label{eq:selector_uniform}
\end{equation}
with probability at least $1-\delta_n$. Then the split selector satisfies
\begin{equation}
    R_n(\hat h_{\mathrm{obs}})
    \le
    \inf_{h\in\Hcal_n^\kappa}
    \{R_n(h)+\alpha J_n(h)\}
    +2\Delta_n
    \label{eq:selector_oracle}
\end{equation}
with probability at least $1-\delta_n$.

In addition, suppose $\bx=\mathbf{g}+\boldsymbol{\eta}$ on the evaluation split, the entries of $\boldsymbol{\eta}$ are conditionally independent, mean zero, sub-Gaussian with variance proxy bounded by $\sigma_\eta^2$, $\E[\boldsymbol{\eta}\boldsymbol{\eta}^\top\mid Z]=\sigma_V^2 I$, and $n^{-1}\|\mathbf{g}\|_2^2\le C_g$. Define
\begin{equation}
    b_{\mathrm{sel},n}=
    \sup_{h\in\Hcal_n^\kappa}
    \left|
    \E\!\left[
        \widehat Q_{\mathrm{obs}}(h)
        -
        \{R_n(h)+\alpha J_n(h)+\sigma_V^2\}
        \,\middle|\, Z
    \right]
    \right|.
    \label{eq:selector_bias}
\end{equation}
Then, for all $0<\delta<1$, there is a constant $C$ depending only on $c_{\mathrm{df}}$, $\sigma_\eta$, and $C_g$ such that
\begin{equation}
    \Delta_n=
    b_{\mathrm{sel},n}+
    C\left\{
    \sqrt{\frac{\log(2|\Hcal_n|/\delta)}{n}}
    +
    \frac{\log(2|\Hcal_n|/\delta)}{n}
    \right\}
    \label{eq:selector_delta_explicit}
\end{equation}
is valid with conditional probability at least $1-\delta$. If $\Tr(S_h)$ is
replaced by an estimator $\widehat{\Tr}(S_h)$ satisfying
$\sup_h|\widehat d_h-d_h|\le e_{\mathrm{tr},n}<c_{\mathrm{df}}/2$ and the
evaluation sample obeys $n^{-1}\|\bx\|_2^2\le C_x$, the same bound gains an
additive term $C C_x e_{\mathrm{tr},n}$. Under the preceding sub-Gaussian
conditions, $n^{-1}\|\bx\|_2^2=O_p(1)$, so the corresponding unconditional
trace contribution is $O_p(e_{\mathrm{tr},n})$.
\end{proposition}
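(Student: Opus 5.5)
The proof has two parts: a deterministic comparison argument for \eqref{eq:selector_oracle}, and a concentration argument that makes the explicit $\Delta_n$ in \eqref{eq:selector_delta_explicit} valid.

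\emph{Oracle inequality.} Work on the event, of probability at least $1-\delta_n$, where \eqref{eq:selector_uniform} holds. Write $F_n(h)=R_n(h)+\alpha J_n(h)+\sigma_V^2$. On this event, for every $h\in\Hcal_n^\kappa$,
\[
R_n(\hat h_{\mathrm{obs}})\le F_n(\hat h_{\mathrm{obs}})-\sigma_V^2\le \widehat Q_{\mathrm{obs}}(\hat h_{\mathrm{obs}})+\Delta_n-\sigma_V^2.
\]
The first inequality uses $\alpha J_n\ge0$. Since $\hat h_{\mathrm{obs}}$ minimizes $\widehat Q_{\mathrm{obs}}$ over $\Hcal_n^\kappa$, the right side is at most
\[
\widehat Q_{\mathrm{obs}}(h)+\Delta_n-\sigma_V^2\le R_n(h)+\alpha J_n(h)+2\Delta_n.
\]
Taking the infimum over $h$ gives \eqref{eq:selector_oracle}. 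Finiteness of $\Hcal_n$ guarantees the argmin exists, and nonemptiness of $\Hcal_n^\kappa$ makes it well defined.

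\emph{Explicit $\Delta_n$.} Condition on $Z$ and the construction split, so each $S_h$ is fixed. Split the deviation as
\[
\widehat Q_{\mathrm{obs}}(h)-F_n(h)=\E[\widehat Q_{\mathrm{obs}}(h)-F_n(h)\mid Z]+\{\text{centered fluctuation}\}.
\]
The conditional mean part is at most $b_{\mathrm{sel},n}$ by definition. For the fluctuation, substitute $\bx=\mathbf g+\boldsymbol\eta$. Each of $\|(I-S_h)\bx\|^2/n$, $(S_h\bx)^\top L(W_h)S_h\bx$, $\|\bx\|_2^2/n$ and $R_n(h)=\|S_h\bx-\mathbf g\|^2/n$ is a quadratic form $\boldsymbol\eta^\top A\boldsymbol\eta/n$ plus a linear form $\mathbf b^\top\boldsymbol\eta/n$ plus a constant. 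Since $0\preceq S_h\preceq I$, we have $\opnorm{A}=O(1)$ and $\|A\|_F^2=O(n)$. The bound $n^{-1}\|\mathbf g\|^2\le C_g$ gives $\|\mathbf b\|_2^2=O(n)$. For the roughness term, $\opnorm{L(W_h)}$ is bounded because $L$ is a degree-normalized Laplacian divided by the mean degree; I would add this as a standing bound on the candidate family if it is not automatic.

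Apply Hanson--Wright to the quadratic parts and a sub-Gaussian Hoeffding bound to the linear parts. Each term then deviates from its mean by $C\{\sqrt{u/n}+u/n\}$ with probability $1-2e^{-u}$. The GCV denominator is a deterministic constant bounded below by $c_{\mathrm{df}}^2$, so it only rescales constants. The ratio in $J_n$ needs more care: $\|\bx\|^2/n+\varepsilon$ is random. I would handle it by linearizing around its conditional mean. The denominator is at least $\varepsilon$, and the numerator is bounded on a high-probability event where $\|\bx\|^2/n\le C$, so the ratio deviation is controlled by the deviations of the numerator and denominator. A union bound over $|\Hcal_n|$ candidates with $u=\log(2|\Hcal_n|/\delta)$ then yields \eqref{eq:selector_delta_explicit}.

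\emph{Main obstacle.} The main difficulty is that the $J_n$ term is a ratio. Its conditional expectation is not the ratio of expectations, and any mismatch must either be absorbed into $b_{\mathrm{sel},n}$ (defined via expectations, so consistent) or bounded by the linearization above. I would check carefully that the constant can be made to depend only on $c_{\mathrm{df}}$, $\sigma_\eta$ and $C_g$. The quantities $\varepsilon$, $\alpha$ and $\opnorm{L}$ must enter only as fixed constants; if they cannot be avoided, I would state that dependence explicitly.

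\emph{Trace replacement.} Replacing $d_h$ by $\widehat d_h$ changes only the factor $(1-d)^{-2}$. When $|\widehat d_h-d_h|\le e_{\mathrm{tr},n}<c_{\mathrm{df}}/2$, the mean value theorem gives
\[
|(1-\widehat d_h)^{-2}-(1-d_h)^{-2}|\le 2(c_{\mathrm{df}}/2)^{-3}e_{\mathrm{tr},n}.
\]
This multiplies $\|(I-S_h)\bx\|^2/n\le\|\bx\|^2/n\le C_x$, which adds $CC_xe_{\mathrm{tr},n}$. The last claim follows because $n^{-1}\|\bx\|^2\le 2C_g+2n^{-1}\|\boldsymbol\eta\|^2=O_p(1)$ by sub-Gaussian concentration.
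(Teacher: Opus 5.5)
Your oracle-inequality argument and your trace-replacement step match the paper's proof. The latter uses a Lipschitz bound for $(1-d)^{-2}$ on $1-d\ge c_{\mathrm{df}}/2$, multiplied by $n^{-1}\|(I-S_h)\bx\|_2^2\le C_x$. The gap is in the explicit $\Delta_n$. You missed that the roughness summand of $\widehat Q_{\mathrm{obs}}(h)$ is \emph{literally} $\alpha J_n(h)$. It is the same random variable as in the comparison target, so it cancels identically:
\[
\widehat Q_{\mathrm{obs}}(h)-\{R_n(h)+\alpha J_n(h)+\sigma_V^2\}
=\frac{1}{n}\bx^\top A_h\bx-R_n(h)-\sigma_V^2,
\qquad
A_h=\frac{(I-S_h)^2}{(1-d_h)^2}.
\]
There is no ratio to linearize. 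Neither $\alpha$, $\varepsilon$, nor $L(W_h)$ enters, which is why the stated constant can depend only on $c_{\mathrm{df}}$, $\sigma_\eta$, and $C_g$. Your ``main obstacle'' is therefore illusory.

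Your plan to concentrate the roughness term on its own does more than inflate constants; it fails at the stated rate. The numerator $(S_h\bx)^\top L(W_h)S_h\bx$ is not divided by $n$. Writing it as $\boldsymbol\eta^\top A\boldsymbol\eta/n$ forces $A=nS_hL(W_h)S_h$. Its operator norm can be of order $n/\lambda$, because the spectral factor $\mu/(1+\lambda\mu)^2$ peaks at $1/(4\lambda)$, so your claim $\opnorm{A}=O(1)$ is false for this piece. Equivalently, $J_n(h)$ is itself of order $n/\lambda$, and Hanson--Wright with $\|S_hL(W_h)S_h\|_{\mathrm F}\le\sqrt n/(4\lambda)$ only gives fluctuations of order $\sqrt{nu}/\lambda$, not $\sqrt{u/n}$.

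Bounding $\widehat Q_{\mathrm{obs}}$ and the target separately and adding the bounds would leave a non-vanishing term of order $\alpha\sqrt n$. That term would also depend on $\lambda$, $\varepsilon$, and a bound on $\opnorm{L(W_h)}$ that the hypotheses do not supply: for the mean-degree-scaled Laplacian one only has $\opnorm{L(W)}\le 2d_{\max}/\bar d_W$.

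Once you use the cancellation, substituting $\bx=\mathbf g+\boldsymbol\eta$ leaves
\[
\tfrac1n\{\boldsymbol\eta^\top B_h\boldsymbol\eta-\E[\boldsymbol\eta^\top B_h\boldsymbol\eta\mid Z]\}+\tfrac2n a_h^\top\boldsymbol\eta ,
\qquad
B_h=A_h-S_h^2,\quad a_h=\{A_h+(I-S_h)S_h\}\mathbf g ,
\]
plus a conditional mean absorbed into $b_{\mathrm{sel},n}$. These satisfy
\[
\opnorm{B_h}\le c_{\mathrm{df}}^{-2}+1,\qquad
\|B_h\|_{\mathrm F}\le(c_{\mathrm{df}}^{-2}+1)\sqrt n,\qquad
\|a_h\|_2\le(c_{\mathrm{df}}^{-2}+1/4)\sqrt{C_g n}.
\]
Your Hanson--Wright, sub-Gaussian linear-form, and union-bound step then goes through as written, and this is exactly the paper's proof.
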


For every A-IHF candidate, $0\preceq S_h\preceq I$ because $L(W_h)$ is
symmetric positive semidefinite. The term $b_{\mathrm{sel},n}$ measures the
gap between GCV and residual-recovery risk; adaptive graphs,
heteroskedasticity, and graph-correlated residuals may keep this gap from
vanishing. The proposition is therefore a finite-family calibration result.
It has an oracle interpretation only when both the selection-bias and
trace-estimation terms are small.

\begin{remark}[Asymptotic behavior of the selection-bias term]
\label{rem:selection_bias}
The idealized case in which $b_{\mathrm{sel},n}$ is expected to vanish is narrow but informative. If the candidate smoothers are fixed conditional on $Z$, the first-stage residual $V^\star$ is conditionally homoskedastic and mean independent of the graph features, and its conditional covariance has no systematic low-frequency alignment with the candidate graph eigenspaces, then the classical GCV identity approximates the residual-recovery risk up to sampling and trace-estimation errors. Under a finite or slowly growing candidate family and a uniform law of large numbers for the quadratic forms $\{n^{-1}\|(I-S_h)\bx\|_2^2:h\in\Hcal_n^\kappa\}$, this gives $b_{\mathrm{sel},n}=o_p(1)$. These conditions are deliberately stronger than the empirical setting. Their role is to identify what the observational score is trying to approximate; when heteroskedasticity or graph-correlated residual variation violates them, $b_{\mathrm{sel},n}$ becomes a selection-bias term rather than a vanishing remainder. The correlated-residual design is included for exactly this failure mode. Establishing $b_{\mathrm{sel},n}\to0$ for fully adaptive graph GCV remains a separate uniform risk problem.
\end{remark}

Thus the comparison is with the best admissible candidate for the penalized
residual-recovery target under the idealized split construction. We do not use
it as a guarantee for the transductive implementation, whose candidate family
and relevance screen remain part of the estimator definition.

\subsection{Orthogonal Score Implication}
\label{sec:orthogonal_inference}

Proposition \ref{prop:beta} controls point-estimation error for the OLS
control-function coefficient. Root-$n$ inference needs either a stronger
residual rate or an orthogonal score. A standard cross-fitted construction
solves
\begin{equation}
    \frac{1}{n}\sum_{i=1}^n
    \psi(W_i;\beta,\hat\eta_{-k(i)})=0,
    \label{eq:orthogonal_score}
\end{equation}
where $W_i=(Y_i,X_i,Z_i)$, the nuisance $\eta$ contains the generated control and second-stage regressions, and $k(i)$ denotes the fold not containing observation $i$.

\begin{remark}[Orthogonal-score normality under product-rate conditions]
\label{rem:orthogonal_inference}
Let parameter $\beta_0$ satisfy $\E[\psi(W;\beta_0,\eta_0)]=0$. Suppose the score is Neyman orthogonal at $\eta_0$, has nonsingular Jacobian $G=\partial_\beta \E[\psi(W;\beta,\eta_0)]|_{\beta=\beta_0}$, and satisfies a central limit theorem with variance
\begin{equation}
    \Omega=\E[\psi(W;\beta_0,\eta_0)\psi(W;\beta_0,\eta_0)^\top].
\end{equation}
If the cross-fitted nuisance estimates obey the usual product-rate condition, in particular if the A-IHF control enters with $L_2$ rate $\rho_n=o(n^{-1/4})$ relative to the outcome-relevant control and the remaining nuisance errors have compatible rates, then the solution $\tilde\beta$ of \eqref{eq:orthogonal_score} satisfies
\begin{equation}
    \sqrt n(\tilde\beta-\beta_0)
    \rightsquigarrow
    N\left(0,G^{-1}\Omega G^{-\top}\right).
    \label{eq:orthogonal_clt}
\end{equation}
A plug-in sandwich estimator using cross-fitted scores is consistent under the same conditions.
\end{remark}

Remark \ref{rem:orthogonal_inference} records how the generated-control rate
enters the standard orthogonal-score expansion
\citep{chernozhukov2018double}. It does not cover the neural response learner
used in our structural-response benchmark. An analogous nonparametric result
would need stability conditions on the second-stage learner---for example,
Lipschitz or entropy control---or an orthogonal nonlinear score with compatible
nuisance rates. We therefore treat the nonlinear benchmark as an empirical
point-estimation study. Formal error propagation is confined to the linear
control-function and orthogonal-score results, and the real-IV applications
use bootstrap intervals only to describe estimate stability.

\begin{remark}[Intrinsic dimension and orthogonal inference]
\label{rem:dimension_inference}
The rate condition $\rho_n=o(n^{-1/4})$ is restrictive. Under the rate in Theorem \ref{thm:aihf_rate}, ignoring fixed-graph, boundary, and leakage terms gives
\begin{equation}
    \rho_n\asymp n^{-s/(2s+m)} .
\end{equation}
The product-rate condition for root-$n$ orthogonal inference then requires
\begin{equation}
    \frac{s}{2s+m}>\frac{1}{4},
    \qquad\text{equivalently}\qquad
    m<2s .
    \label{eq:dimension_inference_condition}
\end{equation}
Since the graph-Laplacian rate stated here uses a second-order smoother with $s\le2$, this condition supports root-$n$ orthogonal inference only below intrinsic dimension four, unless additional structure or stronger nuisance estimation is available. In higher intrinsic dimensions, generated-control error can dominate the score expansion. Conventional score intervals may then undercover. The finite-sample undercoverage in \ref{sec:orthogonal_diag} is consistent with this rate restriction.
\end{remark}

\subsection{Approximate Resolvent Solves}
\label{sec:approx_solve}

All preceding statements use the exact resolvent
$S_h=(I+\lambda L(W_h))^{-1}$. On large graphs it may be replaced by an
iterative solve or a polynomial approximation. The resulting error enters the
certificate additively. We study conjugate gradients because the relevant
linear systems are symmetric positive definite
\citep{shewchuk1994introduction,saad2003iterative}.

\begin{proposition}[Perturbation from approximate graph solves]
\label{prop:approx_solve}
Let $\tilde S_h$ be a numerical approximation to $S_h$ and define
\begin{equation}
    \tilde{\bv}_h=(I-\tilde S_h)\bx .
\end{equation}
If
\begin{equation}
    n^{-1/2}\|(\tilde S_h-S_h)\bx\|_2\le \delta_n ,
    \label{eq:solver_delta}
\end{equation}
then
\begin{equation}
    n^{-1/2}\|\tilde{\bv}_h-\bv^\star\|_2
    \le
    n^{-1/2}\|\hat{\bv}_h-\bv^\star\|_2+\delta_n .
    \label{eq:approx_residual_bound}
\end{equation}
For a triangular array, suppose the exact smoother $S_n$ satisfies
\begin{align}
    n^{-1/2}\|(I-S_n)\mathbf{g}_n\|_2 &= O_p(a_{g,n}),\\
    n^{-1/2}\|S_n\bv_n^\star\|_2 &= O_p(a_{v,n}),\\
    n^{-1/2}\|\bv_n^\star-\bu_n\|_2 &= O_p(a_{x,n}).
\end{align}
Suppose also that
\begin{equation}
    n^{-1/2}\|(\tilde S_n-S_n)\bx_n\|_2=O_p(\delta_n),
\end{equation}
and that the relevance and sampling conditions in Assumption \ref{ass:graph_admissible} hold for $\tilde{\bv}_n=(I-\tilde S_n)\bx_n$. Then
\begin{equation}
    n^{-1/2}\|\tilde{\bv}_n-\bv_n^\star\|_2
    =
    O_p(a_{g,n}+a_{v,n}+\delta_n),
    \label{eq:approx_residual_rate}
\end{equation}
and
\begin{equation}
    |\hat\beta(\tilde{\bv}_n)-\beta_0|
    =
    O_p(a_{g,n}+a_{v,n}+a_{x,n}+\delta_n)+O_p(n^{-1/2}).
    \label{eq:approx_beta_bound}
\end{equation}
If only the centered coefficient is needed, the same statement holds with the sampling term subtracted as in Proposition \ref{prop:beta}.
\end{proposition}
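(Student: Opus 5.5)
The plan is to treat the numerical error as an additive perturbation of the exact generated control and then reuse the exact-smoother results. Write
\begin{equation*}
    \tilde{\bv}_h-\bv^\star
    =
    (\hat{\bv}_h-\bv^\star)+(\tilde{\bv}_h-\hat{\bv}_h),
    \qquad
    \tilde{\bv}_h-\hat{\bv}_h=-(\tilde S_h-S_h)\bx .
\end{equation*}
Taking Euclidean norms, applying the triangle inequality, scaling by $n^{-1/2}$ and invoking \eqref{eq:solver_delta} gives \eqref{eq:approx_residual_bound} directly. No structure of $\tilde S_h$ is needed beyond its action on $\bx$. In particular, symmetry is not needed, which matters because conjugate-gradient or polynomial approximations need not be symmetric in floating point.

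For the triangular-array residual rate \eqref{eq:approx_residual_rate}, I apply Proposition \ref{prop:decomp} to the exact smoother $S_n$. This gives
\begin{equation*}
    n^{-1/2}\|\hat{\bv}_n-\bv_n^\star\|_2
    \le
    n^{-1/2}\|(I-S_n)\mathbf g_n\|_2+n^{-1/2}\|S_n\bv_n^\star\|_2
    =
    O_p(a_{g,n}+a_{v,n}).
\end{equation*}
I then combine this with the first step and the hypothesis $n^{-1/2}\|(\tilde S_n-S_n)\bx_n\|_2=O_p(\delta_n)$. Sums of $O_p$ terms are $O_p$ of the sum of rates, so the residual rate follows.

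For the coefficient bound \eqref{eq:approx_beta_bound}, I do not route through $S_n$. Proposition \ref{prop:beta} holds for an arbitrary generated control, so I apply it to $\tilde{\bv}_n$ itself. The proof of Proposition \ref{prop:consistency_reduction} uses the identity $M_{\tilde{\bv}_n}\tilde{\bv}_n=0$ to write the generated-control term as $\gamma_0\bx_n^\top M_{\tilde{\bv}_n}(\bu_n-\tilde{\bv}_n)/(\bx_n^\top M_{\tilde{\bv}_n}\bx_n)$. Cauchy--Schwarz then bounds this term by
\begin{equation*}
    |\gamma_0|\,\frac{n^{-1/2}\|\bu_n-\tilde{\bv}_n\|_2}{\sqrt{\kappa_n(\tilde{\bv}_n)}} .
\end{equation*}
The numerator is at most $n^{-1/2}\|\tilde{\bv}_n-\bv_n^\star\|_2+n^{-1/2}\|\bv_n^\star-\bu_n\|_2=O_p(a_{g,n}+a_{v,n}+\delta_n+a_{x,n})$. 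On the event $\kappa_n(\tilde{\bv}_n)\ge\kappa_0$, whose probability tends to one by the assumed relevance condition for $\tilde{\bv}_n$, the denominator is bounded below. The sampling term is $O_p(n^{-1/2})$ by the assumed analogue of \eqref{eq:sampling_asymp}. Omitting that term gives the centered version, exactly as in Proposition \ref{prop:beta}.

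The only delicate point is that the relevance and sampling conditions must be imposed on $\tilde{\bv}_n$ rather than inherited from $\hat{\bv}_n$. The reason is that $\kappa_n$ is a projective quantity and is not Lipschitz in the control near collinearity. The statement assumes these conditions explicitly, so I will simply invoke them rather than try to derive them from $\delta_n\to0$.
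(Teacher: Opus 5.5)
Your proposal is correct and follows the paper's proof essentially step for step. The paper uses the same three steps: the triangle inequality on $\tilde{\bv}_h-\hat{\bv}_h=(S_h-\tilde S_h)\bx$ for the finite-sample bound, the exact decomposition of Proposition~\ref{prop:decomp} combined with the solve error for the residual rate, and Proposition~\ref{prop:beta} applied directly to $\tilde{\bv}_n$ under the assumed relevance and sampling conditions for the coefficient; your explicit Cauchy--Schwarz step through $\bu_n-\tilde{\bv}_n$ just spells out what the paper leaves implicit, as in its consistency-reduction proof.
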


\begin{proof}
Since
\begin{equation}
    \tilde{\bv}_h-\hat{\bv}_h
    =
    (S_h-\tilde S_h)\bx ,
\end{equation}
the triangle inequality gives \eqref{eq:approx_residual_bound}. The exact-smoother decomposition gives
\begin{equation}
    \hat{\bv}_n-\bv_n^\star
    =
    (I-S_n)\mathbf{g}_n-S_n\bv_n^\star .
\end{equation}
Combining this identity with the approximate-solve error gives \eqref{eq:approx_residual_rate}. The coefficient statement follows from Proposition \ref{prop:beta} applied to $\tilde{\bv}_n$, the relevance condition, and the sampling condition.
\end{proof}

Consequently, conjugate-gradient, preconditioned, and polynomial
implementations retain the reduction result whenever their empirical solve
error $\delta_n$ is small. The required control is in the norm of the
generated residual, rather than the stronger operator norm.

\else

\section{Graph-Admissibility Rates and Inference}
\label{sec:asymptotic_theory}

The finite-sample certificate yields a rate once the graph separates smooth
regions with vanishing error. Let $m$ denote intrinsic dimension, $s$
within-region smoothness, and $t_n$ the continuum scale associated with the
graph resolvent. Three additional terms track the departures from the ideal
within-region smoother: $\zeta_n$ is fixed-graph approximation error,
$\xi_n$ is boundary-band error, and $\eta_n$ is the perturbation introduced by
adaptive conductance.

\begin{theorem}[Rate on a graph-admissible sequence]
\label{thm:rate_main}
Suppose the systematic first-stage component is piecewise $s$-smooth on a
compact $m$-dimensional geometry, the separated neighborhood graph satisfies
the standard heat-smoothing approximation, and the residual has no persistent
low-frequency alignment with that graph. Then
\begin{equation}
 n^{-1/2}\|\hat{\bv}_n-\bv_n^\star\|_2
 =
 O_p\!\left(
 t_n^{s/2}
 +\sqrt{\frac{t_n^{-m/2}}{n}}
 +\zeta_n+\xi_n+\eta_n
 \right).
\label{eq:rate_main}
\end{equation}
If the last three terms are smaller and
$t_n\asymp n^{-2/(2s+m)}$, the leading rate is
$n^{-s/(2s+m)}$.
\end{theorem}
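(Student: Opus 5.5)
The plan is to reduce the adaptive smoother $S_n=(I+\lambda_n L_n)^{-1}$ to the oracle separated smoother $S_n^0=(I+\lambda_n L_n^0)^{-1}$ and bound each piece separately. I read Theorem \ref{thm:rate_main} as the informal form of the detailed assumptions: piecewise smoothness and edge-length control (Assumption \ref{ass:geometry}), the bias and effective-dimension bounds for the separated graph (Assumption \ref{ass:smoothing_rate}), residual spectral regularity (Assumption \ref{ass:residual_spectral}), and $\eta_n=\lambda_n\opnorm{L_n-L_n^0}$.

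The starting point is the exact identity of Proposition \ref{prop:decomp}. Adding and subtracting $S_n^0$ gives
\begin{equation*}
\hat{\bv}_n-\bv_n^\star
=(I-S_n^0)\mathbf{g}_n-S_n^0\bv_n^\star-(S_n-S_n^0)\bx_n ,
\end{equation*}
which uses $\bx_n=\mathbf{g}_n+\bv_n^\star$. The triangle inequality then bounds the normalized error by three terms, and I handle them in turn.

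\textbf{Step 1: separated-graph leakage.} The first term, $n^{-1/2}\|(I-S_n^0)\mathbf{g}_n\|_2$, is bounded directly by \eqref{eq:oracle_graph_bias}. This contributes $t_n^{s/2}+\zeta_n+\xi_n$.

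\textbf{Step 2: attenuation.} For the second term, $n^{-1/2}\|S_n^0\bv_n^\star\|_2$, I apply \eqref{eq:residual_spectral_rate}, which gives $O_p(\sqrt{t_n^{-m/2}/n})$. In the isotropic case this follows from the conditional second-moment computation $\sigma_V^2\Tr\{(S_n^0)^2\}\le C\sigma_V^2t_n^{-m/2}$ combined with Markov's inequality, as noted after that assumption.

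\textbf{Step 3: adaptive perturbation.} The third term, $n^{-1/2}\|(S_n-S_n^0)\bx_n\|_2$, is controlled by Lemma \ref{lem:resolvent_leakage}. The key identity is
\begin{equation*}
S_n-S_n^0=\lambda_n S_n(L_n^0-L_n)S_n^0 .
\end{equation*}
Both resolvents have operator norm at most one because the Laplacians are positive semidefinite. Together with $n^{-1/2}\|\bx_n\|_2=O_p(1)$, this yields $O_p(\eta_n)$.

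Summing the three bounds gives \eqref{eq:rate_main}.

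\textbf{Balancing.} If $\zeta_n,\xi_n,\eta_n$ are of smaller order, I equate $t_n^{s/2}$ with $n^{-1/2}t_n^{-m/4}$. This gives $t_n^{(2s+m)/4}=n^{-1/2}$, that is, $t_n\asymp n^{-2/(2s+m)}$, and both terms are then $n^{-s/(2s+m)}$. Since $t_n\asymp\lambda_n\varepsilon_n^2$, this corresponds to the choice of $\lambda_n$ in \eqref{eq:lambda_optimal_scaling}.

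\textbf{Main obstacle.} The delicate point is Step 3 together with the conditioning in Step 2. The adaptive $L_n$ depends on $\bx_n$ through the pilot, so a variance argument cannot be applied to $S_n\bv_n^\star$ directly. The decomposition above avoids this: the stochastic residual is only ever multiplied by the non-adaptive $S_n^0$, and all data dependence of the adaptive graph is absorbed into the deterministic operator-norm bound, which only needs $\|\bx_n\|$ to be bounded. The price is that $\eta_n$ must be assumed (or bounded via the leaked degree $\ell_n$ in Lemma \ref{lem:resolvent_leakage}) rather than derived. I would state this explicitly, and note that Lemmas \ref{lem:jump_separation}--\ref{lem:percentile_threshold} only motivate a small $\ell_n$ and do not establish it.
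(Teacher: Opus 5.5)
Your proposal is correct and is essentially the paper's own proof. The paper likewise starts from Proposition~\ref{prop:decomp} and adds and subtracts $S_n^0\bx_n$ to obtain the same three terms. It bounds them by the separated-graph bias bound, the residual spectral rate condition, and the resolvent-perturbation lemma (via $S_n-S_n^0=\lambda_nS_n(L_n^0-L_n)S_n^0$ and $\opnorm{S_n},\opnorm{S_n^0}\le1$), and then balances $t_n^{s/2}$ against $n^{-1/2}t_n^{-m/4}$. Your caveat that $\eta_n$ must be imposed rather than derived also matches the paper's explicit statement that a primitive proof of $\eta_n=o(1)$ for the percentile rule remains open.
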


The supplement provides the proof, sufficient graph conditions, and the
mapping from $t_n$ to the resolvent parameter. The leading term in
\eqref{eq:rate_main} is the standard within-region nonparametric rate. The
point of the theorem is to expose the additional cost of a boundary band and
data-adaptive conductance, not to improve the generic smoothing rate. In
particular, it assumes $\eta_n=o(1)$; a primitive proof of this condition for
the implemented percentile rule remains open.

When residualized treatment variation is bounded away from zero, the
finite-sample projective bound turns \eqref{eq:rate_main} into a coefficient
rate. A conventional linear score additionally needs negligible
generated-control error at root-$n$ scale, whereas a cross-fitted orthogonal
score replaces that requirement with a nuisance product-rate condition. The
supplement develops both routes, along with a finite-family calibration
inequality for the split selector and a bound for approximate resolvent
solves. Its selector comparison is relative to the best candidate in a fixed
admissible family, not to an unrestricted adaptive-graph oracle.

\fi

\ifsupplement

\section{Empirical Evidence and Operating Regimes}
\label{sec:experiments_supp}

\subsection{Protocol}

The experiments are organized around the terms of the
variation-allocation certificate. Fractured and multi-fracture designs create
unknown boundaries and hence structural leakage. Correlated residuals stress
attenuation when the residual itself is graph smooth, and weak instruments
stress the remaining-variation denominator. Representation perturbations and
empirical covariate clouds test whether the graph is admissible. Smooth
designs serve as negative controls, since boundary adaptation has no inherent
advantage there.

We evaluate both recovery of the first-stage residual and estimation of the
structural response. Hidden quantities enter only evaluation and explicitly
labeled oracle references. In the nonlinear synthetic tables, control
correlation is computed against the outcome-relevant $U$; the certificate
tables use the graph target $V^\star=X-g(Z)$ and report its discrepancy from
$U$ when relevant.

When the structural response $f_0$ is known, we report
\begin{equation}
    \mathrm{MSE}_{\mathrm{resp}}
    =
    \frac{1}{|\mathcal X_{\mathrm{test}}|}
    \sum_{x\in\mathcal X_{\mathrm{test}}}
    \{\hat f(x)-f_0(x)\}^2 .
    \label{eq:response_mse}
\end{equation}
We refer to \eqref{eq:response_mse} as structural-response MSE. In the
nonlinear experiments, $\hat f$ is the response component learned after the
generated control has been fixed. The metric is available when the response is
simulated, but not in the four real-IV applications. Accordingly, the
nonlinear tables are point-estimation benchmarks that are sensitive to
boundary errors in the control. The finite-sample bound in Section
\ref{sec:theory}, by contrast, concerns a linear control-function coefficient.
Uniform propagation through an arbitrary neural second stage would require
additional stability assumptions on both the response class and its training
algorithm.

The evidence proceeds from aggregate performance to mechanism and then to
external geometry. First, a common response learner compares generated
controls across the full synthetic grid. Next, targeted experiments examine
leakage, attenuation, noise mismatch, relevance, and the behavior of the
outcome-free selector under graph shattering or coherent miswiring. Finally,
semi-synthetic known-truth designs and four real-IV applications use empirical
feature geometries. The real applications describe estimate stability and
guard behavior; they do not test the latent-manifold assumption.

The benchmark comprises six designs: fractured, smooth, multi-fracture,
weak-instrument, correlated-residual, and high-dimensional nuisance. Each
embeds a one-dimensional latent coordinate in an observed feature vector of
dimension $d_Z$. Thus $d_Z=50$ stresses the ambient representation; it does
not assert an intrinsic dimension of 50. The latent dimension is $m=1$, within
the regime discussed in Remark \ref{rem:dimension_inference}. Combining
$n\in\{800,1500,3000\}$, $d_Z\in\{5,20,50\}$, and ten seeds yields 540 runs.
The main fractured cell uses $n=800$ and $d_Z=50$.

Targeted studies at that same size examine first-stage cross-validation, split
selection, mechanism ablations, graph certificates, and boundary stress. The
rewiring calibration uses $n=300$, 50 replications, and ten corruption levels.
\ref{app:exp_details} adds a linear-inference diagnostic,
meta-parameter sensitivity, component timings, and approximate solves. The
semi-synthetic real-$Z$ experiment retains observed tabular covariates but
simulates $X$, $Y$, $U$, and $V^\star$. Card schooling, Mroz labor supply,
Cigarette Demand, and Social Insurance contain no known structural response,
so we use them only to study estimate stability and guard actions.

Fixed A-IHF uses $K=15$, $\tau=2$, $\lambda=30$, and $p=80$. The
observational rule searches the finite family
\begin{equation}
\begin{aligned}
K&\in\{10,15,20\}, & \tau&\in\{1,2\},\\
\lambda&\in\{10,30,50\}, & p&\in\{70,80,90\}.
\end{aligned}
\end{equation}
There are 54 candidates per run. The graph-admissibility-filtered
observational variant uses \eqref{eq:gobs_rule} with minimum final weighted
degree $d_{0,n}=10^{-4}$, $\omega_n=0.5$, and
$a_{0,n}=1$. Thus every retained vertex must meet the degree floor, the
largest post-cut component must contain at least half of the sample, and local
treatment contrast cannot exceed the global-pair scale. All 540 selected
graphs in the locked main benchmark pass the added compatibility screen
(maximum $a_n=0.937$), so it does not alter any reported benchmark estimate.
These diagnostics use only $(Z,X)$. The asymptotic theory uses a local-graph
sequence $K_n$; the experiments use a finite candidate family of $K$ values
as part of the finite-path estimator. The trace in the GCV term is estimated
by Hutchinson probes. In the main nonlinear benchmark, the second stage is
the same additive neural regressor for all control-function methods. The
gradient-free claim concerns only the construction of $\hat{\bv}$ in Stage 1.
The linear alignment experiment in Section \ref{sec:linear_alignment} uses
ordinary least squares.

The matched projective audit reruns guarded A-IHF, graph ridge GCV, and graph
spectral GCV on every cell and seed, producing 540 paired controls per method.
For each run it records $p_n$, $q_n$, the exact distortion in
\eqref{eq:beta_decomposition}, its Cauchy--Schwarz bound, the ratio between
them, and the slack above the universal frontier. These simulator-only
quantities are computed after selection and never used for tuning. A separate
scale intervention uses the six $n=800$, $d_Z=50$ cells. Each estimated
control, as well as oracle $U$, is multiplied by
$c\in\{0.01,0.1,1,10,100\}$ before refitting the common additive ELU model.
We run the experiment with both raw controls and centered, unit-variance
controls. All positive factors give the same canonicalized input up to
roundoff, so the latter row is an implementation check rather than five
distinct inputs.

The baseline suite includes tuned graph ridge, graph spectral regression,
series regression, kernel ridge, random forest, histogram gradient boosting,
XGBoost, and neural control functions. Every tuning criterion uses only
$(Z,X)$, through first-stage cross-validation or graph GCV. The residual
protocols necessarily differ. Supervised inductive learners produce
out-of-fold residuals; A-IHF and the other graph methods are transductive
full-sample smoothers whose effective degrees of freedom are controlled by
GCV. For supervised rows, a single five-fold partition both ranks
hyperparameters and produces the final out-of-fold predictions. This is
CV-tuned OOF prediction, not nested cross-fitting: an observation is excluded
from its final fit but still contributes to global hyperparameter selection.
The released metadata records this distinction. OOF fitting prevents
mechanical interpolation of a training residual, whereas transductive fitting
uses the full feature graph and may retain same-sample bias. Neither protocol
is an inferential sample-splitting guarantee.
\ref{sec:baseline_protocol} gives the full protocol and seed variation;
linear IV and DeepIV appear only in separate diagnostics.

The suite also contains a neural control function selected by held-out
prediction error for $X$ given $Z$. When $g$ is the conditional mean,
\eqref{eq:prediction_residual_identity} makes this the held-out recovery risk
for $V^\star$ as well. It does not, however, separately measure boundary-local leakage or
the relevance denominator in \eqref{eq:graph_causal_bound}.

\subsection{Synthetic Benchmark and First-Stage Selection}

\subsubsection{Main Fractured Design}

\begin{table}[ht]
\centering
\caption{\textbf{Main fractured design.} Selected methods from the tuned baseline suite, averaged over 10 seeds for $n=800$, $d_Z=50$. First-stage selection uses only $(Z,X)$.}
\label{tab:main}
\begingroup
\setlength{\tabcolsep}{3pt}
\renewcommand{\arraystretch}{1.05}
\resizebox{\linewidth}{!}{%
\begin{tabular}{llcc}
\toprule
\textbf{Method} & \textbf{Selection} & \textbf{Control corr.} $\uparrow$ & \textbf{Structural-response MSE} $\downarrow$ \\
\midrule
A-IHF & observational with guardrail & 0.948 & \textbf{1.732} \\
A-IHF & observational & 0.948 & \textbf{1.732} \\
A-IHF & fixed & \textbf{0.953} & 1.810 \\
Graph ridge CF & graph GCV & 0.853 & 2.163 \\
Graph spectral CF & graph GCV & 0.858 & 2.179 \\
Random forest CF & first-stage CV & 0.827 & 2.277 \\
XGBoost CF & first-stage CV & 0.793 & 2.603 \\
Kernel ridge CF & first-stage CV & 0.750 & 2.883 \\
CV-tuned Deep CF & first-stage CV & 0.764 & 3.153 \\
Deep ensemble CF & fixed ensemble & 0.715 & 3.589 \\
\bottomrule
\end{tabular}
}
\endgroup
\end{table}

In the main fractured cell, both observational rules select the same effective
candidate and improve on the fixed default in structural-response MSE. Their
selection uses no simulator-only quantities. Among the alternatives, graph
GCV and the out-of-fold tree methods are the closest competitors.

\begin{figure}[ht]
\centering
\includegraphics[width=0.82\linewidth]{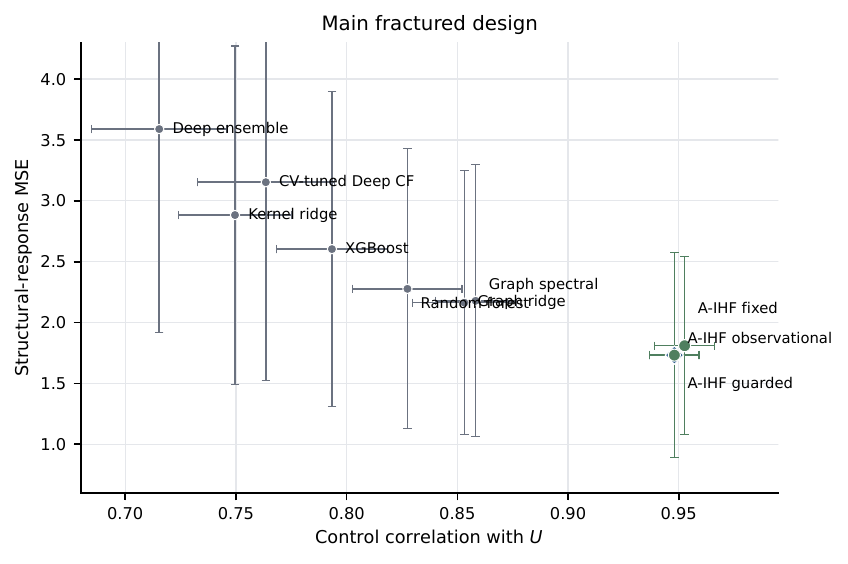}
\caption{\textbf{Residual recovery and downstream error in the main fractured design.} Guarded observational A-IHF uses only $(Z,X)$ for first-stage selection and lies in the high-correlation, low-error region.}
\label{fig:main_tradeoff}
\end{figure}

\subsubsection{Observational Neural Cross-Validation}

\noindent
To focus on first-stage tuning, we compare A-IHF with a neural
control function chosen by held-out treatment prediction. The same tuned
neural baseline appears in the full benchmark; Table
\ref{tab:cv_deep_cf} focuses on the three
$n=800$, $d_Z=50$ designs that make the contrast easiest to see.

\begin{table}[ht]
\centering
\small
\caption{\textbf{First-stage cross-validation.} Entries report mean structural-response MSE, with mean correlation against $U$ in parentheses, over 10 seeds for $n=800$, $d_Z=50$. The CV-tuned neural control function selects architecture, weight decay, learning rate, and stopping epoch by held-out prediction error for $X$ given $Z$.}
\label{tab:cv_deep_cf}
\begingroup
\setlength{\tabcolsep}{4pt}
\renewcommand{\arraystretch}{1.05}
\resizebox{\linewidth}{!}{%
\begin{tabular}{lcccc}
\toprule
\textbf{Design} & \textbf{A-IHF obs.} & \textbf{A-IHF fixed} & \textbf{CV-tuned Deep CF} & \textbf{Unreg. Deep CF} \\
\midrule
Fractured & \textbf{1.668} (0.948) & 1.779 (0.953) & 3.098 (0.756) & 4.433 (0.652) \\
High-dimensional nuisance & \textbf{2.270} (0.767) & 2.789 (0.637) & 4.881 (0.647) & 10.867 (0.093) \\
Smooth & 1.035 (0.971) & 0.822 (0.977) & \textbf{0.787} (0.971) & 9.395 (0.528) \\
\bottomrule
\end{tabular}
}
\endgroup
\end{table}

Cross-validation markedly improves the neural first stage over unregularized
training. Its target, however, is held-out prediction error for $X$ given
$Z$. For a linear second stage, Corollary \ref{cor:graph_causal} instead
identifies the pair
$\{\|M_{\hat{\bv}}\bu\|_2,\kappa_n(\hat{\bv})\}$ as intrinsic; direct
residual error is sufficient but may be loose. Predictive tuning does not
close the gap to A-IHF in the fractured or high-dimensional nuisance designs.
On the smooth design, where boundary adaptation should matter less, the
cross-validated neural model is competitive with fixed A-IHF.

\subsubsection{Linear Second-Stage Alignment}
\label{sec:linear_alignment}

To evaluate the setting covered directly by Corollary
\ref{cor:graph_causal}, we retain the same first-stage designs and replace the
nonlinear response by
\begin{equation}
    Y_{\mathrm{lin}}=\beta_0 X+\gamma_0 U+\epsilon,
    \qquad
    \beta_0=1,\quad \gamma_0=2.5.
\end{equation}
For each generated control $\hat V$, we estimate $\beta_0$ by ordinary least squares on $(1,X,\hat V)$. Table \ref{tab:linear_stage2} reports the mean absolute coefficient error over 10 seeds.

\begin{table}[ht]
\centering
\small
\caption{\textbf{Linear second-stage alignment.} Entries are mean absolute errors $|\hat\beta-\beta_0|$ for $Y_{\mathrm{lin}}=\beta_0X+\gamma_0U+\epsilon$, $n=800$, $d_Z=50$.}
\label{tab:linear_stage2}
\begingroup
\setlength{\tabcolsep}{4pt}
\renewcommand{\arraystretch}{1.05}
\resizebox{\linewidth}{!}{%
\begin{tabular}{lccccc}
\toprule
\textbf{Design} & \textbf{2SLS} & \textbf{CV Deep CF} & \textbf{Regularized Deep CF} & \textbf{A-IHF fixed} & \textbf{A-IHF obs.} \\
\midrule
Fractured & 0.048 & 0.085 & 0.061 & 0.054 & \textbf{0.031} \\
High-dimensional nuisance & 0.096 & 0.172 & \textbf{0.056} & 0.419 & 0.253 \\
Smooth & 0.262 & 0.146 & \textbf{0.065} & 0.162 & 0.085 \\
Correlated residual & 0.447 & 0.421 & 0.485 & \textbf{0.398} & 0.409 \\
\bottomrule
\end{tabular}
}
\endgroup
\end{table}

Observational A-IHF has the smallest coefficient error in the fractured
design, together with correlation $0.953$ with $U$ and
$\kappa_n=5.555$. The picture changes in the high-dimensional nuisance
design. Observational tuning improves on fixed A-IHF, but regularized Deep CF
and 2SLS have smaller coefficient errors. Correlation alone does not explain
the difference: A-IHF correlates more strongly with $U$ than CV-tuned Deep CF
($0.771$ versus $0.651$), yet leaves less residualized treatment variation
($1.950$ versus $2.864$). That variation is precisely the denominator in
\eqref{eq:graph_causal_bound}. All methods struggle when residuals are graph
correlated, as the spectral-leakage analysis predicts. We report the broader
54-cell coefficient comparison next and a finite-sample orthogonal-score
diagnostic in \ref{sec:orthogonal_diag}.

\subsubsection{Full Benchmark}

\begin{table}[ht]
\centering
\caption{\textbf{Aggregate results over 54 benchmark cells.} Each cell is averaged over 10 seeds; the table reports summaries of those cell averages. Structural-response MSE is the main nonlinear benchmark metric. The last column is a linear coefficient diagnostic.}
\label{tab:aggregate}
\begingroup
\setlength{\tabcolsep}{4pt}
\renewcommand{\arraystretch}{1.05}
\resizebox{\linewidth}{!}{%
\begin{tabular}{lcccc}
\toprule
\textbf{Method} & \textbf{Mean response MSE} $\downarrow$ & \textbf{Median response MSE} $\downarrow$ & \textbf{Median corr.} $\uparrow$ & \textbf{Mean linear abs. error} $\downarrow$ \\
\midrule
A-IHF, guarded observational & \textbf{2.671} & \textbf{2.533} & \textbf{0.904} & 0.204 \\
A-IHF, observational & 2.688 & \textbf{2.533} & \textbf{0.904} & 0.204 \\
A-IHF, fixed & 2.772 & 2.710 & 0.872 & 0.266 \\
Graph ridge CF & 2.912 & 2.756 & 0.839 & 0.142 \\
Random forest CF & 2.973 & 2.825 & 0.826 & 0.148 \\
Graph spectral CF & 3.049 & 3.168 & 0.850 & \textbf{0.115} \\
Kernel ridge CF & 3.148 & 3.034 & 0.781 & 0.138 \\
XGBoost CF & 3.219 & 3.429 & 0.798 & 0.120 \\
Deep ensemble CF & 3.385 & 3.545 & 0.740 & 0.120 \\
Series CF & 3.417 & 3.622 & 0.741 & 0.121 \\
CV-tuned Deep CF & 3.444 & 3.472 & 0.762 & 0.134 \\
Fixed Deep CF & 3.537 & 3.719 & 0.728 & 0.136 \\
HistGBDT CF & 3.606 & 3.566 & 0.779 & 0.186 \\
Graph ridge CF, fixed & 3.822 & 3.187 & 0.729 & 0.390 \\
\bottomrule
\end{tabular}
}
\endgroup
\end{table}

Guarded observational A-IHF has the lowest mean structural-response MSE over
the 54 cells. It ranks first in 17 cells and among the top three in 36;
observational and fixed A-IHF lead another 8 and 7 cells, respectively.
Counting the family rather than a single variant, A-IHF beats the best
non-A-IHF method in 32 cells. The linear diagnostic gives a different
ordering: graph spectral, Deep ensemble, XGBoost, and series CF all have lower
mean coefficient error.

The two outcomes should be kept separate. Structural-response MSE evaluates a
nonlinear point-estimation task under a common response learner. The final
column evaluates a low-capacity linear coefficient, the setting addressed
directly by Corollary \ref{cor:graph_causal}. That coefficient depends on
projective error and remaining treatment variation, not simply on the global
$L_2$ distance between $\hat V$ and $U$. Table \ref{tab:aggregate} therefore
reveals a tradeoff rather than uniform dominance. The matched audit in Table
\ref{tab:projective_empirical} identifies the source of the different
rankings.

\begin{table}[ht]
\centering
\caption{\textbf{Matched projective audit on the 54-cell grid.} Values average
540 paired runs per method.  Exact distortion is
$|\gamma_0\bx^\top M_{\hat{\bv}}\bu/
(\bx^\top M_{\hat{\bv}}\bx)|$; CF bound is the Cauchy--Schwarz bound in
\eqref{eq:beta_bound}; alignment is their per-run ratio; frontier slack is
$p_n-F_{\rho_n}(q_n)$.  Lower is better except for $q_n$.}
\label{tab:projective_empirical}
\begingroup
\setlength{\tabcolsep}{3.2pt}
\renewcommand{\arraystretch}{1.05}
\resizebox{\linewidth}{!}{%
\begin{tabular}{lrrrrrrr}
\toprule
\textbf{Method} & \textbf{Linear error} & \textbf{Exact distortion} &
$\boldsymbol{p_n}$ & $\boldsymbol{q_n}$ & \textbf{CF bound} &
\textbf{Alignment} & \textbf{Frontier slack} \\
\midrule
A-IHF, guarded & 0.204 & 0.205 & \textbf{0.357} & 0.548 & 1.018 & 0.211 & \textbf{0.328} \\
Graph ridge GCV & 0.142 & 0.142 & 0.398 & 0.551 & 0.978 & 0.159 & 0.384 \\
Graph spectral GCV & \textbf{0.115} & \textbf{0.115} & 0.387 & \textbf{0.619} & \textbf{0.879} & \textbf{0.136} & 0.366 \\
\bottomrule
\end{tabular}
}
\endgroup
\end{table}

Graph spectral wins the linear column even though A-IHF has stronger average
control fidelity. A-IHF has
the lowest $p_n$, the lowest direct control RMSE ($0.590$, versus $0.630$ for
graph ridge and $0.696$ for graph spectral), and the smallest frontier slack.
Graph spectral, however, leaves more normalized treatment variation and its
unexplained control component is less aligned with the residualized treatment.
Its exact distortion is therefore $0.115$, compared with $0.205$ for A-IHF.
Across all 1,620 method-runs, exact distortion correlates $0.998$
with realized linear absolute error. In matched comparisons, A-IHF has
lower $p_n$ than graph spectral in 51.9\% of runs and lower direct RMSE in
66.1\%, but lower exact distortion in only 34.3\%. Projective fidelity alone
therefore does not rank linear coefficients; relevance and the orientation of
the remaining error also matter.

\begin{table}[ht]
\centering
\caption{\textbf{Positive generated-control scale intervention.} Results use
the six $n=800$, $d_Z=50$ cells and ten seeds.  ``MSE range'' is the mean
within-run range over $c\in\{0.01,0.1,1,10,100\}$.  Canonicalization maps all
positive scales to the same input by construction.}
\label{tab:scale_intervention}
\begingroup
\setlength{\tabcolsep}{3.2pt}
\renewcommand{\arraystretch}{1.05}
\resizebox{\linewidth}{!}{%
\begin{tabular}{lrrrrr}
\toprule
\textbf{Method} & \textbf{Raw MSE, $c=1$} & \textbf{Canonical MSE} &
\textbf{Raw MSE range} & \textbf{Canonical range} &
\textbf{Max.\ OLS deviation} \\
\midrule
A-IHF, guarded & \textbf{2.076} & \textbf{2.077} & 1.223 & 0 & $2.08\times10^{-12}$ \\
Graph ridge GCV & 2.247 & 2.254 & 1.728 & 0 & $1.85\times10^{-12}$ \\
Graph spectral GCV & 2.356 & 2.360 & 1.344 & 0 & $2.09\times10^{-12}$ \\
Oracle $U$ & 1.101 & 1.102 & 1.733 & 0 & $2.12\times10^{-12}$ \\
\bottomrule
\end{tabular}
}
\endgroup
\end{table}

The scale intervention separates an algebraic invariance from the behavior of
a finite training pipeline. Across 2,400 intervention rows, the largest deviation
from the $c=1$ OLS coefficient is $2.12\times10^{-12}$; the largest changes in
$p_n$ and $q_n$ are $2.22\times10^{-16}$ and $3.33\times10^{-16}$.
The raw ELU learner is not comparably invariant under the fixed 500-epoch,
Adam, and weight-decay pipeline: its mean within-run MSE range is between
$1.223$ and $1.728$, including $1.733$ for the oracle control. Because the
same sensitivity occurs for $U$ itself, it is an optimization and
regularization calibration effect, not evidence against the control
direction. Canonicalization removes the arbitrary positive scale and
preserves the $c=1$ ranking: A-IHF remains below graph ridge and graph
spectral on this 60-run slice. We therefore use projective quantities only
for the linear coefficient theory and treat the nonlinear rows as empirical
learner evaluations.

The nonlinear benchmark consequently has a conditional stability
interpretation, not an algebraic scale-invariance theorem. If the response map
is locally Lipschitz in a calibrated control coordinate, replacing $U$ with
$\hat V$ perturbs the oracle response in proportion to
$\|\hat{\bv}-\bu\|_2$ on the evaluation distribution. A global norm can still
hide errors localized near a boundary: a smoother may estimate a linear
coefficient well while mixing observations across a jump in $g(Z)$. That
contamination is the structural-leakage term in Proposition
\ref{prop:diagnostic_certificate}. A-IHF trades some global smoothness for
boundary-sensitive recovery; it does not guarantee a uniformly smaller
residual norm. Subject to stability of the second-stage learner, the same
residual error carries into the learned response, in addition to the learner's
own estimation error. We therefore use the certificate terms as diagnostics
and hold the nonlinear architecture and training schedule fixed across
methods.

\subsubsection{Performance by Design}

\begin{table}[ht]
\centering
\caption{\textbf{Mean structural-response MSE by data-generating design.} Values average over $n$, $d_Z$, and seeds.}
\label{tab:dgp}
\begingroup
\setlength{\tabcolsep}{4pt}
\renewcommand{\arraystretch}{1.05}
\resizebox{\linewidth}{!}{%
\begin{tabular}{lcccc}
\toprule
\textbf{Design} & \textbf{A-IHF guardrail} & \textbf{A-IHF fixed} & \textbf{Best non-A-IHF} & \textbf{Best non-A-IHF method} \\
\midrule
Fractured & \textbf{3.526} & 3.718 & 4.228 & Graph spectral CF \\
Multi-fracture & \textbf{2.045} & 2.299 & 2.107 & Graph ridge CF \\
High-dimensional nuisance & \textbf{3.253} & 3.470 & 3.868 & RF CF \\
Smooth & 1.167 & 1.198 & \textbf{1.000} & Deep ensemble CF \\
Weak instrument & 0.803 & 0.738 & \textbf{0.619} & Graph ridge CF \\
Correlated residual & 5.231 & \textbf{5.212} & 5.450 & Graph spectral CF \\
\bottomrule
\end{tabular}
}
\endgroup
\end{table}

A-IHF leads on average in the fractured, multi-fracture, high-dimensional
nuisance, and correlated-residual groups, with the clearest margins in the
fractured and high-dimensional nuisance designs. Smooth designs favor standard
smoothers or neural baselines, and graph ridge performs best under weak
instruments. The correlated-residual group remains difficult for every graph
method because $V^\star$ has low-frequency graph energy, as anticipated by
\eqref{eq:leakage}; fixed A-IHF is slightly better than its guarded
observational variant in that group.

\begin{figure}[ht]
\centering
\includegraphics[width=0.86\linewidth]{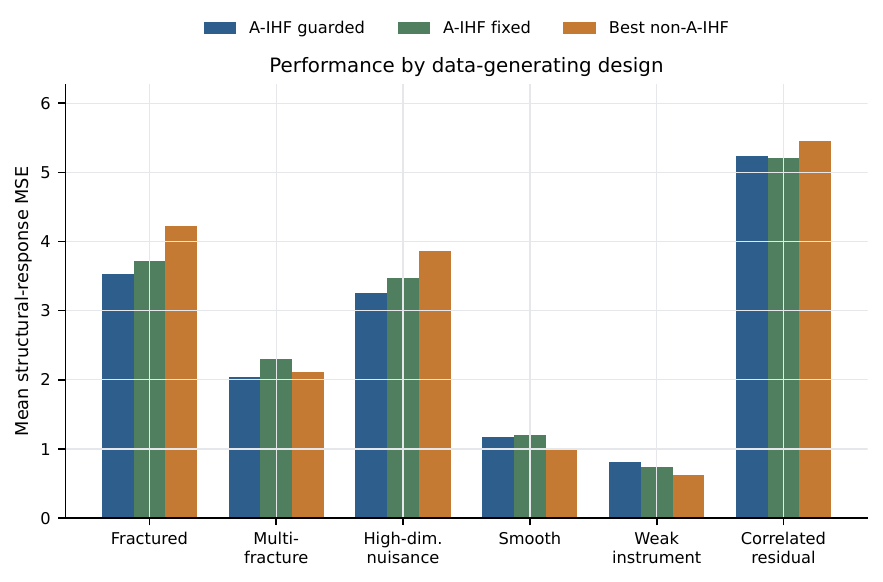}
\caption{\textbf{Mean structural-response MSE by data-generating design.} The comparison separates guarded observational A-IHF, fixed A-IHF, and the best non-A-IHF baseline in each design. A-IHF has its largest relative gains in fractured and high-dimensional nuisance designs. It is not uniformly best in smooth or weak-instrument settings.}
\label{fig:dgp_mse_map}
\end{figure}

\subsection{Mechanism and Failure-Mode Summary}

Aggregate performance alone does not reveal whether gains come from the
conductance update, the candidate family, or the admissibility filter. We
therefore examine graph construction and selection separately. On a fixed 54-cell grid, an unrestricted
observational search over-fragments the graph and raises mean response MSE
from 3.791 for fixed A-IHF to 5.878. Minimum-degree and
largest-component constraints change the selected graph and lower the mean to
4.521. Split selection is close to full-sample observational selection in the
fractured, high-dimensional nuisance, and correlated-residual designs, but is
less reliable in the smooth and weak-instrument controls.

The resulting mechanism claim is deliberately narrow. Removing anisotropy is
most damaging in fractured designs, whereas the pilot and roughness terms
matter only on parts of the grid. Representation perturbations show that the
method depends on a graph that respects first-stage geometry. Controlled
rewiring further distinguishes connectivity from compatibility:
topology-only checks accept coherently miswired graphs long after the
treatment-contrast screen begins to trigger fallback. Semi-synthetic
real-covariate experiments and real-IV applications then provide known-truth
and estimate-stability checks, respectively. Neither makes arbitrary
Euclidean neighborhoods in tabular data admissible. Detailed selection paths,
ablations, certificates, representation and rewiring studies, empirical
covariate checks, numerical identity tests, runtimes, and solver diagnostics
appear in the supplement.

\fi

\ifsupplement\else

\section{Empirical Evidence and Operating Regimes}
\label{sec:experiments}

\subsection{Protocol}

The experiments ask whether boundary adaptation improves the generated
control where the graph is informative, and whether the proposed diagnostics
flag settings in which it is not. Six synthetic designs isolate fractured and
smooth first stages, multiple boundaries, weak instruments, graph-correlated
residuals, and high-dimensional nuisance coordinates. We cross three sample
sizes with three ambient dimensions and average each of the resulting 54
cells over ten seeds. Structural-response MSE, computed against the known
response function, is the primary nonlinear metric. A separate linear outcome
is used for the coefficient diagnostics covered by the theory.

All first-stage tuning uses only $(Z,X)$. Supervised baselines use
cross-validated out-of-fold residuals; graph methods are transductive
smoothers tuned by graph GCV or the observational A-IHF rule. The baseline
suite includes graph ridge and spectral filters, series and kernel regression,
tree ensembles, boosting, and neural control functions. Every generated
control is passed to the same additive neural response learner. The full
candidate path, training protocol, seed variation, and local experiments are
reported in Section \ref{sec:experiments_supp} of the supplement.

\subsection{Aggregate Performance}

\begin{table}[ht]
\centering
\caption{\textbf{Aggregate results over 54 benchmark cells.} Entries summarize
cell averages; structural-response MSE is the main nonlinear metric. The last
column is a linear coefficient diagnostic.}
\label{tab:aggregate_main}
\begingroup
\setlength{\tabcolsep}{4pt}
\renewcommand{\arraystretch}{1.05}
\resizebox{\linewidth}{!}{%
\begin{tabular}{lcccc}
\toprule
\textbf{Method} & \textbf{Mean MSE} $\downarrow$ & \textbf{Median MSE} $\downarrow$ & \textbf{Median corr.} $\uparrow$ & \textbf{Linear error} $\downarrow$ \\
\midrule
A-IHF, guarded observational & \textbf{2.671} & \textbf{2.533} & \textbf{0.904} & 0.204 \\
A-IHF, observational & 2.688 & \textbf{2.533} & \textbf{0.904} & 0.204 \\
A-IHF, fixed & 2.772 & 2.710 & 0.872 & 0.266 \\
Graph ridge CF & 2.912 & 2.756 & 0.839 & 0.142 \\
Random forest CF & 2.973 & 2.825 & 0.826 & 0.148 \\
Graph spectral CF & 3.049 & 3.168 & 0.850 & \textbf{0.115} \\
Kernel ridge CF & 3.148 & 3.034 & 0.781 & 0.138 \\
XGBoost CF & 3.219 & 3.429 & 0.798 & 0.120 \\
Deep ensemble CF & 3.385 & 3.545 & 0.740 & 0.120 \\
Series CF & 3.417 & 3.622 & 0.741 & 0.121 \\
CV-tuned Deep CF & 3.444 & 3.472 & 0.762 & 0.134 \\
Fixed Deep CF & 3.537 & 3.719 & 0.728 & 0.136 \\
HistGBDT CF & 3.606 & 3.566 & 0.779 & 0.186 \\
Graph ridge CF, fixed & 3.822 & 3.187 & 0.729 & 0.390 \\
\bottomrule
\end{tabular}}
\endgroup
\end{table}

Guarded observational A-IHF has the lowest mean structural-response MSE in
Table \ref{tab:aggregate_main}. It ranks first in 17 cells and among the top
three in 36; counting all three variants, the A-IHF family wins 32 cells. In
the main fractured design, guarded A-IHF reduces MSE from 2.163 for graph
ridge, the closest listed competitor, to 1.732. The advantage is not uniform.
Graph spectral CF is markedly better on the aggregate linear diagnostic, and
standard smoothers lead in several smooth and weak-instrument cells.

\begin{figure}[ht]
\centering
\includegraphics[width=0.82\linewidth]{fig_main_fractured_tradeoff.pdf}
\caption{\textbf{Main fractured design.} Guarded observational A-IHF uses no
outcomes for first-stage selection and lies in the high-correlation,
low-response-error region.}
\label{fig:main_tradeoff_main}
\end{figure}

\subsection{Projective Audit and Operating Regimes}

\begin{table}[ht]
\centering
\caption{\textbf{Matched projective audit on the 54-cell grid.} Values average
540 paired runs per method. Lower is better except for $q_n$.}
\label{tab:projective_empirical_main}
\begingroup
\setlength{\tabcolsep}{3.2pt}
\renewcommand{\arraystretch}{1.05}
\resizebox{\linewidth}{!}{%
\begin{tabular}{lrrrrrrr}
\toprule
\textbf{Method} & \textbf{Linear error} & \textbf{Exact distortion} &
$\boldsymbol{p_n}$ & $\boldsymbol{q_n}$ & \textbf{CF bound} &
\textbf{Alignment} & \textbf{Frontier slack} \\
\midrule
A-IHF, guarded & 0.204 & 0.205 & \textbf{0.357} & 0.548 & 1.018 & 0.211 & \textbf{0.328} \\
Graph ridge GCV & 0.142 & 0.142 & 0.398 & 0.551 & 0.978 & 0.159 & 0.384 \\
Graph spectral GCV & \textbf{0.115} & \textbf{0.115} & 0.387 & \textbf{0.619} & \textbf{0.879} & \textbf{0.136} & 0.366 \\
\bottomrule
\end{tabular}}
\endgroup
\end{table}

The audit explains the linear ranking. A-IHF gives the smallest projective
error $p_n$ and frontier slack, but graph spectral leaves more residualized
treatment variation and better aligns the remaining error. Its exact
coefficient distortion is therefore lower. Across all 1,620 matched runs,
exact distortion correlates $0.998$ with realized linear absolute error. The
experiment supports the variation-allocation account: fidelity alone does not
rank linear coefficients; relevance and error orientation also enter.

\begin{table}[ht]
\centering
\caption{\textbf{Mean structural-response MSE by design.} Values average over
sample size, ambient dimension, and seeds.}
\label{tab:dgp_main}
\begingroup
\setlength{\tabcolsep}{4pt}
\renewcommand{\arraystretch}{1.05}
\resizebox{\linewidth}{!}{%
\begin{tabular}{lcccc}
\toprule
\textbf{Design} & \textbf{A-IHF guardrail} & \textbf{A-IHF fixed} & \textbf{Best alternative} & \textbf{Method} \\
\midrule
Fractured & \textbf{3.526} & 3.718 & 4.228 & Graph spectral \\
Multi-fracture & \textbf{2.045} & 2.299 & 2.107 & Graph ridge \\
High-dimensional nuisance & \textbf{3.253} & 3.470 & 3.868 & Random forest \\
Smooth & 1.167 & 1.198 & \textbf{1.000} & Deep ensemble \\
Weak instrument & 0.803 & 0.738 & \textbf{0.619} & Graph ridge \\
Correlated residual & 5.231 & \textbf{5.212} & 5.450 & Graph spectral \\
\bottomrule
\end{tabular}}
\endgroup
\end{table}

The largest gains occur in fractured and high-dimensional nuisance designs;
the multi-fracture improvement is smaller. Smooth designs favor standard
smoothers, weak instruments favor graph ridge, and graph-correlated residuals
remain difficult for every graph method. These reversals match the theory:
anisotropy targets boundary leakage, not low-frequency residual attenuation
or a weak relevance denominator.

\begin{table}[ht]
\centering
\caption{\textbf{Positive generated-control scale intervention.} ``Range'' is
the mean MSE range over $c\in\{0.01,0.1,1,10,100\}$. Canonicalization centers
and standardizes the generated control.}
\label{tab:scale_intervention_main}
\begingroup
\setlength{\tabcolsep}{3.2pt}
\renewcommand{\arraystretch}{1.05}
\resizebox{\linewidth}{!}{%
\begin{tabular}{lrrrrr}
\toprule
\textbf{Method} & \textbf{Raw MSE} & \textbf{Canonical MSE} &
\textbf{Raw range} & \textbf{Canonical range} & \textbf{Max. OLS deviation} \\
\midrule
A-IHF, guarded & \textbf{2.076} & \textbf{2.077} & 1.223 & 0 & $2.08\times10^{-12}$ \\
Graph ridge GCV & 2.247 & 2.254 & 1.728 & 0 & $1.85\times10^{-12}$ \\
Graph spectral GCV & 2.356 & 2.360 & 1.344 & 0 & $2.09\times10^{-12}$ \\
Oracle $U$ & 1.101 & 1.102 & 1.733 & 0 & $2.12\times10^{-12}$ \\
\bottomrule
\end{tabular}}
\endgroup
\end{table}

Table \ref{tab:scale_intervention_main} separates the exact linear result from
the behavior of a trained nonlinear learner. Positive rescaling changes the
OLS coefficient and the projective quantities only at roundoff. The ELU
learner is scale-sensitive under a fixed optimization budget, including when
it receives oracle $U$; standardization removes the arbitrary scale. We use
projective quantities for the linear theory and treat nonlinear response MSE
as an empirical learner metric.

Controlled edge rewiring shows that connectivity alone can accept coherently
miswired graphs. The compatibility screen instead moves from graph use to
ridge fallback and, under severe corruption, abstention. Four real-IV
applications exhibit all three actions. Since their structural effects are
not observed, we interpret them as stability and guardrail checks, not
validation of identification. Section \ref{sec:experiments_supp} of the supplement reports the
rewiring calibration, real-data results, ablations, representation stress
tests, and full scale-intervention protocol.

\begin{figure*}[t]
\centering
\includegraphics[width=0.94\textwidth]{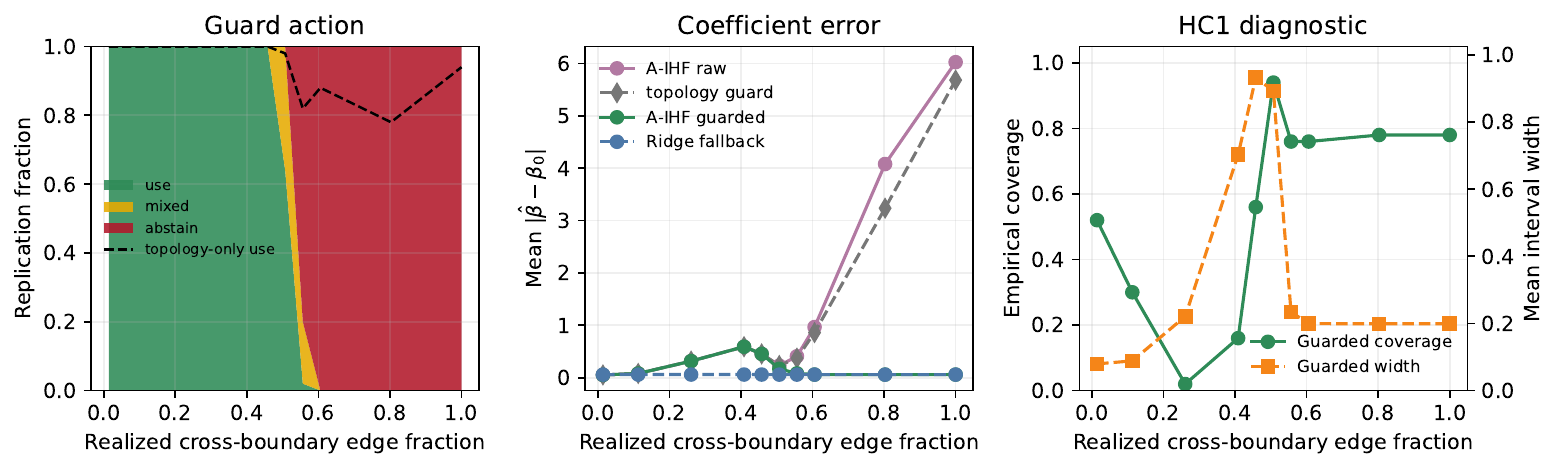}
\caption{\textbf{Guardrail calibration under controlled graph corruption.}
Connectivity remains high after coherent cross-boundary rewiring, while the
treatment-contrast screen increasingly triggers ridge fallback and eventually
abstention. The experiment calibrates an observable failure boundary; it is
not a uniform risk guarantee.}
\label{fig:guardrail_corruption_main}
\end{figure*}

\fi

\ifsupplement

\section{Additional Mechanism, Selection, and Failure-Mode Experiments}

\subsubsection{Selection and Graph-Admissibility Checks}
\label{sec:diagnostics}

All observational selections in the 540-run benchmark pass the relevance
screen in \eqref{eq:obs_rule}. Without the graph filter, the rule chooses
$p=70$ in every run and $K=10$ in 529 runs. The minimum-degree condition
changes this concentration: the guarded rule chooses $K=10$, $15$, and $20$
in 468, 40, and 32 runs, and $p=70$, $80$, and $90$ in 466, 28, and 46 runs.
The smallest selected weighted degree is $1.01\times10^{-4}$, just above the
prespecified floor, and the median largest-component fraction after cutting is
1.0. Thus the score tends to prefer sharp graphs, while the admissibility
filter removes its most fragmented choices.

An extended-grid experiment uses
\begin{equation}
K\in\{5,10,15,20\},\qquad p\in\{50,60,70,80,90\},
\end{equation}
and a fixed response evaluation grid $X_{\mathrm{test}}\in[-5,5]$. Over this
broader grid, the rule selects $p=50$ in all 540 runs and $K=5$ in 527. Mean
structural-response MSE rises from 3.791 for fixed A-IHF to 5.878. The
expanded family has therefore exposed an over-fragmentation failure rather
than a useful finer scale.

The extended-grid experiment is repeated with graph-admissibility constraints: the final graph must have mean degree at least 4, and the number of post-cut connected components must be no larger than $\sqrt n$. This guardrail moves the selected graph away from the sparsest candidate: $K=15$ is selected in 488 of 540 runs, the median number of post-cut components is 1, and the median largest-component fraction is 1.0. The average structural-response MSE of observational A-IHF decreases from 5.878 to 4.521 under the same fixed-grid evaluation. Table \ref{tab:guardrail_diag} reports the comparison.

Anisotropy is intended to weaken edges across structural jumps while retaining
a macroscopic neighborhood graph within each smooth region. Once a candidate
breaks that graph into many small components, it is more plausibly isolating
finite-sample noise than recovering structural boundaries. The degree and
component conditions formalize this distinction: they prevent shattering
without forbidding local attenuation across a boundary.

The main benchmark uses the graph-admissibility filter defined in Section
\ref{sec:obs_selection}, with largest-component threshold $\omega_n=0.5$
and edge-compatibility threshold $a_{0,n}=1$. The component value encodes a
benchmark prior that the tested fractured designs contain a dominant
macroscopic geometry after legitimate boundary attenuation. It is not a
statistical constant and is not used in the graph-limit theory. In
applications with several comparable disconnected regimes, such as three
equally sized subpopulations, this screen should be replaced by a
domain-appropriate component-size rule or removed and reported as a
graph-certificate diagnostic. The estimator is therefore the candidate
family together with its admissibility filters, not an unrestricted search
over arbitrarily shattered or globally incompatible graphs.

\begin{table}[ht]
\centering
\caption{\textbf{Fixed-grid graph-admissibility check.} Values average over the same 54 benchmark cells and use $X_{\mathrm{test}}\in[-5,5]$.}
\label{tab:guardrail_diag}
\begingroup
\small
\setlength{\tabcolsep}{3pt}
\renewcommand{\arraystretch}{1.08}
\resizebox{\linewidth}{!}{%
\begin{tabular}{p{0.34\linewidth}cp{0.40\linewidth}}
\toprule
\textbf{Procedure} & \textbf{Mean structural-response MSE} $\downarrow$ & \textbf{Selected graph summary} \\
\midrule
A-IHF, fixed & 3.791 & fixed $K=15$, $p=80$ \\
A-IHF, observational broad grid & 5.878 & $K=5$ in 527/540; $p=50$ in 540/540 \\
A-IHF, observational with graph guardrail & 4.521 & $K=15$ in 488/540; median components 1 \\
\bottomrule
\end{tabular}
}
\endgroup
\end{table}

The filter improves the broad-grid search, although fixed A-IHF still performs
better on this evaluation. Connectivity constraints therefore address one
failure of observational tuning, not every source of selection error. The
main results use a prespecified admissible family; broader searches must be
treated as part of the graph-admissibility problem.

\subsubsection{Edge-Corruption Calibration}
\label{sec:edge_corruption_calibration}

Connectivity detects shattering, but not a connected graph whose edges link
the wrong observations. We therefore calibrate the
compatibility screen in \eqref{eq:edge_compatibility} on a known-truth
piecewise-smooth first stage with $n=300$. Starting from a symmetric
15-nearest-neighbor graph, corruption level $\rho$ replaces a $\rho$ fraction
of within-region edges by equally weighted cross-boundary edges. Treatment,
outcome, and the non-graph fallback features are held fixed. The experiment
uses 50 Monte Carlo replications, ten pre-specified corruption levels, and
five-fold first-stage fitting, giving 500 run-level estimates. Selection is
repeated in each training fold and never reads held-out treatment or any
outcome.

\begin{table*}[t]
\centering
\small
\caption{\textbf{Guardrail calibration under edge rewiring.} Cross-edge is
the realized fraction of cross-boundary edges. Action columns are replication
fractions for the calibrated guard. Errors are mean
$|\hat\beta-\beta_0|$. The topology guard omits
\eqref{eq:edge_compatibility}; the calibrated guard uses $a_{0,n}=1$ and
falls back foldwise to ridge. Results use 50 replications per level.}
\label{tab:edge_corruption_calibration}
\begingroup
\setlength{\tabcolsep}{3.5pt}
\renewcommand{\arraystretch}{1.05}
\resizebox{\textwidth}{!}{%
\begin{tabular}{rrrrrrrrr}
\toprule
$\boldsymbol{\rho}$ & \textbf{Cross-edge} & \textbf{Topo.\ use} &
\textbf{Use} & \textbf{Mixed} & \textbf{Abstain} &
\textbf{Raw error} & \textbf{Topo.\ error} & \textbf{Guarded error} \\
\midrule
0.00 & 0.014 & 1.00 & 1.00 & 0.00 & 0.00 & 0.052 & 0.052 & 0.052 \\
0.25 & 0.261 & 1.00 & 1.00 & 0.00 & 0.00 & 0.316 & 0.316 & 0.316 \\
0.40 & 0.409 & 1.00 & 1.00 & 0.00 & 0.00 & 0.591 & 0.591 & 0.591 \\
0.50 & 0.507 & 0.98 & 0.64 & 0.36 & 0.00 & 0.230 & 0.232 & 0.168 \\
0.55 & 0.557 & 0.82 & 0.02 & 0.18 & 0.80 & 0.409 & 0.372 & 0.082 \\
0.60 & 0.606 & 0.88 & 0.00 & 0.00 & 1.00 & 0.966 & 0.861 & 0.059 \\
1.00 & 1.000 & 0.94 & 0.00 & 0.00 & 1.00 & 6.024 & 5.681 & 0.059 \\
\bottomrule
\end{tabular}}
\endgroup
\end{table*}

\begin{figure*}[t]
\centering
\includegraphics[width=\textwidth]{fig_guardrail_corruption_calibration.pdf}
\caption{\textbf{Calibration path from graph use to abstention.} The
topology-only guard frequently accepts a connected but globally miswired
graph. The edge-compatibility screen begins foldwise fallback near a 0.5
realized cross-boundary fraction and fully abstains from 0.6 onward. HC1
coverage and width are descriptive generated-control diagnostics, not a
formal coverage theorem.}
\label{fig:edge_corruption_calibration}
\end{figure*}

At a realized cross-edge fraction of $0.507$, 36\% of replications use a
mixture of A-IHF and ridge folds and the mean fallback fraction is $0.132$.
At $0.557$, 80\% fully abstain and the mean fallback fraction is $0.920$.
From $0.606$ onward every replication abstains, so guarded error equals ridge
error. In contrast, the topology-only guard still reports full use in 94\% of
replications at complete rewiring and its mean coefficient error is $5.681$.
Connectivity alone is therefore unable to detect coherent miswiring.

The screen does not provide uniform risk protection. Before $a_n$ reaches its
threshold, raw and guarded A-IHF coincide and may both be worse than ridge:
at cross-edge fraction $0.261$, for example, their mean error is $0.316$.
The experiment locates an observable abstention boundary rather than an oracle
performance boundary. As a compatibility check with the locked results, we
also compute $a_n$ for all 540 selected benchmark graphs and for every
$K\in\{10,15,20\}$ on 3,800 reconstructed real-IV training folds. All 11,400
real-IV candidate graphs and all benchmark selections pass, so the added
screen changes none of those estimates.

\subsubsection{Split-Selection Check}
\label{sec:split_selection_diag}

Proposition \ref{prop:selector_oracle} analyzes fixed candidate smoothers on an
evaluation split. Our implemented split experiment is more modest: one
subsample selects parameters from $(Z,X)$, after which the transductive graph
is refit on the full sample. It measures parameter-selection stability and
computational cost, not the oracle inequality itself.

\begin{table}[ht]
\centering
\small
\caption{\textbf{Split-selection check.} Entries report mean nonlinear structural-response MSE over 10 seeds for $n=800$, $d_Z=50$. The split-selected row selects parameters on a subsample and refits the graph smoother on the full sample. The last column is the mean runtime ratio relative to full observational selection.}
\label{tab:split_selection}
\begingroup
\setlength{\tabcolsep}{4pt}
\renewcommand{\arraystretch}{1.05}
\resizebox{\linewidth}{!}{%
\begin{tabular}{lcccc}
\toprule
\textbf{Design} & \textbf{A-IHF full obs.} & \textbf{A-IHF split-selected} & \textbf{Response-MSE difference} & \textbf{Runtime ratio} \\
\midrule
Fractured & 1.263 & 1.264 & 0.001 & 0.42 \\
High-dimensional nuisance & 2.059 & 2.067 & 0.008 & 0.29 \\
Smooth & 0.955 & 1.384 & 0.429 & 0.42 \\
Weak instrument & 0.719 & 1.114 & 0.395 & 0.41 \\
Correlated residual & 6.385 & 6.385 & 0.000 & 0.46 \\
\bottomrule
\end{tabular}
}
\endgroup
\end{table}

Split selection closely matches full observational selection in the fractured,
high-dimensional nuisance, and correlated-residual designs, but is less
reliable in the smooth and weak-instrument cases. It chooses $p=70$ in all 50
runs, $K=10$ in 44, $\lambda=10$ in 43, and $\tau=1$ in 28.

\subsubsection{Mechanism Ablation}
\label{sec:mechanism_ablation}

We compare the main graph operations at $n=800$, $d_Z=50$. The ablations use the same seeds and downstream learner as the split-selection experiment.

\begin{table}[ht]
\centering
\small
\caption{\textbf{A-IHF mechanism ablation.} Entries report mean nonlinear structural-response MSE over 10 seeds. The full observational rule uses the 54-candidate family.}
\label{tab:mechanism_ablation}
\begingroup
\setlength{\tabcolsep}{4pt}
\renewcommand{\arraystretch}{1.05}
\resizebox{\linewidth}{!}{%
\begin{tabular}{lccccc}
\toprule
\textbf{Design} & \textbf{Fixed A-IHF} & \textbf{Isotropic} & \textbf{No pilot} & \textbf{Obs. full} & \textbf{Obs. no roughness} \\
\midrule
Fractured & 1.308 & 2.583 & 1.766 & \textbf{1.263} & 1.266 \\
High-dimensional nuisance & 2.556 & 2.164 & 2.442 & 2.059 & \textbf{2.041} \\
Smooth & \textbf{0.846} & 0.963 & 0.877 & 0.955 & 1.367 \\
Weak instrument & \textbf{0.606} & 0.902 & 0.836 & 0.719 & 1.568 \\
Correlated residual & 5.990 & 9.197 & \textbf{5.952} & 6.385 & 6.385 \\
\bottomrule
\end{tabular}
}
\endgroup
\end{table}

The ablation clarifies which parts of A-IHF matter in each regime. In the
fractured design, isotropic smoothing reduces control correlation from $0.953$
to $0.727$ and increases response error; removing the pilot raises MSE from
$1.308$ to $1.766$. The hard cutoff has virtually no statistical effect on
this grid and should be viewed as a sparsification device. The roughness
penalty leaves the fractured and correlated-residual selections unchanged but
improves the smooth and weak-instrument designs.

\subsubsection{Representation Check}
\label{sec:representation_check}

The representation experiment varies the graph input and addresses Assumption \ref{ass:geometry}. Table \ref{tab:rep_stress} reports observational A-IHF under an oracle latent representation, the observed first-stage feature representation, principal-component representations, and observed features with additional irrelevant noise. \ref{sec:meta_sensitivity_app} reports the sensitivity of the observational score to $\alpha$ and $c_\kappa$.

\begin{table}[ht]
\centering
\small
\caption{\textbf{Representation stress test.} Mean structural-response MSE of observational A-IHF over 10 seeds for $n=800$, $d_Z=50$. The oracle latent representation is a simulation reference and is not used in the main benchmark.}
\label{tab:rep_stress}
\begingroup
\setlength{\tabcolsep}{4pt}
\renewcommand{\arraystretch}{1.05}
\resizebox{\linewidth}{!}{%
\begin{tabular}{lcccccc}
\toprule
\textbf{Design} & \textbf{Latent oracle} & \textbf{Observed $Z$} & \textbf{PCA-5} & \textbf{PCA-10} & \textbf{Noisy $Z$} & \textbf{Noisy PCA-5} \\
\midrule
Fractured & \textbf{1.689} & 1.767 & 2.070 & 3.507 & 3.190 & 3.301 \\
High-dimensional nuisance & \textbf{1.143} & 2.469 & 2.637 & 2.805 & 2.903 & 3.213 \\
\bottomrule
\end{tabular}
}
\endgroup
\end{table}

The latent representation gives the lowest response error, and the observed
features remain usable. Degraded representations can be much worse. In
particular, fixed A-IHF with PCA-10 has mean MSE $22.862$ in the fractured
design and $48.631$ in the high-dimensional nuisance design. Observational
selection mitigates, but does not eliminate, this failure. A-IHF should
therefore be understood as a residual extractor conditional on the graph
representation it receives.

\subsubsection{Graph Certificate and Boundary Stress}
\label{sec:boundary_stress}

Downstream response error combines several mechanisms. To separate them, we
report the certificate terms from Proposition
\ref{prop:diagnostic_certificate}. Each is a normalized root-mean-square
quantity for $V^\star=X-g(Z)$ and is available only in simulation, where
$\mathbf{g}$ and $V^\star$ are known.

\begin{table}[ht]
\centering
\small
\caption{\textbf{Graph-admissibility certificate.} Mean over 10 seeds for $n=800$, $d_Z=50$. Correlation and RMSE are computed against $V^\star=X-g(Z)$. The bound column is leakage plus residual attenuation for the graph residual target; the coefficient certificate also includes first-stage noise mismatch.}
\label{tab:graph_certificate}
\begingroup
\setlength{\tabcolsep}{3pt}
\renewcommand{\arraystretch}{1.05}
\resizebox{\linewidth}{!}{%
\begin{tabular}{llcccccc}
\toprule
\textbf{Design} & \textbf{Method} & \textbf{Corr.} & \textbf{RMSE} & \textbf{Leakage} & \textbf{Atten.} & \textbf{Bound} & $\boldsymbol{\kappa_n}$ \\
\midrule
Fractured & A-IHF obs. & 0.953 & 0.372 & 0.133 & 0.328 & 0.461 & 5.555 \\
Fractured & A-IHF fixed & 0.957 & 0.291 & 0.236 & 0.158 & 0.394 & 5.232 \\
Smooth & A-IHF obs. & 0.977 & 0.247 & 0.110 & 0.218 & 0.327 & 0.909 \\
Smooth & A-IHF fixed & 0.982 & 0.207 & 0.131 & 0.162 & 0.292 & 0.848 \\
Weak instrument & A-IHF obs. & 0.961 & 0.292 & 0.212 & 0.215 & 0.427 & 0.563 \\
Weak instrument & A-IHF fixed & 0.961 & 0.281 & 0.249 & 0.162 & 0.411 & 0.488 \\
Correlated residual & A-IHF obs. & 0.170 & 0.993 & 0.021 & 0.992 & 1.013 & 7.091 \\
Correlated residual & A-IHF fixed & 0.287 & 0.971 & 0.046 & 0.970 & 1.016 & 7.016 \\
High-dimensional nuisance & A-IHF obs. & 0.771 & 0.656 & 0.629 & 0.242 & 0.871 & 1.950 \\
High-dimensional nuisance & A-IHF fixed & 0.639 & 1.077 & 1.100 & 0.093 & 1.194 & 1.017 \\
\bottomrule
\end{tabular}
}
\endgroup
\end{table}

The certificate clarifies three distinct behaviors. Under correlated
residuals, leakage is small but attenuation is nearly one because the smoother
retains little of the low-frequency residual in the generated control. In the
high-dimensional nuisance design, observational selection lowers leakage and
raises relevance relative to the fixed default. Under weak instruments,
correlation remains high while $\kappa_n$ is small. Residual fidelity and
second-stage relevance must therefore be read together.

We also vary representation quality and first-stage strength. Table
\ref{tab:boundary_summary} averages over five designs and ten seeds and shows
the two endpoints of each path; for instrument strength, the left endpoint is
the weak-instrument setting.

\begin{table}[ht]
\centering
\small
\caption{\textbf{Boundary stress tests.} Values average over five designs and ten seeds at $n=800$, $d_Z=50$. Correlation and RMSE are computed against $V^\star=X-g(Z)$. Each cell reports the low-endpoint value followed by the high-endpoint value. For instrument strength, the endpoint path is $0.2\to1.0$; for noise dimensions, $0\to200$; for representation noise, $0\to2$.}
\label{tab:boundary_summary}
\begingroup
\setlength{\tabcolsep}{3pt}
\renewcommand{\arraystretch}{1.05}
\resizebox{\linewidth}{!}{%
\begin{tabular}{llcccc}
\toprule
\textbf{Stress path} & \textbf{Method} & \textbf{Corr.} & \textbf{RMSE} & \textbf{Bound} & $\boldsymbol{\kappa_n}$ \\
\midrule
Instrument strength & A-IHF obs. & $0.820\to0.766$ & $0.379\to0.512$ & $0.451\to0.620$ & $0.321\to3.214$ \\
Instrument strength & A-IHF fixed & $0.830\to0.765$ & $0.367\to0.566$ & $0.437\to0.661$ & $0.310\to2.920$ \\
Irrelevant dimensions & A-IHF obs. & $0.766\to0.545$ & $0.512\to1.129$ & $0.620\to1.400$ & $3.214\to0.684$ \\
Irrelevant dimensions & A-IHF fixed & $0.765\to0.564$ & $0.566\to1.383$ & $0.661\to1.543$ & $2.920\to0.174$ \\
Representation noise & A-IHF obs. & $0.766\to0.538$ & $0.512\to1.251$ & $0.620\to1.514$ & $3.214\to0.215$ \\
Representation noise & A-IHF fixed & $0.765\to0.556$ & $0.566\to1.493$ & $0.661\to1.628$ & $2.920\to0.028$ \\
\bottomrule
\end{tabular}
}
\endgroup
\end{table}

Irrelevant coordinates and isotropic representation noise increase the
certificate bound and reduce residualized treatment variation. Observational
selection often moderates the decline in RMSE and $\kappa_n$, but cannot
remove the dependence on a geometry-preserving representation. Weak
instruments fail differently: at strength $0.2$, the residual remains highly
correlated with its target while $\kappa_n$ is close to zero. The denominator
in Corollary \ref{cor:graph_causal} then becomes the limiting term.

\section{Empirical Covariate Geometry and Stability Checks}

\subsubsection{Semi-Synthetic Real-\texorpdfstring{$Z$}{Z} Experiment}
\label{sec:realz_diag}

To separate graph geometry from outcome uncertainty, we replace the synthetic
feature cloud with covariates from the \texttt{diabetes},
\texttt{breast\_cancer}, and \texttt{digits} data sets while continuing to
simulate treatment and outcomes. For each data set we generate fractured,
smooth, and weak-instrument designs with $n=400$ and ten seeds. This retains
known $U$, $V^\star$, and structural response while placing the graph on an
empirical covariate cloud. The nonlinear response uses $U$. The coefficient
column of Table \ref{tab:realz_overall} instead uses $V^\star$ in a linear
outcome, so it evaluates recovery of the graph residual target rather than
replicating the design of Section \ref{sec:linear_alignment}.

\begin{table}[ht]
\centering
\small
\caption{\textbf{Semi-synthetic real-$Z$ experiment.} Mean over three data sets, three simulated designs, and ten seeds. The covariates are real; $X$, $Y$, $U$, and $V^\star$ are simulated.}
\label{tab:realz_overall}
\begingroup
\setlength{\tabcolsep}{4pt}
\renewcommand{\arraystretch}{1.05}
\resizebox{\linewidth}{!}{%
\begin{tabular}{lccc}
\toprule
\textbf{Method} & \textbf{Structural-response MSE} $\downarrow$ & \textbf{$|\hat\beta-\beta_0|$} $\downarrow$ & \textbf{Control corr.} $\uparrow$ \\
\midrule
Kernel ridge CF & \textbf{6.094} & 0.248 & \textbf{0.884} \\
A-IHF, observational & 6.760 & 0.360 & 0.860 \\
Random forest CF & 7.469 & \textbf{0.236} & 0.860 \\
CV-tuned Deep CF & 9.866 & 0.455 & 0.853 \\
2SLS & 10.598 & 0.446 & 0.805 \\
Regularized Deep CF & 10.886 & 0.497 & 0.839 \\
Gradient boosting CF & 12.295 & 0.622 & 0.823 \\
A-IHF, fixed & 17.920 & 0.977 & 0.821 \\
Unregularized Deep CF & 30.043 & 1.394 & 0.140 \\
\bottomrule
\end{tabular}
}
\endgroup
\end{table}

These empirical covariate clouds are less favorable to A-IHF than the
synthetic graph. Observational A-IHF ranks second in response MSE and improves
the fixed default by $11.160$ points on average, but kernel ridge has the
lowest response MSE and highest control correlation, while random forest has
the smallest coefficient error. The unregularized neural first stage remains
unstable.

\begin{table}[ht]
\centering
\small
\caption{\textbf{Semi-synthetic real-$Z$ results by simulated design.} Entries are mean nonlinear structural-response MSE.}
\label{tab:realz_by_dgp}
\begingroup
\setlength{\tabcolsep}{4pt}
\renewcommand{\arraystretch}{1.05}
\resizebox{\linewidth}{!}{%
\begin{tabular}{lcccc}
\toprule
\textbf{Design} & \textbf{A-IHF obs.} & \textbf{KRR CF} & \textbf{RF CF} & \textbf{CV Deep CF} \\
\midrule
Real-$Z$ fractured & \textbf{1.968} & 2.198 & 2.601 & 3.293 \\
Real-$Z$ smooth & \textbf{4.863} & 5.420 & 8.087 & 8.051 \\
Real-$Z$ weak instrument & 13.448 & \textbf{10.665} & 11.719 & 18.253 \\
\bottomrule
\end{tabular}
}
\endgroup
\end{table}

Broken down by design, observational A-IHF leads in the fractured and smooth
cases but not under weak instruments. In the latter, its selected residual
correlates $0.913$ with the target yet leaves only $\kappa_n=0.115$ of
residualized treatment variation. This is the denominator effect in
\eqref{eq:graph_causal_bound}: high control fidelity cannot compensate for
the absence of treatment variation after conditioning.

The selection pattern resembles the synthetic benchmark. The observational rule selects $p=70$ in all 90 semi-synthetic runs, $K=10$ in 89 runs, $\lambda=10$ in 55 runs, and $\lambda=30$ in 35 runs. The admissible family remains part of the estimator.

\subsubsection{Real-IV Applications and Guard Actions}
\label{sec:real_iv_applications}

We consider four prespecified real-IV applications: Card schooling
\citep{card1995using,wooldridge2010econometric}, the standard wage--education
IV specification applied to the working-women subsample of the Mroz data
\citep{mroz1987sensitivity,wooldridge2020introductory}, the 1995 CigarettesSW
demand design \citep{stock2007introduction}, and the randomized-default Social
Insurance teaching extract \citep{cai2015social,huntingtonklein2022effect}. The
samples contain 3,010, 428, 48, and 1,378 complete observations,
respectively. All generated controls are out of fold. Each of 200 bootstrap
resamples repeats graph selection and the foldwise admissibility check; Social
Insurance resamples natural-village clusters, and the other applications
resample rows. Because the structural response is unobserved, these
experiments describe estimate stability and the action of the filter rather
than accuracy relative to a known causal effect.

\begin{table*}[t]
\centering
\small
\caption{\textbf{Four real-IV applications.} Each cell reports the
original-sample coefficient followed by the 2.5\% and 97.5\% quantiles from
200 full-procedure bootstrap resamples. Raw A-IHF is a diagnostic that ignores
abstention; guarded A-IHF is the reported estimator.}
\label{tab:real_iv}
\begingroup
\setlength{\tabcolsep}{3.8pt}
\renewcommand{\arraystretch}{1.08}
\resizebox{\textwidth}{!}{%
\begin{tabular}{lccccc}
\toprule
\textbf{Data set} & \textbf{2SLS} & \textbf{Ridge CF} &
\textbf{Graph ridge CF} & \textbf{A-IHF raw} &
\textbf{A-IHF guarded} \\
\midrule
Card &
$0.132\ [0.037,0.255]$ &
$0.133\ [0.034,0.217]$ &
$0.072\ [0.041,0.091]$ &
$0.071\ [0.050,0.087]$ &
$0.133\ [0.034,0.217]$ \\
Mroz &
$0.061\ [0.001,0.124]$ &
$0.062\ [-0.00005,0.122]$ &
$0.069\ [-0.003,0.148]$ &
$0.075\ [0.012,0.148]$ &
$0.075\ [0.012,0.148]$ \\
Cigarettes &
$-1.277\ [-1.793,-0.862]$ &
$-1.256\ [-1.774,-0.872]$ &
$-1.030\ [-3.031,1.153]$ &
$-1.501\ [-2.279,0.047]$ &
$-1.501\ [-2.279,0.047]$ \\
Social Insurance &
$0.791\ [0.126,1.979]$ &
$0.677\ [0.154,1.428]$ &
$0.171\ [-0.501,2.371]$ &
$0.274\ [-0.736,1.643]$ &
$0.444\ [-0.191,1.597]$ \\
\bottomrule
\end{tabular}}
\endgroup
\end{table*}

\begin{figure*}[t]
\centering
\includegraphics[width=0.94\textwidth]{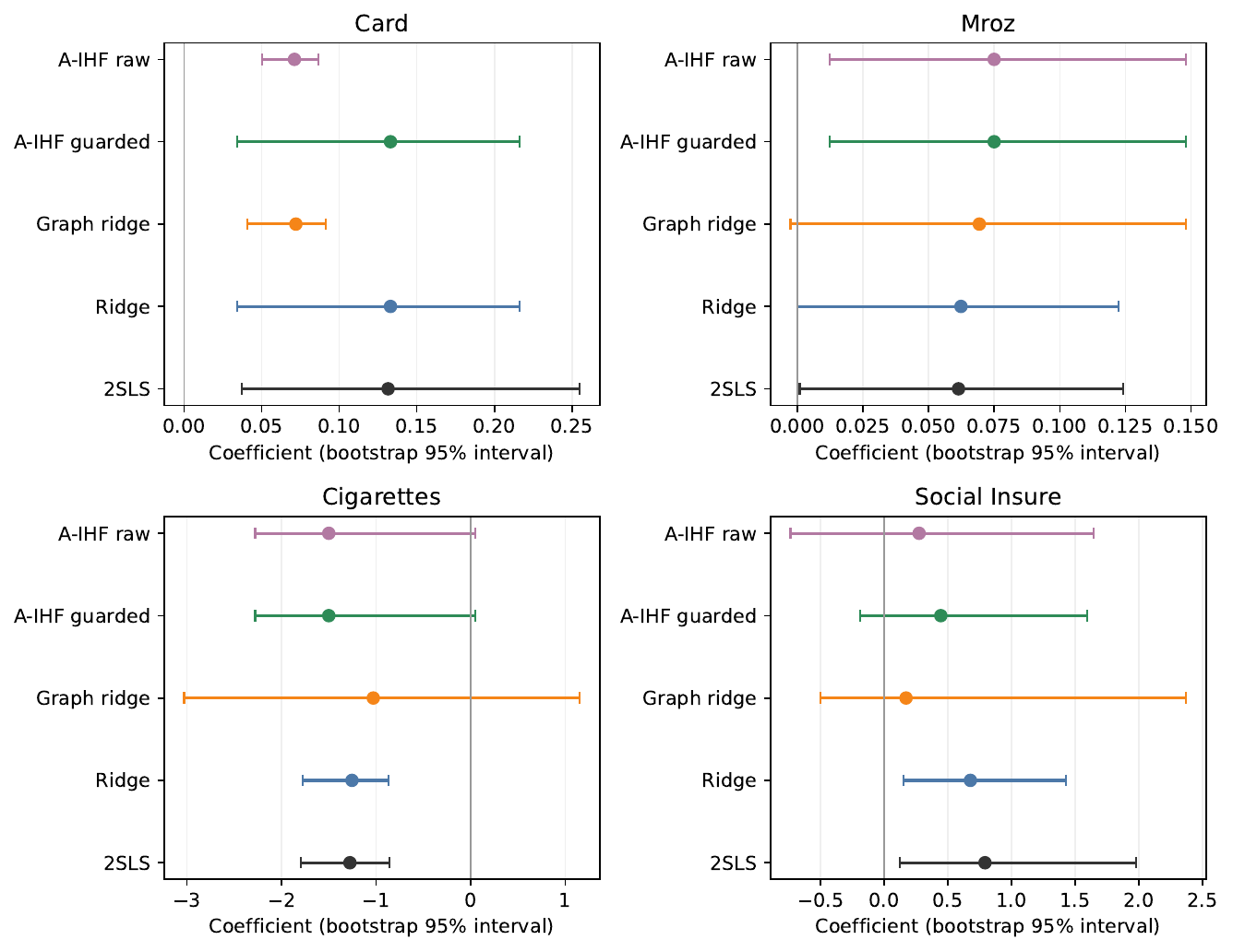}
\caption{\textbf{Real-IV coefficients and bootstrap intervals.} Raw A-IHF is
shown only to expose what the graph rule would return without abstention.
Card fully falls back to ridge, Mroz and Cigarettes use A-IHF, and Social
Insurance uses a foldwise mixture. Overlapping intervals preclude claims of
uniform superiority.}
\label{fig:real_iv_coefficients}
\end{figure*}

Here the guard actions are as important as the coefficient ranking. Card's
selected full graph has 425 components, a largest-component fraction of
$0.484$, and zero minimum degree; every bootstrap resample therefore fully
abstains, so guarded A-IHF coincides with ridge. Mroz and Cigarettes have
connected admissible graphs and use A-IHF. Of the Mroz bootstrap samples, 196
use A-IHF in every fold and four are mixed. Cigarettes uses A-IHF in all 200,
although its sample of 48 produces a wide interval that crosses zero. Social
Insurance has an inadmissible full graph and an original-sample fallback
fraction of $0.8$; 32 bootstrap samples use A-IHF throughout, 80 are mixed,
and 88 abstain completely. We report its wide interval without interpreting
it as evidence of a gain.

The partial first-stage $F$ statistics are $13.26$, $55.40$, $244.73$, and
$116.33$. In 200 instrument permutations, the corresponding 95th percentiles
are $3.37$, $3.19$, $3.78$, and $3.55$. These values support first-stage
relevance relative to the permutation reference; they say nothing about the
exclusion restriction.
The edge-compatibility statistic is below one in every original and
bootstrap training fold, so the new screen leaves all coefficients in
Table \ref{tab:real_iv} unchanged. The applications instead exercise the
degree and component safeguards.

\paragraph{Known-truth experiments on the same covariate geometries.}
Using the same feature clouds, we also simulate piecewise or smooth treatment
mechanisms, residuals, and linear outcomes. Twenty replications per
data-set--design cell yield 1,120 method-runs. HistGBDT has the lowest
coefficient error in six of eight cells; linear OOF leads on Mroz--smooth and
guarded A-IHF on Mroz--piecewise. Raw A-IHF improves on graph ridge in six
cells and on ridge in five, but on HistGBDT in only one. Boundary adaptation
can therefore improve a fixed graph smoother on these covariates, while
flexible non-graph learners remain stronger in several tabular geometries.

\section{Numerical Checks of the Exact Theory}
\label{sec:theory_verification}

Because the projective results are algebraic, they can be checked numerically
case by case rather than only through a simulation trend. A seeded verifier
generates centered regressions, signed spectral responses, feasible frontier
attainers, monotone gain vectors, and both versions of the three-node
construction. The two sides of each identity are computed independently, and
the program exits with a nonzero status if an identity or lower bound exceeds
its stated floating-point tolerance. Table \ref{tab:theory_verification}
reports the largest discrepancy observed.

\begin{table}[ht]
\centering
\small
\caption{\textbf{Numerical checks of the exact theory.} All
tests use seed 20260725 and pass. ``Excess'' is the computed violation of a
lower bound; values near machine precision may be positive because of
floating-point roundoff.}
\label{tab:theory_verification}
\begingroup
\setlength{\tabcolsep}{3.2pt}
\renewcommand{\arraystretch}{1.05}
\resizebox{\linewidth}{!}{%
\begin{tabular}{lrrr}
\toprule
\textbf{Claim tested} & \textbf{Cases} & \textbf{Max. discrepancy} &
\textbf{Tolerance} \\
\midrule
Projective coefficient identity & 5,000 & $3.11\times10^{-15}$ & $2\times10^{-9}$ \\
Nonzero scale invariance & 5,000 & $4.44\times10^{-16}$ & $2\times10^{-9}$ \\
Universal-frontier attainment & 2,121 & $4.79\times10^{-16}$ & $2\times10^{-12}$ \\
Universal-frontier excess & 25,000 & $8.55\times10^{-15}$ & $2\times10^{-12}$ \\
Spectral relevance identity & 19,998 & $4.97\times10^{-14}$ & $2\times10^{-9}$ \\
Spectral distortion identity & 19,998 & $1.90\times10^{-10}$ & $2\times10^{-9}$ \\
Monotone-cone excess & 100,000 & $0$ & $2\times10^{-10}$ \\
Exact topology escape & 81 & $3.54\times10^{-12}$ & $2\times10^{-9}$ \\
Connected-limit formulas & 138 & $2.66\times10^{-14}$ & $2\times10^{-9}$ \\
\bottomrule
\end{tabular}
}
\endgroup
\end{table}

The checks include gain rescalings from $10^{-9}$ to $10^9$, nonmonotone
target spectra, repeated-eigenvalue degeneracy, and connected conductances down
to $10^{-10}$. On the isotropic three-node graph, the
largest absolute relevance over 81 regularization values is
$6.00\times10^{-25}$; after the exact cut, the largest projective error is
$3.48\times10^{-25}$. The exact cut gain
$3\lambda/(1+3\lambda)$ also agrees with the mean-degree-scaled
implementation within $4.42\times10^{-13}$. For the
positive-conductance limit, the independently
computed closed forms agree with direct matrix projection within
$2.67\times10^{-14}$. These calculations complement rather than replace the
proofs in \ref{app:projective_proofs}; their purpose is to catch
sign, normalization, degeneracy, and pairwise-expansion mistakes.

\subsection{A Numerical Stress Test of the Rate Premise}
\label{sec:rate_stress}

The rate theorem assumes that the adaptive-graph perturbation
$\eta_n=\lambda_n\|L_n-L_n^0\|_{\mathrm{op}}$ tends to zero. We examine that
assumption directly, since improving prediction error does not imply it. In a
one-dimensional piecewise-$C^2$ first stage with one jump, we
set $K_n=\lceil0.65\log^2 n\rceil$,
$t_n=n^{-2/5}$, and
$\lambda_n=t_n/(K_n/n)^2$.  The oracle graph deletes edges crossing the known
partition.  For the implemented adaptive graph, $L_n^0$ retains its learned
within-region conductances and deletes only cross-region edges.  Table
\ref{tab:rate_stress} reports means over 20 independent samples.

\begin{table}[ht]
\centering
\small
\caption{\textbf{Stress test of the graph-admissibility premise.} RMSE is
$n^{-1/2}\|\hat V-V^\star\|_2$.  Cross mass is the fraction of learned
conductance on cross-partition edges.  The experiment evaluates a fixed
70th-percentile path and is not used to tune the reported benchmark.}
\label{tab:rate_stress}
\begingroup
\setlength{\tabcolsep}{3pt}
\renewcommand{\arraystretch}{1.05}
\begin{tabular}{rrrrr}
\toprule
$n$ & $K_n$ & Oracle RMSE & Adaptive RMSE & Cross mass / $\eta_n$ \\
\midrule
400  & 24 & 0.142 & 0.182 & $3.45{\times}10^{-3}$ / 9.28 \\
800  & 30 & 0.115 & 0.139 & $7.79{\times}10^{-4}$ / 11.02 \\
1600 & 36 & 0.092 & 0.098 & $2.57{\times}10^{-4}$ / 12.97 \\
3200 & 43 & 0.073 & 0.072 & $2.80{\times}10^{-5}$ / 9.79 \\
\bottomrule
\end{tabular}
\endgroup
\end{table}

Oracle and adaptive residual RMSE have log--log slopes $-0.320$ and
$-0.453$, respectively, and the learned cross-edge mass falls with slope
$-2.243$. Yet $\eta_n$ has slope $0.047$ on this path: the shrinking
cross mass is offset by the growing regularization scale in the operator
norm. The residual estimator improves, but the fixed-percentile path does not
satisfy the vanishing-perturbation premise on these sample sizes. The
experiment therefore reinforces the scope of Theorem \ref{thm:aihf_rate}: it
is a reduction conditional on graph admissibility, while a primitive analysis
of the percentile construction remains open. The artifact includes the
script and all 80 run-level records.

\section{Numerical and Evaluation Notes}

\ref{sec:runtime_diag} reports first-stage component timings. \ref{sec:solver_diag} reports conjugate-gradient checks for the final graph resolvent. The empirical solve error is below $5\times10^{-5}$ at $n=10000$ in the tested designs. \ref{sec:evaluation_grid_note} records the response-grid convention used in the benchmark.

\fi

\ifsupplement\else

\section{Limitations}
\label{sec:limitations}

A-IHF depends on the representation used to build the graph. If unrelated
observations become neighbors, conductance adaptation cannot reconstruct the
missing geometry. Irrelevant coordinates and measurement noise produce this
failure in the representation study. Even on a useful graph, the systematic
signal and residual must be separated spectrally: low-frequency residual
variation is absorbed by the smoother, while weak instruments can leave
$\kappa_n(\hat{\bv})$ too small for a stable second stage.

The projective results cover one generated control and a linear outcome
regression. Proposition \ref{prop:conductance_escape} establishes an
expressivity separation, not consistency of the implemented percentile rule.
The rate theorem likewise begins with a graph-admissible sequence and tracks
the perturbation caused by anisotropic conductance; a continuum limit for the
fully data-adaptive graph remains open. Several controls would require a
Grassmannian, rather than one-dimensional projective, analysis. Root-$n$
orthogonal inference also retains the usual product-rate restriction.

Selection is relative to a finite candidate path. The GCV term is a proxy for
first-stage smoothness, and Proposition \ref{prop:selector_oracle} retains an
explicit gap between that proxy and residual-recovery risk. Stable graphs and
roughly homoskedastic residuals make the approximation plausible;
heteroskedastic or graph-correlated residuals need not. An overly broad path
can also fragment the graph. The candidate family is therefore part of the
estimator, not an innocuous implementation choice.

The neural outcome experiments share one fixed learner but are not covered by
the linear coefficient theorem. Their scale intervention confirms that
finite optimization and regularization need not inherit the exact linear
invariance. The real-IV studies have a similarly limited role: because neither
the true control nor the structural effect is observed, their intervals show
stability under the stated instruments rather than validate identification.
The compatibility screen can also accept moderate graph corruption before it
triggers fallback. It is a diagnostic boundary, not a risk guarantee.

\section{Conclusion}
\label{sec:conclusion}

The first stage of a control-function estimator does more than predict
treatment: it decides which treatment variation becomes a control and which
remains available to identify the response. Framing that decision as
variation allocation separates three failures that a prediction score can
conflate---interpolation, leakage across a graph boundary, and loss of
residualized treatment variation. A-IHF addresses the second failure while
checking explicitly for the third.

For a linear second stage, the relevant discrepancy is projective rather than
Euclidean. The exact frontier and spectral identities show why control
fidelity alone cannot rank coefficient estimates. The monotone-filter lower
bound then identifies a restriction of fixed-graph residual filters, and the
connected construction shows precisely what treatment-dependent conductance
adds: it can change the graph eigenspaces and recover positive relevance where
every shift-invariant filter on the original graph fails.

The experiments support a conditional, not universal, recommendation.
Guarded observational A-IHF lowers mean structural-response MSE by 8.3\%
across 54 cells, and the A-IHF family wins 32 cells. Its gains concentrate in
fractured and high-dimensional nuisance designs. Smooth designs, weak
instruments, graph-correlated residuals, and poorly represented tabular
geometries narrow or reverse the comparison; graph spectral smoothing is also
better on the aggregate linear coefficient diagnostic. The broader point is
therefore a design rule for staged learning: when a learned residual becomes
a downstream representation, its selector should measure the information the
next stage retains and should be able to decline the representation on which
its structure depends.


\section*{Declaration of competing interest}

The authors declare that they have no known competing financial interests or
personal relationships that could have appeared to influence the work
reported in this paper.

\section*{Data and code availability}

The benchmark datasets used in this study are publicly available from the
original sources cited in the manuscript. The authors intend to make the
implementation code, experimental scripts, configuration files, and
run-level metadata available in a public repository at a later stage.

\section*{Declaration of generative AI and AI-assisted technologies in the manuscript preparation process}

In this work, generative AI tools were used to assist with drafting and
editing portions of the manuscript for clarity, grammar, and readability.
They were not used to formulate the scientific claims, develop theoretical
results or proofs, design the methodology or experiments, or interpret the
experimental results. All AI-assisted text was reviewed, verified, and
revised by the authors. We take full responsibility for the final content
of this work.

\fi

\ifsupplement

\section{Proofs}
\label{app:proofs}

\subsection{Auxiliary Finite-Sample Proofs}
\label{app:auxiliary_proofs}

\begin{proof}[Proof of Proposition \ref{prop:gcv_identity}]
Since $(I-S)\bx=(I-S)\mathbf{g}+(I-S)\boldsymbol{\eta}$,
\begin{align}
    \E[\|(I-S)\bx\|_2^2\mid Z]
    &=\|(I-S)\mathbf{g}\|_2^2
      +\E[\boldsymbol{\eta}^\top(I-S)^2\boldsymbol{\eta}\mid Z]\\
    &=\|(I-S)\mathbf{g}\|_2^2+\sigma^2\Tr\{(I-S)^2\}.
\end{align}
\end{proof}

\begin{proof}[Proof of Proposition \ref{prop:info}]
An $\I_{\mathrm{obs}}$-measurable selector takes the same value under the two
coupled mechanisms because their observed samples agree. On the positive-
probability event $\{h_P^\star\ne h_{P'}^\star\}$, that value cannot equal both
oracle selectors.
\end{proof}

\begin{proof}[Proof of Proposition \ref{prop:decomp}]
Substitute $\bx=\mathbf{g}+\bv^\star$ into
$\hat{\bv}_h=(I-S_h)\bx$ and subtract $\bv^\star$.
\end{proof}

\begin{proof}[Proof of Proposition \ref{prop:beta}]
Frisch--Waugh--Lovell gives
$\hat\beta(\hat{\bv})=(\bx^\top M_{\hat{\bv}}\by)/
(\bx^\top M_{\hat{\bv}}\bx)$. Substitution of Assumption
\ref{ass:second_stage} gives \eqref{eq:beta_decomposition}. Since
$M_{\hat{\bv}}$ is an orthogonal projector and
$M_{\hat{\bv}}\hat{\bv}=0$,
\begin{equation}
\begin{aligned}
|\bx^\top M_{\hat{\bv}}\bu|
&=|(M_{\hat{\bv}}\bx)^\top(M_{\hat{\bv}}\bu)|\\
&\le \sqrt{n\kappa_n(\hat{\bv})}\,\|M_{\hat{\bv}}\bu\|_2,\\
\|M_{\hat{\bv}}\bu\|_2
&=\|M_{\hat{\bv}}(\bu-\hat{\bv})\|_2
\le\|\bu-\hat{\bv}\|_2.
\end{aligned}
\end{equation}
These inequalities prove \eqref{eq:beta_bound}.
\end{proof}

\begin{proof}[Proof of Corollary \ref{cor:graph_causal}]
Proposition \ref{prop:beta} gives the first inequality. Moreover,
\begin{align}
\|M_{\hat{\bv}_h}\bu\|_2
&\le\|\bu-\hat{\bv}_h\|_2\\
&\le\|(I-S_h)\mathbf{g}\|_2+\|S_h\bv^\star\|_2
    +\|\bv^\star-\bu\|_2,
\end{align}
where the second line uses Proposition \ref{prop:decomp}. Substitution yields
\eqref{eq:graph_causal_bound}; the final display follows from
$(a+b+c)^2\le3(a^2+b^2+c^2)$.
\end{proof}

\begin{proof}[Proof of Proposition \ref{prop:diagnostic_certificate}]
Equation \eqref{eq:cert_residual} is Proposition \ref{prop:decomp} divided by
$\sqrt n$. Also,
\begin{align}
\mathsf{Proj}_n(h)
&\le n^{-1/2}\|\hat{\bv}_h-\bu\|_2 \\
&\le \mathsf{Leak}_n(h)+\mathsf{Atten}_n(h)+\mathsf{Noise}_n.
\end{align}
Equation \eqref{eq:cert_beta} follows from Proposition \ref{prop:beta}.
\end{proof}

\subsection{Proof of Proposition \ref{prop:ntk}}
\label{app:ntk_collapse}

\begin{proof}[Proof of Proposition \ref{prop:ntk}]
Under continuous-time gradient flow in the NTK regime,
\begin{equation}
    \frac{d}{dt}f_t(Z)=\Sigma(\bx-f_t(Z)).
\end{equation}
Let $\hat{\bv}_{NN}(t)=\bx-f_t(Z)$. Then
\begin{equation}
    \frac{d}{dt}\hat{\bv}_{NN}(t)=-\Sigma\hat{\bv}_{NN}(t).
\end{equation}
Expanding in the eigenbasis of $\Sigma$ gives
\begin{equation}
    \frac{d}{dt}
    \langle \hat{\bv}_{NN}(t),\phi_j\rangle
    =
    -\mu_j
    \langle \hat{\bv}_{NN}(t),\phi_j\rangle .
\end{equation}
Solving this scalar differential equation yields the stated expression. Since $\mu_j>0$ for every $j$, each coefficient converges to zero.
\end{proof}

\subsection{Proof of Proposition \ref{prop:leakage}}

\begin{proof}
Since $L$ is symmetric positive semidefinite, write
\begin{equation}
    L=\sum_{j=1}^n \mu_j\phi_j\phi_j^\top .
\end{equation}
Then
\begin{equation}
    S^2=(I+\lambda L)^{-2}
    =
    \sum_{j=1}^n
    (1+\lambda\mu_j)^{-2}
    \phi_j\phi_j^\top .
\end{equation}
Using $\E[\bv^\star(\bv^\star)^\top]=\Sigma_V$,
\begin{align}
    \E\|S\bv^\star\|_2^2
    &=
    \E[(\bv^\star)^\top S^2\bv^\star] \\
    &=
    \Tr(S^2\Sigma_V) \\
    &=
    \sum_{j=1}^n
    \frac{\phi_j^\top \Sigma_V \phi_j}
    {(1+\lambda\mu_j)^2}.
\end{align}
\end{proof}

\subsection{Projective and Spectral Proofs}
\label{app:projective_proofs}

\begin{proof}[Proof of Theorem \ref{thm:projective_frontier}]
Only centered directions matter.  Normalize $\tilde{\bx}$,
$\tilde{\bu}$, and the centered generated control to unit norm, and select
representatives of their projective directions whose inner products with
$\tilde{\bx}$ are nonnegative.  Let
\begin{equation}
    \theta=\arccos\rho_n,\qquad
    \varphi=\arcsin\sqrt{q_0}.
\end{equation}
The constraint $q_n(\hat{\bv})\ge q_0$ says that the acute projective angle
between $\tilde{\bx}$ and $\hat{\bv}$ is at least $\varphi$.  If
$\varphi\le\theta$, choosing
$\hat{\bv}\parallel\tilde{\bu}$ is feasible and gives $p_n=0$.

If $\varphi>\theta$, the closest feasible direction to $\tilde{\bu}$ lies in
$\operatorname{span}(\tilde{\bx},\tilde{\bu})$ at angle $\varphi$ from
$\tilde{\bx}$.  Its angle from $\tilde{\bu}$ is
$\varphi-\theta$, so the minimum projective error is
$\sin^2(\varphi-\theta)$.  Since
\begin{equation}
    \sin(\varphi-\theta)
    =
    \rho_n\sqrt{q_0}
    -
    \sqrt{1-\rho_n^2}\sqrt{1-q_0},
\end{equation}
and this expression is positive exactly when $\varphi>\theta$, the two cases
combine to give \eqref{eq:projective_frontier}.  The displayed directions
also prove attainment.
\end{proof}

\begin{proof}[Proof of Theorem \ref{thm:spectral_allocation}]
Let
\begin{equation}
    A=\sum_jw_j,\qquad
    B=\sum_jw_jr_j,\qquad
    C=\sum_jw_jr_j^2.
\end{equation}
Because $\tilde{\bx}^{\top}\hat{\bv}=B$ and
$\|\hat{\bv}\|_2^2=C$,
\begin{equation}
    n\kappa_n
    =
    A-\frac{B^2}{C}
    =
    \frac{AC-B^2}{C}.
\end{equation}
Pairing the $(i,j)$ and $(j,i)$ terms gives
\begin{equation}
    AC-B^2
    =
    \sum_{i<j}w_iw_j(r_i-r_j)^2,
\end{equation}
which proves \eqref{eq:spectral_relevance_identity}; division by $A$ gives
\eqref{eq:spectral_relevance_variance}.

Similarly, write
\begin{equation}
    T=\sum_jw_jt_j=\tilde{\bx}^{\top}\tilde{\bu},
    \qquad
    V=\sum_jw_jr_jt_j=\hat{\bv}^{\top}\tilde{\bu}.
\end{equation}
Then
\begin{equation}
    \tilde{\bx}^{\top}M_{\hat{\bv}}\tilde{\bu}
    =
    T-\frac{BV}{C}
    =
    \frac{TC-BV}{C},
\end{equation}
and direct pairwise expansion yields
\begin{equation}
    TC-BV
    =
    \sum_{i<j}
    w_iw_j(r_j-r_i)(t_ir_j-t_jr_i).
\end{equation}
Taking the ratio with the relevance identity and using
\eqref{eq:beta_decomposition} proves
\eqref{eq:spectral_bias_identity}.  Treatment-inactive components of
$\tilde{\bu}$ are orthogonal to both $\tilde{\bx}$ and $\hat{\bv}$ and
therefore do not enter this numerator.
\end{proof}

\begin{proof}[Proof of Proposition \ref{prop:monotone_barrier}]
For any nonzero $r\in\mathcal K$, let $R$ have this active spectral response.
The projection identity gives
\begin{equation}
    \|M_{R\tilde{\bx}}\bu\|_2^2
    =
    \|\bu_\perp\|_2^2+\|t\|_w^2
    -
    \frac{\langle r,t\rangle_w^2}{\|r\|_w^2}.
\end{equation}
The projection theorem for a closed convex cone, applied to both $t$ and
$-t$, gives the sign-safe support identity
\begin{equation}
    \sup_{r\in\mathcal K\setminus\{0\}}
    \frac{\langle r,t\rangle_w^2}{\|r\|_w^2}
    =
    \max\left\{
        \|\Pi_{\mathcal K}t\|_w^2,
        \|\Pi_{\mathcal K}(-t)\|_w^2
    \right\}.
\end{equation}
This form remains valid when either cone projection is zero. Substitution
into the preceding projection identity proves \eqref{eq:monotone_barrier}. A
complementary resolvent has response
$r(\mu)=\lambda\mu/(1+\lambda\mu)$, which is nonnegative, nondecreasing, and
constant on repeated-eigenvalue blocks.
\end{proof}

\begin{proof}[Proof of Proposition \ref{prop:conductance_escape}]
The centered eigenspace of the complete-graph Laplacian has repeated
eigenvalue $3/2$ under the mean-degree scaling in
\eqref{eq:scaled_laplacian}. Hence every scalar spectral response that produces a nonzero
residual applies one common gain to $\phi_g$ and $\phi_u$, giving
$R\tilde{\bx}=c\tilde{\bx}$ and
$\tilde{\bx}^{\top}M_{R\tilde{\bx}}\tilde{\bx}=0$.

For the cut graph, the mean weighted degree is $2/3$, so $L_1$ is
$3/2$ times the raw cut Laplacian. Hence
\begin{equation}
    L_1\phi_g=0,\qquad L_1\phi_u=3\phi_u.
\end{equation}
The complementary resolvent has gain zero at eigenvalue 0 and gain
$3\lambda/(1+3\lambda)$ at eigenvalue 3, which proves
\eqref{eq:conductance_escape}.

For the connected perturbation, give edges $(1,2)$ and $(2,3)$ weight
$\delta>0$ and edge $(1,3)$ weight one. Its raw Laplacian is
\begin{equation}
    L_\delta^{\mathrm{raw}}=
    \begin{pmatrix}
        1+\delta&-\delta&-1\\
        -\delta&2\delta&-\delta\\
        -1&-\delta&1+\delta
    \end{pmatrix},
\end{equation}
and its mean degree is $\bar d_\delta=(2+4\delta)/3$. Therefore the
scaled Laplacian $L_\delta=L_\delta^{\mathrm{raw}}/\bar d_\delta$ has
\begin{equation}
    \mu_g(\delta)=\frac{9\delta}{2+4\delta},\qquad
    \mu_u(\delta)=\frac{3(2+\delta)}{2+4\delta}
\end{equation}
on $\phi_g$ and $\phi_u$, respectively. If
\begin{equation}
    r_g=\frac{\lambda\mu_g(\delta)}{1+\lambda\mu_g(\delta)},
    \qquad
    r_u=\frac{\lambda\mu_u(\delta)}{1+\lambda\mu_u(\delta)},
\end{equation}
then the residual is $ar_g\phi_g+br_u\phi_u$.  Therefore
\begin{equation}
    \|M_{\hat{\bv}}\bu\|_2^2
    =
    \frac{a^2b^2r_g^2}{a^2r_g^2+b^2r_u^2}
    \longrightarrow0,
\end{equation}
while
\begin{equation}
    n\kappa_n
    =
    a^2+b^2-
    \frac{(a^2r_g+b^2r_u)^2}
    {a^2r_g^2+b^2r_u^2}
    \longrightarrow a^2.
\end{equation}
This proves the connected-graph limit.
\end{proof}

\subsection{Proof of Lemma \ref{lem:jump_separation}}

\begin{proof}
On the joint high-probability event in Assumptions \ref{ass:geometry} and
\ref{ass:pilot_threshold}, a local edge $(i,j)$ within one smooth region
satisfies
\begin{equation}
    |\tilde x_i-\tilde x_j|
    \le
    |g(T_i)-g(T_j)|+2r_{p,n}
    \le
    \rho_{g,n}+2r_{p,n}.
\end{equation}
Thus
\begin{equation}
    1\ge C_{ij}
    \ge
    \exp\left\{
    -\frac{(\rho_{g,n}+2r_{p,n})^2}{\gamma_n}
    \right\}
    \to 1 .
\end{equation}
For a local cross-jump edge, \eqref{eq:jump_gap} gives
\begin{equation}
    |\tilde x_i-\tilde x_j|
    \ge
    |g(T_i)-g(T_j)|-2r_{p,n}
    \ge
    \Delta-2\rho_{g,n}-2r_{p,n}.
\end{equation}
Therefore
\begin{equation}
    C_{ij}
    \le
    \exp\left\{
    -\frac{(\Delta-2\rho_{g,n}-2r_{p,n})^2}{\gamma_n}
    \right\}
    \to0 .
\end{equation}
The bounds are uniform over the stated edge sets on the same high-probability event.
\end{proof}

\subsection{Proof of Lemma \ref{lem:percentile_threshold}}

\begin{proof}
For within-region edges, \eqref{eq:percentile_within} gives
\begin{equation}
    C_{ij}
    =
    \exp\{-D_{ij}/\hat\gamma_n\}
    \ge
    e^{-c_\gamma}
\end{equation}
on the stated high-probability event. For cross-jump edges, \eqref{eq:percentile_gap} and \eqref{eq:percentile_jump} give
\begin{equation}
    C_{ij}
    =
    \exp\{-D_{ij}/\hat\gamma_n\}
    \le
    \exp\{-b_n/\hat\gamma_n\}
    \to0 .
\end{equation}
The two bounds are uniform over the corresponding edge sets.
\end{proof}

\subsection{Proof of Lemma \ref{lem:resolvent_leakage}}

\begin{proof}
The resolvent identity gives
\begin{equation}
    S_n-S_n^0
    =
    (I+\lambda_n L_n)^{-1}
    \lambda_n(L_n^0-L_n)
    (I+\lambda_n L_n^0)^{-1}.
\end{equation}
Since $L_n$ and $L_n^0$ are symmetric positive semidefinite, both resolvents have operator norm at most one. Hence
\begin{equation}
    \opnorm{S_n-S_n^0}
    \le
    \lambda_n\opnorm{L_n-L_n^0}
    =
    \eta_n .
\end{equation}
Multiplying by $n^{-1/2}\|\bx_n\|_2=O_p(1)$ gives \eqref{eq:resolvent_leakage_bound}.

For the degree bound, write $\Delta W=W_n-W_n^0$ and $\Delta D=D_n-D_n^0$. The operator norm of $\Delta D-\Delta W$ is bounded by a constant times $\max_i\sum_j|\Delta W_{ij}|$ by Gershgorin's theorem and the bound $\opnorm{\Delta W}\le \max_i\sum_j|\Delta W_{ij}|$ for symmetric matrices. The normalization gives
\begin{equation}
    L_n-L_n^0
    =
    \frac{D_n-W_n}{\bar d_n}
    -
    \frac{D_n^0-W_n^0}{\bar d_n^0}.
\end{equation}
The first difference is controlled by $\max_i\sum_j|\Delta W_{ij}|/\bar d_n^0$. The second is controlled by $|\bar d_n-\bar d_n^0|/\bar d_n^0$ and the operator-norm bound in \eqref{eq:mean_degree_scaling}. Condition \eqref{eq:mean_degree_scaling} gives $\opnorm{L_n-L_n^0}\le C\ell_n$.
\end{proof}

\subsection{Proof of Theorem \ref{thm:aihf_rate}}

\begin{proof}
By Proposition \ref{prop:decomp},
\begin{equation}
    \hat{\bv}_n-\bv_n^\star
    =
    \mathbf{g}_n-S_n\bx_n .
\end{equation}
Add and subtract $S_n^0\bx_n$:
\begin{equation}
    \hat{\bv}_n-\bv_n^\star
    =
    (I-S_n^0)\mathbf{g}_n
    -
    S_n^0\bv_n^\star
    +
    (S_n^0-S_n)\bx_n .
\end{equation}
Taking norms and dividing by $\sqrt n$ gives
\begin{align}
    n^{-1/2}\|\hat{\bv}_n-\bv_n^\star\|_2
    &\le
    n^{-1/2}\|(I-S_n^0)\mathbf{g}_n\|_2
    +
    n^{-1/2}\|S_n^0\bv_n^\star\|_2  \\
    &\quad+
    n^{-1/2}\|(S_n^0-S_n)\bx_n\|_2 .
\end{align}
The first term is controlled by \eqref{eq:oracle_graph_bias}, the second by \eqref{eq:residual_spectral_rate}, and the third by Lemma \ref{lem:resolvent_leakage}. This proves \eqref{eq:explicit_residual_rate}. The consistency statement follows from the displayed rate. Balancing $t_n^{s/2}$ with $n^{-1/2}t_n^{-m/4}$ gives $t_n\asymp n^{-2/(2s+m)}$ and rate $n^{-s/(2s+m)}$.
\end{proof}

\subsection{Proof of Corollary \ref{cor:explicit_beta_rate}}

\begin{proof}
Theorem \ref{thm:aihf_rate} gives
\begin{equation}
    n^{-1/2}\|\hat{\bv}_n-\bv_n^\star\|_2=O_p(r_n).
\end{equation}
Also,
\begin{equation}
    \begin{aligned}
    n^{-1/2}\|\hat{\bv}_n-\bu_n\|_2
    &\le n^{-1/2}\|\hat{\bv}_n-\bv_n^\star\|_2\\
    &\quad+n^{-1/2}\|\bv_n^\star-\bu_n\|_2\\
    &=O_p(r_n+\delta_{x,n}).
    \end{aligned}
\end{equation}
On the event $\kappa_n(\hat{\bv}_n)\ge\kappa_0$, Proposition \ref{prop:beta} bounds the generated-control contribution by a constant multiple of this term. This gives \eqref{eq:explicit_beta_centered_rate}. The last display follows when $\mathcal E_{\epsilon,n}=O_p(n^{-1/2})$.
\end{proof}

\subsection{Proof of Proposition \ref{prop:selector_oracle}}

\begin{proof}
Let $\hat h=\hat h_{\mathrm{obs}}$. On the event \eqref{eq:selector_uniform}, for any $h\in\Hcal_n^\kappa$,
\begin{align}
    R_n(\hat h)+\alpha J_n(\hat h)+\sigma_V^2
    &\le
    \widehat Q_{\mathrm{obs}}(\hat h)+\Delta_n \\
    &\le
    \widehat Q_{\mathrm{obs}}(h)+\Delta_n \\
    &\le
    R_n(h)+\alpha J_n(h)+\sigma_V^2+2\Delta_n .
\end{align}
Taking the infimum over $h\in\Hcal_n^\kappa$ and dropping the nonnegative term $\alpha J_n(\hat h)$ gives \eqref{eq:selector_oracle}.

It remains to justify \eqref{eq:selector_delta_explicit}. Work conditional on the graph-construction split and on $Z$, so the matrices $S_h$ are fixed. Set
\begin{equation}
    A_h=\frac{(I-S_h)^2}{(1-d_h)^2}.
\end{equation}
The degrees-of-freedom condition \eqref{eq:df_lower_bound} gives
\begin{equation}
    \opnorm{A_h}\le c_{\mathrm{df}}^{-2},
    \qquad
    \|A_h\|_{\mathrm F}\le c_{\mathrm{df}}^{-2}\sqrt n .
    \label{eq:Ah_bounds}
\end{equation}
Since $0\preceq S_h\preceq I$, the same type of bounds hold for $S_h^2$ and for $A_h-S_h^2$.

Write $\bx=\mathbf{g}+\boldsymbol{\eta}$. The part of
\[
    \widehat Q_{\mathrm{obs}}(h)
    -
    \{R_n(h)+\alpha J_n(h)+\sigma_V^2\}
\]
that is random after centering is
\begin{equation}
    \frac{1}{n}
    \left\{
    \boldsymbol{\eta}^\top B_h\boldsymbol{\eta}
    -
    \E[\boldsymbol{\eta}^\top B_h\boldsymbol{\eta}\mid Z]
    \right\}
    +
    \frac{2}{n}a_h^\top\boldsymbol{\eta},
    \label{eq:selector_random_decomp}
\end{equation}
where
\begin{equation}
    B_h=A_h-S_h^2,
    \qquad
    a_h=
    \left\{A_h+(I-S_h)S_h\right\}\mathbf{g}.
\end{equation}
The deterministic remainder is included in $b_{\mathrm{sel},n}$. The roughness term $J_n(h)$ cancels because it appears in the empirical criterion and in the comparison target with the same coefficient $\alpha$.

Define the centered quadratic form
\begin{equation}
    Q_h=
    \boldsymbol{\eta}^\top B_h\boldsymbol{\eta}
    -
    \E[\boldsymbol{\eta}^\top B_h\boldsymbol{\eta}\mid Z].
\end{equation}
By the sub-Gaussian quadratic-form inequality, for every $u>0$,
\begin{equation}
    \Pbb\left(
    |Q_h|>
    C\{\|B_h\|_{\mathrm F}\sqrt u+\opnorm{B_h}u\}
    \,\middle|\, Z
    \right)
    \le 2e^{-u}.
\end{equation}
Using \eqref{eq:Ah_bounds} gives a contribution bounded by
\begin{equation}
    C\left\{\sqrt{\frac{u}{n}}+\frac{u}{n}\right\}.
\end{equation}
The linear term satisfies
\begin{equation}
    \|a_h\|_2
    \le
    C\|\mathbf{g}\|_2
    \le
    C\sqrt n,
\end{equation}
so the sub-Gaussian linear-form bound gives
\begin{equation}
    \Pbb\left(
    \frac{2}{n}|a_h^\top\boldsymbol{\eta}|
    >
    C\sqrt{\frac{u}{n}}
    \,\middle|\, Z
    \right)
    \le 2e^{-u}.
\end{equation}
Take $u=\log(2|\Hcal_n|/\delta)$ and apply a union bound over
$\Hcal_n^\kappa$. Adding $b_{\mathrm{sel},n}$ proves
\eqref{eq:selector_delta_explicit}. If the trace is estimated, then
$0\preceq S_h\preceq I$ gives
\begin{equation}
    \frac{1}{n}\|(I-S_h)\bx\|_2^2
    \le
    \frac{1}{n}\|\bx\|_2^2
    \le C_x .
\end{equation}
The map $d\mapsto(1-d)^{-2}$ is Lipschitz on
$d\le1-c_{\mathrm{df}}/2$. Hence the perturbation of the complete GCV term
from replacing $d_h$ by $\widehat d_h$ is bounded by
$C C_x e_{\mathrm{tr},n}$. Finally,
$n^{-1}\|\bx\|_2^2=O_p(1)$ follows from
$n^{-1}\|\mathbf g\|_2^2\le C_g$ and the stated sub-Gaussian noise bound.
\end{proof}

\subsection{Derivation for Remark \ref{rem:orthogonal_inference}}

\begin{proof}
Use a first-order expansion of the cross-fitted sample moment around $(\beta_0,\eta_0)$:
\begin{equation}
    0
    =
    \frac{1}{n}\sum_{i=1}^n\psi(W_i;\beta_0,\eta_0)
    +
    G(\tilde\beta-\beta_0)
    +
    R_n .
\end{equation}
Cross-fitting removes first-order empirical-process terms from estimating $\eta_0$. Neyman orthogonality makes the deterministic first-order nuisance derivative zero. The assumed product-rate condition gives $R_n=o_p(n^{-1/2})$. Hence
\begin{equation}
    \sqrt n(\tilde\beta-\beta_0)
    =
    -G^{-1}\frac{1}{\sqrt n}\sum_{i=1}^n\psi(W_i;\beta_0,\eta_0)
    +o_p(1).
\end{equation}
The central limit theorem gives \eqref{eq:orthogonal_clt}. Consistency of the sandwich estimator follows from cross-fitted plug-in consistency and the law of large numbers.
\end{proof}

\section{Additional Experimental Details}
\label{app:exp_details}

The synthetic first stage begins with a one-dimensional latent instrument,
which is embedded in $d_Z$ observed coordinates through sinusoidal random
projections and Gaussian noise. Thus the intrinsic dimension is $m=1$ even
when the ambient dimension is $d_Z\in\{5,20,50\}$. From this common geometry,
the fractured and multi-fracture designs introduce one or several jumps; the
smooth design removes them; the weak-instrument design scales down the
systematic first-stage component; the correlated-residual design draws the
residual from a Gaussian process on the latent instrument; and the
high-dimensional nuisance design adds irrelevant ambient variation. Together
these designs test whether the graph recovers a low-dimensional structure
embedded in noisier, higher-dimensional observations.

For A-IHF, the affinity graph is a symmetric $K$-nearest-neighbor graph with
RBF weights. The default fixed configuration is $K=15$, $\tau=2$,
$\lambda=30$, $p=80$, and cutoff $10^{-6}$. The observational rule uses
$\alpha=0.05$, $\varepsilon=10^{-8}$, $c_\kappa=0.02$, and Hutchinson trace
estimation with 16 probes. These values are fixed across designs and seeds.
The graph-admissibility-filtered observational variant, abbreviated as
guarded observational A-IHF in tables, filters candidate graphs using
first-stage diagnostics before applying the observational score; in the tuned
suite it requires non-degenerate final degrees, a largest post-cut component
fraction of at least $0.5$, and normalized edge contrast $a_n(h)\le1$.

\subsection{Baseline Protocol and Seed Variation}
\label{sec:baseline_protocol}

Table \ref{tab:baseline_protocol} records the main tuned-baseline protocol. For control-function rows in nonlinear synthetic tables, the second stage is the same additive neural regressor for all generated controls in a given experiment. It uses separate one-hidden-layer ELU components \citep{clevert2016elu} for $X$ and the generated control, each with hidden width 64. The network is trained by full-batch Adam \citep{kingma2015adam} with learning rate $0.01$ and weight decay $10^{-4}$. The main nonlinear benchmark and semi-synthetic experiments use 500 stage-2 epochs; other diagnostics keep the epoch count fixed within each experiment. The epoch count is not selected by method.

\begin{table*}[t]
\centering
\small
\caption{\textbf{Baseline protocol used for the reported 54-cell benchmark.} All tuning uses only $(Z,X)$. Supervised learners use global-CV-tuned out-of-fold residuals rather than nested cross-fitting; graph methods are transductive smoothers with complexity chosen by graph GCV or the A-IHF observational rule. This distinction is retained in the released result metadata.}
\label{tab:baseline_protocol}
\begingroup
\scriptsize
\setlength{\tabcolsep}{3.5pt}
\renewcommand{\arraystretch}{1.08}
\resizebox{\textwidth}{!}{%
\begin{tabular}{p{0.15\textwidth}p{0.26\textwidth}p{0.27\textwidth}p{0.25\textwidth}}
\toprule
\textbf{Method} & \textbf{First-stage object} & \textbf{Tuning rule} & \textbf{Generated-residual protocol} \\
\midrule
Graph ridge CF & Isotropic graph-resolvent estimate of $g$ & Fixed default or graph GCV over $K$ and $\lambda$ & Transductive, full observed graph \\
Graph spectral CF & Truncated graph spectral estimate of $g$ & Graph GCV over spectral rank & Transductive, full observed graph \\
Series/KRR CF & Series or RBF-kernel estimate of $g$ & Five-fold first-stage CV & Five-fold out-of-fold predictions \\
RF/HistGBDT/XGBoost CF & Tree or boosting estimate of $g$ & Five-fold first-stage CV & Five-fold out-of-fold predictions \\
Fixed/CV Deep CF & ELU-network estimate of $g$ & Fixed regularization or held-out prediction loss & Five-fold out-of-fold predictions \\
Deep ensemble CF & Mean of three regularized neural estimates of $g$ & Fixed architecture and seeds & Mean of out-of-fold predictions \\
A-IHF & Pilot-adaptive anisotropic graph-resolvent estimate of $g$ & Fixed, observational, or guarded observational rule & Transductive, full observed graph \\
\bottomrule
\end{tabular}
}
\endgroup
\end{table*}

Table \ref{tab:main_seed_variation} reports seed variation for the main fractured cell in Table \ref{tab:main}. Values are means with standard deviations across ten seeds.

\begin{table}[ht]
\centering
\small
\caption{\textbf{Seed variation in the main fractured design.} Values are mean (standard deviation) over 10 seeds for $n=800$, $d_Z=50$.}
\label{tab:main_seed_variation}
\begingroup
\setlength{\tabcolsep}{4pt}
\renewcommand{\arraystretch}{1.05}
\resizebox{\linewidth}{!}{%
\begin{tabular}{lcc}
\toprule
\textbf{Method} & \textbf{Control corr.} $\uparrow$ & \textbf{Structural-response MSE} $\downarrow$ \\
\midrule
A-IHF, guarded observational & $0.948\ (0.011)$ & $1.732\ (0.844)$ \\
A-IHF, observational & $0.948\ (0.011)$ & $1.732\ (0.844)$ \\
A-IHF, fixed & $0.953\ (0.014)$ & $1.810\ (0.733)$ \\
Graph ridge CF & $0.853\ (0.023)$ & $2.163\ (1.085)$ \\
Graph spectral CF & $0.858\ (0.018)$ & $2.179\ (1.117)$ \\
Random forest CF & $0.827\ (0.025)$ & $2.277\ (1.151)$ \\
XGBoost CF & $0.793\ (0.025)$ & $2.603\ (1.296)$ \\
HistGBDT CF & $0.770\ (0.027)$ & $2.822\ (1.330)$ \\
Kernel ridge CF & $0.750\ (0.026)$ & $2.883\ (1.389)$ \\
Graph ridge CF, fixed & $0.727\ (0.037)$ & $2.948\ (1.266)$ \\
CV-tuned Deep CF & $0.764\ (0.031)$ & $3.153\ (1.625)$ \\
Series CF & $0.751\ (0.019)$ & $3.205\ (1.420)$ \\
Deep ensemble CF & $0.715\ (0.031)$ & $3.589\ (1.668)$ \\
Fixed Deep CF & $0.701\ (0.032)$ & $3.860\ (1.711)$ \\
\bottomrule
\end{tabular}
}
\endgroup
\end{table}

The linear alignment experiment keeps the same first-stage designs but uses
$Y_{\mathrm{lin}}=X+2.5U+\epsilon$ and ordinary least squares on
$(1,X,\hat V)$ in the second stage. It is designed around the coefficient
perturbation bound in Corollary \ref{cor:graph_causal}. The score-interval
diagnostic uses the same outcome and cross-fits only the linear second-stage
nuisance regressions. Because $Y_{\mathrm{lin}}$ and $X$ are residualized on
a generated control computed before those folds, the exercise does not
implement the fully cross-fitted construction of Remark
\ref{rem:orthogonal_inference}.

The split-selection check covers five designs at $n=800$, $d_Z=50$:
fractured, smooth, weak instrument, correlated residual, and
high-dimensional nuisance. It selects parameters on one split and refits the
transductive graph smoother on the full sample. This is a parameter-transfer
stability check, not the estimator of Proposition
\ref{prop:selector_oracle}. On the same five-design grid, the ablation study
compares fixed anisotropic A-IHF with isotropic diffusion, removal of the
pilot or hard cutoff, the full observational rule, removal of the relevance
screen, and removal of the roughness penalty. The meta-parameter study varies
$\alpha$ and $c_\kappa$ while holding the candidate graph family fixed. The
representation study rebuilds the graph from the observed first-stage
features, the latent instrument, principal components, and noisy variants.

For the matched projective audit, every regression vector is centered before
projection.  A nonzero generated-control direction is normalized before
forming the rank-one projector, which makes the numerical implementation
stable under extreme rescaling.  The audit records the untruncated slack
$p_n-F_{\rho_n}(q_n)$ and a separate nonnegative violation field; the largest
violation over 1,620 method-runs is zero at double precision, and no generated
control is degenerate.  The scale intervention produces 2,400 rows:
60 DGP--seed files, four controls including oracle $U$, five positive scales,
and raw versus canonicalized inputs.  It uses the same full-batch Adam
configuration as the main second stage.  Positive rescaling is not treated as
a hyperparameter and no outcome-based scale is selected.

The graph-certificate experiment reports structural leakage, residual attenuation, first-stage noise mismatch, and residualized treatment variation as defined in Proposition \ref{prop:diagnostic_certificate}. These terms use simulator information and are hidden in observational applications. The boundary stress tests use five designs, $n=800$, $d_Z=50$, and ten seeds. The instrument-strength path uses $\{0.2,0.4,0.6,0.8,1.0\}$. The irrelevant-dimension path appends \{$0$, $25$, $50$, $100$, $200$\} independent Gaussian coordinates to the observed first-stage feature representation. The representation-noise path adds isotropic Gaussian noise with scale $\{0,0.25,0.5,1,2\}$ and rescales the representation. All boundary tests use the same fixed and observational A-IHF configurations as the main $n=800$, $d_Z=50$ experiment.

The semi-synthetic real-$Z$ experiment uses covariates from the \texttt{diabetes}, \texttt{breast\_cancer}, and \texttt{digits} data sets distributed with \texttt{scikit-learn} \citep{pedregosa2011scikit}. Covariates are standardized before graph construction. Each run samples $n=400$ observations with replacement when needed. The simulated first stage is built from low-dimensional scores of the real covariate matrix: the fractured design applies a discontinuous transformation, the smooth design removes the discontinuity, and the weak-instrument design scales down the first-stage systematic component. The nonlinear response uses $U$ as the control component. The linear coefficient experiment in Table \ref{tab:realz_overall} uses $Y_{\mathrm{lin}}=\beta_0X+\gamma_0V^\star+\epsilon$. This design preserves access to $U$, $V^\star$, and $f_0$ for evaluation while replacing the synthetic first-stage feature cloud by empirical tabular geometry.

The four real-IV applications use publicly distributed benchmark data and pre-specified variable mappings. Card schooling uses \texttt{lwage} as the outcome, \texttt{educ} as treatment, \texttt{nearc4} as the excluded instrument, and experience, demographic, location, and region controls \citep{card1995using}. For the working-women subsample of the Mroz data, we follow the standard Wooldridge wage--education IV teaching specification: \texttt{lwage} is the outcome, \texttt{educ} is the treatment, parental education variables are the excluded instruments, and experience terms are controls \citep{mroz1987sensitivity,wooldridge2020introductory}. The 1995 Cigarette Demand cross-section uses log packs as outcome, log real price as treatment, real tax measures as instruments, and log real income as a control \citep{stock2007introduction}. The Social Insurance application uses the \texttt{causaldata} teaching extract associated with \citet{huntingtonklein2022effect}: insurance take-up is the outcome, prior friends' purchase behavior (\path{pre_takeup_rate}) is the treatment, the randomized first-round default condition is the instrument, and demographic plus village variables are controls \citep{cai2015social,huntingtonklein2022effect}. Continuous features are standardized using each training fold. Two-stage least squares is reported as a conventional reference. Ridge, graph ridge, raw A-IHF, and guarded A-IHF form out-of-fold generated controls for the linear control-function second stage. The guard uses the candidate-specific $K$ when evaluating \eqref{eq:edge_compatibility}; an inadmissible fold falls back to pre-specified ridge. Percentile intervals use 200 full-procedure bootstrap resamples: Social Insurance resamples natural-village clusters, and the other applications resample observations.

The edge-corruption calibration uses a one-dimensional piecewise-smooth first stage with $n=300$. A symmetric 15-nearest-neighbor graph is progressively rewired by replacing a pre-specified fraction of within-region edges with equally weighted cross-boundary edges. The corruption levels are
\begin{equation*}
\rho\in\{0,.10,.25,.40,.45,.50,.55,.60,.80,1\}.
\end{equation*}
Each of 50 Monte Carlo replications uses five-fold first-stage fitting. The graph construction, candidate selection, compatibility statistic, and ridge fallback use only the training portion of each fold. The experiment stores all 500 run-level records, the ten-level summary, and deterministic validation metadata.

The component-runtime experiment uses the fixed A-IHF configuration at $n=800$, $d_Z=50$ and decomposes first-stage graph time into affinity construction, pilot solve, conductance weighting, final solve, and Hutchinson trace estimation. The approximate-solver experiment uses the same A-IHF graph construction as the fixed configuration and compares exact sparse solves with conjugate-gradient solves for the final resolvent. The reported $\delta_n$ is computed relative to the exact sparse solution for the same graph. Solve times are wall-clock times for the final linear system only. Graph construction, pilot smoothing, trace estimation, and downstream regression are not included in those timings.

\subsection{Orthogonal Linear-Inference Check}
\label{sec:orthogonal_diag}

Remark \ref{rem:orthogonal_inference} gives a root-$n$ expansion under product-rate nuisance conditions, including cross-fitting of the generated-control nuisance. We report a deliberately weaker finite-sample score-interval diagnostic with the same linear outcome as Section \ref{sec:linear_alignment}. For each already-computed generated control, five-fold cross-fitting residualizes $Y_{\mathrm{lin}}$ and $X$ on that scalar control by a linear nuisance regression. Because the transductive generated control is computed before the folds, this diagnostic does not satisfy the premise of Remark \ref{rem:orthogonal_inference} and is not presented as a coverage guarantee. The interval uses the empirical score variance. The four real-IV applications instead use the separate bootstrap protocol described above.

\begin{table}[ht]
\centering
\small
\caption{\textbf{Orthogonal linear-inference check.} Entries report mean absolute error and empirical 95\% coverage over 10 seeds for $n=800$, $d_Z=50$. Oracle $U$ uses hidden simulator information and is included only as a reference.}
\label{tab:orthogonal_diag}
\begingroup
\setlength{\tabcolsep}{3pt}
\renewcommand{\arraystretch}{1.05}
\resizebox{\linewidth}{!}{%
\begin{tabular}{lcccccccc}
\toprule
\textbf{Design} &
\multicolumn{2}{c}{\textbf{Oracle $U$}} &
\multicolumn{2}{c}{\textbf{A-IHF obs.}} &
\multicolumn{2}{c}{\textbf{A-IHF fixed}} &
\multicolumn{2}{c}{\textbf{Linear CF}} \\
\cmidrule(lr){2-3}\cmidrule(lr){4-5}\cmidrule(lr){6-7}\cmidrule(lr){8-9}
& \textbf{Abs. error} & \textbf{Cov.} & \textbf{Abs. error} & \textbf{Cov.} & \textbf{Abs. error} & \textbf{Cov.} & \textbf{Abs. error} & \textbf{Cov.} \\
\midrule
Fractured & 0.003 & 1.00 & 0.030 & 0.60 & 0.053 & 0.30 & 0.047 & 0.60 \\
Smooth & 0.021 & 0.80 & 0.089 & 0.40 & 0.162 & 0.10 & 0.261 & 0.10 \\
Weak instrument & 0.008 & 1.00 & 0.117 & 0.70 & 0.270 & 0.10 & 0.276 & 0.00 \\
Correlated residual & 0.004 & 1.00 & 0.410 & 0.10 & 0.399 & 0.10 & 0.449 & 0.00 \\
High-dimensional nuisance & 0.004 & 1.00 & 0.252 & 0.00 & 0.420 & 0.00 & 0.100 & 0.40 \\
\midrule
Average & 0.008 & 0.96 & 0.180 & 0.36 & 0.261 & 0.12 & 0.227 & 0.22 \\
\bottomrule
\end{tabular}
}
\endgroup
\end{table}

The oracle row has small error and coverage close to the nominal level. Observational A-IHF reduces average absolute error relative to fixed A-IHF and to the linear control function. Coverage remains far below nominal in several designs, including correlated-residual and high-dimensional nuisance designs. This undercoverage is expected because the generated control is not fold-specific and the required generated-control and product-rate conditions are not established. All synthetic designs have latent dimension $m=1$, so this table does not diagnose the high-intrinsic-dimension restriction in Remark \ref{rem:dimension_inference}. It instead documents that cross-fitting only the second-stage regressions is insufficient for valid inference with a transductive generated control.

\subsection{Meta-Parameter Sensitivity}
\label{sec:meta_sensitivity_app}

The observational score contains two fixed scale constants, $\alpha$ and $c_\kappa$. We varied
\begin{equation}
    \begin{aligned}
    \alpha&\in\{0,0.01,0.05,0.10,0.20\},\\
    c_\kappa&\in\{0.005,0.01,0.02,0.05\}.
    \end{aligned}
\end{equation}
Table \ref{tab:meta_sensitivity} reports the fixed A-IHF default, the default observational rule, and the range of mean structural-response MSEs across the $20$ meta-parameter pairs. The candidate graph family is the same as in the main benchmark.

\begin{table}[ht]
\centering
\small
\caption{\textbf{Meta-parameter sensitivity.} Mean structural-response MSE over 10 seeds for $n=800$, $d_Z=50$. The last column gives the range across all tested $(\alpha,c_\kappa)$ pairs.}
\label{tab:meta_sensitivity}
\begingroup
\setlength{\tabcolsep}{4pt}
\renewcommand{\arraystretch}{1.05}
\resizebox{\linewidth}{!}{%
\begin{tabular}{lccc}
\toprule
\textbf{Design} & \textbf{A-IHF fixed} & \textbf{A-IHF obs. default} & \textbf{A-IHF obs. range} \\
\midrule
Fractured & 1.804 & \textbf{1.710} & 1.693--1.891 \\
High-dimensional nuisance & 2.793 & \textbf{2.240} & 2.186--2.743 \\
Smooth & \textbf{0.993} & 1.056 & 1.032--1.509 \\
Correlated residual & \textbf{5.042} & 5.312 & 4.911--5.532 \\
\bottomrule
\end{tabular}
}
\endgroup
\end{table}

The selected graph changes little across this experiment. The rule selects $p=70$ in every run. It selects $K=10$ in all high-dimensional nuisance and correlated-residual runs, and in most fractured and smooth runs. The main behavior is driven more by the candidate graph geometry than by fine adjustment of $\alpha$ or $c_\kappa$.

\subsection{Component Runtime Check}
\label{sec:runtime_diag}

We measure the main A-IHF graph components at $n=800$, $d_Z=50$, using the fixed configuration. The timings exclude downstream outcome regression and are reported only for the first-stage graph pipeline.

\begin{table}[ht]
\centering
\small
\caption{\textbf{Component runtime decomposition.} Mean over 10 seeds for fixed A-IHF at $n=800$, $d_Z=50$. Times are seconds. Percentages are shares of the listed component total.}
\label{tab:runtime_components}
\begingroup
\setlength{\tabcolsep}{4pt}
\renewcommand{\arraystretch}{1.05}
\resizebox{\linewidth}{!}{%
\begin{tabular}{lcccccc}
\toprule
\textbf{Design} & \textbf{Total} & \textbf{Graph} & \textbf{Pilot solve} & \textbf{Conductance} & \textbf{Final solve} & \textbf{Trace} \\
\midrule
Fractured & 0.0789 & 0.0469 (59.4\%) & 0.0040 (5.1\%) & 0.0023 (2.9\%) & 0.0036 (4.5\%) & 0.0222 (28.1\%) \\
High-dimensional nuisance & 0.1405 & 0.0406 (28.9\%) & 0.0110 (7.8\%) & 0.0023 (1.6\%) & 0.0109 (7.8\%) & 0.0758 (53.9\%) \\
Smooth & 0.0746 & 0.0409 (54.8\%) & 0.0038 (5.1\%) & 0.0020 (2.7\%) & 0.0043 (5.8\%) & 0.0235 (31.6\%) \\
Weak instrument & 0.0764 & 0.0448 (58.7\%) & 0.0038 (5.0\%) & 0.0020 (2.7\%) & 0.0036 (4.8\%) & 0.0220 (28.8\%) \\
Correlated residual & 0.0679 & 0.0429 (63.2\%) & 0.0043 (6.3\%) & 0.0020 (2.9\%) & 0.0037 (5.5\%) & 0.0150 (22.1\%) \\
\bottomrule
\end{tabular}
}
\endgroup
\end{table}

Table \ref{tab:runtime_components} decomposes runtime in the tested regime. Graph construction and Hutchinson trace estimation account for most of the fixed A-IHF time. The pilot and final sparse solves are small at this sample size. The large-$n$ solver experiment below examines only the final resolvent.

\subsection{Approximate Resolvent Check}
\label{sec:solver_diag}

The exact A-IHF implementation solves sparse linear systems for the pilot and final graph resolvents. Proposition \ref{prop:approx_solve} shows that an approximate solve affects the generated control through the empirical error
\begin{equation}
    \delta_n=n^{-1/2}\|(\tilde S_h-S_h)\bx\|_2 .
\end{equation}
We evaluate conjugate-gradient solves for the final resolvent on three designs with $d_Z=50$, $n\in\{800,3000,10000\}$, and five seeds. Table \ref{tab:solver} reports the largest sample size and tolerance $10^{-4}$. The times are solve-component times for the final resolvent, not end-to-end pipeline times.

\begin{table}[ht]
\centering
\small
\caption{\textbf{Approximate resolvent solves.} Mean over five seeds at $n=10000$, $d_Z=50$. CG uses relative tolerance $10^{-4}$.}
\label{tab:solver}
\begingroup
\setlength{\tabcolsep}{4pt}
\renewcommand{\arraystretch}{1.05}
\resizebox{\linewidth}{!}{%
\begin{tabular}{lccccc}
\toprule
\textbf{Design} & \textbf{Exact corr.} & \textbf{CG corr.} & $\boldsymbol{\delta_n}$ & \textbf{Exact time} & \textbf{CG time} \\
\midrule
Fractured & 0.96290 & 0.96290 & $4.36\times 10^{-5}$ & 1.115s & 0.0145s \\
Smooth & 0.98487 & 0.98487 & $2.79\times 10^{-5}$ & 1.139s & 0.0150s \\
High-dimensional nuisance & 0.69430 & 0.69430 & $3.32\times 10^{-5}$ & 6.867s & 0.0164s \\
\bottomrule
\end{tabular}
}
\endgroup
\end{table}

The generated residual is unchanged at the reported precision. Jacobi-preconditioned CG gives the same control correlations at the reported precision and smaller solve times in this experiment. Approximate resolvent solves can be used in larger graph instances when $\delta_n$ and $\kappa_n(\hat{\bv})$ are monitored.

\begin{figure}[ht]
\centering
\includegraphics[width=0.88\linewidth]{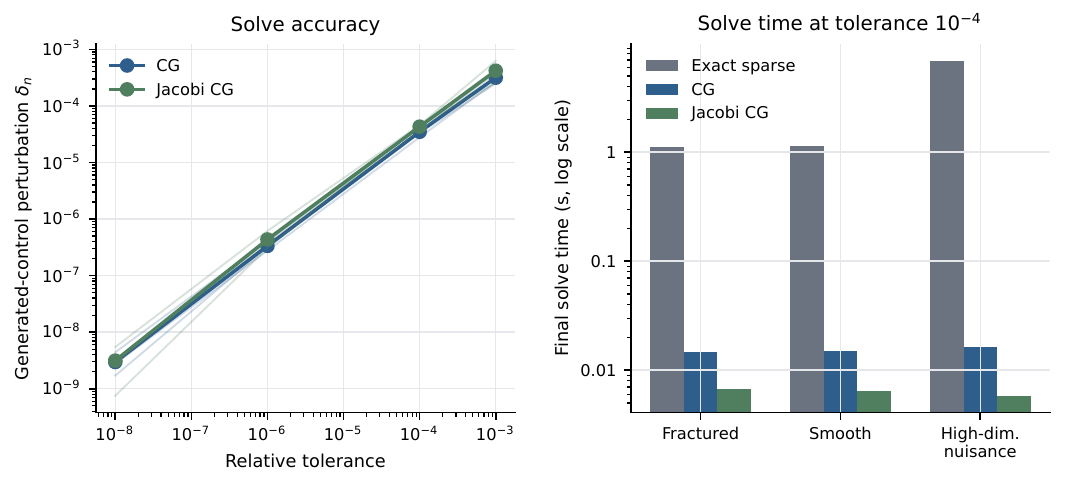}
\caption{\textbf{Approximate final-resolvent solves at $n=10000$.} The left panel reports the generated-control perturbation $\delta_n$ across iterative-solve tolerances. The right panel compares exact sparse, CG, and Jacobi-preconditioned CG solve times for the final resolvent. These are solve-component times, not end-to-end pipeline times.}
\label{fig:solver_diagnostic}
\end{figure}

\subsection{Evaluation Grid Note}
\label{sec:evaluation_grid_note}

The main benchmark evaluates the structural response on the empirical support of $X$ in each run. As $n$ changes, this support can widen. Cross-$n$ MSE comparisons combine estimation effects with support-width effects and are not used as empirical evidence for Proposition $\ref{prop:consistency_reduction}$. The consistency statement is a graph-sequence reduction, not a monotonicity claim about these finite benchmark tables.

\fi

\clearpage

\begin{center}
{\Large\bfseries Supplementary Material\par}
\vspace{0.5em}
{\large Learning Generated Controls under Fractured Geometry:
Projective Residualization and Variation-Allocation Frontiers\par}
\end{center}

\vspace{1em}

The following appendices provide the proofs, additional experimental details,
inference diagnostics, sensitivity analyses, runtime decomposition, and
approximate-solver study.

\appendix
\supplementtrue

\bibliographystyle{elsarticle-num-names}
\bibliography{aihf_references}

\end{document}